\documentclass{article} 
\usepackage{iclr2020_conference,times}


\usepackage{amsmath,amsfonts,bm}









\def\eqref#1{equation~\ref{#1}}









\def\1{\bm{1}}










\DeclareMathAlphabet{\mathsfit}{\encodingdefault}{\sfdefault}{m}{sl}
\SetMathAlphabet{\mathsfit}{bold}{\encodingdefault}{\sfdefault}{bx}{n}













\usepackage{hyperref}
\usepackage{url}
\usepackage{amsthm}
\usepackage{amssymb}
\usepackage{accents}

\usepackage{graphicx}
\usepackage{subfig}
\usepackage{floatrow}
\usepackage[percent]{overpic}

\usepackage{booktabs}
\usepackage{array}

\usepackage{float}
\floatstyle{plaintop}
\restylefloat{table}

\newtheorem{theorem}{Theorem}
\newtheorem{corollary}{Corollary}

\newtheorem{definition}{Definition}

\title{B-Spline CNNs on Lie Groups}


\author{Erik J. Bekkers \\ 
Centre for Analysis and Scientific Computing\\
Department of Applied Mathematics and Computer Science\\
Eindhoven University of Technology, Eindhoven, the Netherlands\\
\texttt{e.j.bekkers@tue.nl}
}

%

\iclrfinalcopy 

\begin{document}

\maketitle

\begin{abstract}
Group convolutional neural networks (G-CNNs) can be used to improve classical CNNs by equipping them with the geometric structure of groups. Central in the success of G-CNNs is the lifting of feature maps to higher dimensional disentangled representations in which data characteristics are effectively learned, geometric data-augmentations are made obsolete, and predictable behavior under geometric transformations (equivariance) is guaranteed via group theory. 
Currently, however, the practical implementations of G-CNNs are limited to either discrete groups (that leave the grid intact) or continuous compact groups such as rotations (that enable the use of Fourier theory). In this paper we lift these limitations and propose a modular framework for the design and implementation of \emph{G-CNNs for arbitrary Lie groups}.
In our approach the differential structure of Lie groups is used to expand convolution kernels in a generic basis of B-splines that is defined on the Lie algebra. 
This leads to a flexible framework that enables \textit{localized}, \textit{atrous}, and \textit{deformable convolutions} in G-CNNs by means of respectively \textit{localized}, \textit{sparse} and \textit{non-uniform B-spline} expansions.
The impact and potential of our approach is studied on two benchmark datasets: cancer detection in histopathology slides in which rotation equivariance plays a key role and facial landmark localization in which scale equivariance is important. In both cases, G-CNN architectures outperform their classical 2D counterparts and the added value of atrous and localized group convolutions is studied in detail.
\end{abstract}

\section{Introduction}
Group convolutional neural networks (G-CNNs) are a class of neural networks that are equipped with the geometry of groups. This enables them to profit from the structure and symmetries in signal data such as images \citep{cohen_group_2016}. A key feature of G-CNNs is that they are equivariant with respect to transformations described by the group, i.e., they guarantee predictable behavior under such transformations and are insensitive to both local and global transformations on the input data. Classical CNNs are a special case of G-CNNs that are equivariant to translations and, in contrast to unconstrained NNs, they make advantage of (and preserve) the basic structure of signal data throughout the network \citep{lecun_handwritten_1990}. By considering larger groups (i.e. considering not just translation equivariance) additional geometric structure can be utilized in order to improve performance and data efficiency (see G-CNN literature in Sec.~\ref{sec:related}).

Part of the success of G-CNNs can be attributed to the lifting of feature maps to higher dimensional objects that are generated by matching kernels under a range of poses (transformations in the group). This leads to a disentanglement with respect to the pose and together with the group structure this enables a flexible way of learning high level representations in terms of low-level activated neurons observed in specific configurations, which we conceptually illustrate in Fig.~\ref{fig:gccnnoverview}. From a neuro-psychological viewpoint, this resembles a hierarchical composition from low- to high-level features akin to the recognition-by-components model by \citet{biederman_recognition-by-components:_1987}, a viewpoint which is also adopted in work on capsule networks \citep{hinton_transforming_2011,sabour_dynamic_2017}. In particular in \citep{lenssen_group_2018} the group theoretical connection is made explicit with equivariant capsules that provide a sparse index/value representation of feature maps on groups \citep{gens_deep_2014}. 

Representing low-level features via features maps on groups, as is done in G-CNNs, is also motivated by the findings of \citet{hubel_receptive_1959} and \citet{bosking_orientation_1997} on the organization of orientation sensitive simple cells in the primary visual cortex V1. These findings are mathematically modeled by sub-Riemannian geometry on Lie groups \citep{petitot_neurogeometry_2003,citti_cortical_2006,duits_association_2014} and led to effective algorithms in image analysis \citep{franken_crossing-preserving_2009,bekkers_pde_2015,favali_analysis_2016,duits_optimal_2018,baspinar_minimal_2018}. In recent work \citet{montobbio_receptive_2019} show that such advanced V1 modeling geometries emerge in specific CNN architectures and in \citet{ecker_rotation-equivariant_2019} the relation between group structure and the organization of V1 is explicitly employed to effectively recover actual V1 neuronal activities from stimuli by means of G-CNNs.

\begin{figure}[tb]
\includegraphics[width=\textwidth]{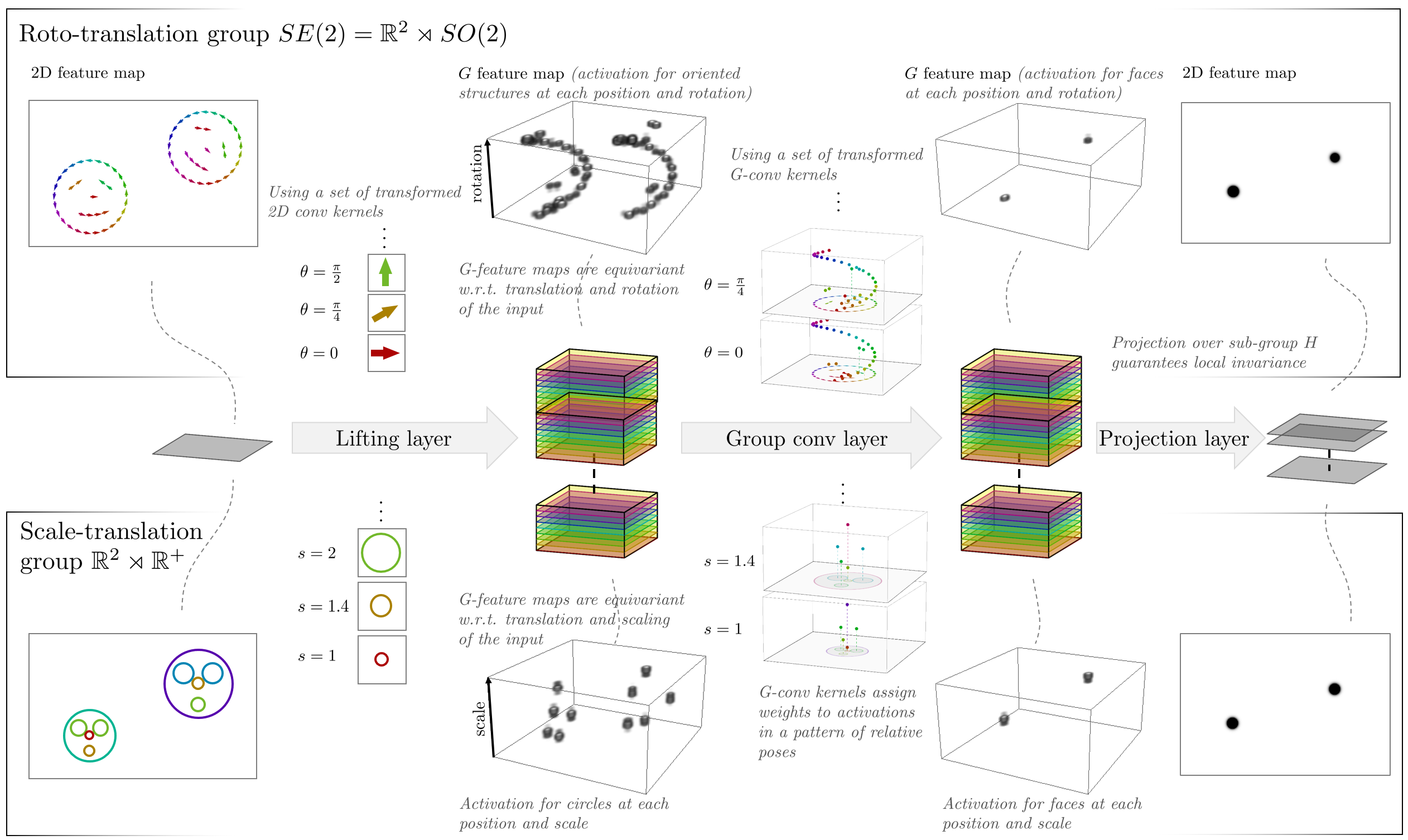}
\caption{\label{fig:gccnnoverview} In G-CNNs feature maps are lifted to the high-dimensional domain of the group $G$ in which features are disentangled with respect to pose/transformation parameters. G-convolution kernels then learn to recognize high-level features in terms of patterns of relative transformations, described by the group structure. This is conceptually illustrated for the detection of faces, which in the $SE(2)$ case are considered as a pattern of lines in relative positions and orientations, or in the $\mathbb{R}^2\rtimes\mathbb{R}^+$ case as blobs/circles in relative positions and scales.
}
\end{figure}

G-CNNs are well motivated from both a mathematical point of view \citep{cohen_general_2018,kondor_generalization_2018} and neuro-psychological/neuro-mathematical point of view and their improvement over classical CNNs is convincingly demonstrated by the growing body of G-CNN literature (see Sec.~\ref{sec:related}). However, their practical implementations are limited to either discrete groups (that leave the grid intact) or continuous, (locally) compact, unimodular groups such as roto-translations (that enable the use of Fourier theory). In this paper we lift these limitations and propose a framework for the design and implementation of \emph{G-CNNs for arbitrary Lie groups}.

The proposed approach for G-CNNs relies on a definition of B-splines on Lie groups which we use to expand and sample group convolution kernels. B-splines are piece-wise polynomials with local support and are classically defined on flat Euclidean spaces $\mathbb{R}^d$. In this paper we generalize B-splines to Lie groups and formulate a definition using the differential structure of Lie groups in which B-splines are essentially defined on the (flat) vector space of the Lie algebra obtained by the logarithmic map, see Fig.~\ref{fig:LogExpMap}. The result is a flexible framework for B-splines on arbitrary Lie groups and it enables the construction of G-CNNs with properties that cannot be achieved via traditional Fourier-type basis expansion methods. Such properties include \textit{localized}, \textit{atrous}, and \textit{deformable convolutions} in G-CNNs by means of respectively \textit{localized}, \textit{sparse} and \textit{non-uniform B-splines}.

Although concepts described in this paper apply to arbitrary Lie groups, we here concentrate on the analysis of input data that lives on $\mathbb{R}^d$ and consider G-CNNs for affine groups $G = \mathbb{R}^d \rtimes H$ that are the semi-direct product of the translation group with a Lie group $H$. As such, only a few core definitions about the Lie group $H$ (group product, inverse, $\operatorname{Log}$, and action on $\mathbb{R}^d$) need to be implemented in order to build full G-CNNs that are locally equivariant to the transformations in $H$. 

The impact and potential of our approach is studied on two datasets in which respectively rotation and scale equivariance plays a key role: cancer detection in histopathology slides (PCam dataset) and facial landmark localization (CelebA dataset). In both cases G-CNNs out-perform their classical 2D counterparts and the added value of atrous and localized G-convolutions is studied in detail.

\begin{figure}
\floatbox[{\capbeside\thisfloatsetup{capbesideposition={right,top},capbesidewidth=0.39\textwidth}}]{figure}[\FBwidth]
{\caption{The $\operatorname{Log}$-map allows us to map elements from curved manifolds such as the 2-sphere to a flat Euclidean tangent space. For Lie groups the $\operatorname{Log}$-map is analytic, globally defined, and it provides us with a flexible tool to define group convolution kernels via B-splines. In our Lie group context the 2-sphere is treated as the quotient $SO(3)/SO(2)$. Technical details are given in Sec.~\ref{sec:theory} and App.~\ref{app:examples}. \label{fig:LogExpMap}}}
{
\includegraphics[width=0.57\textwidth]{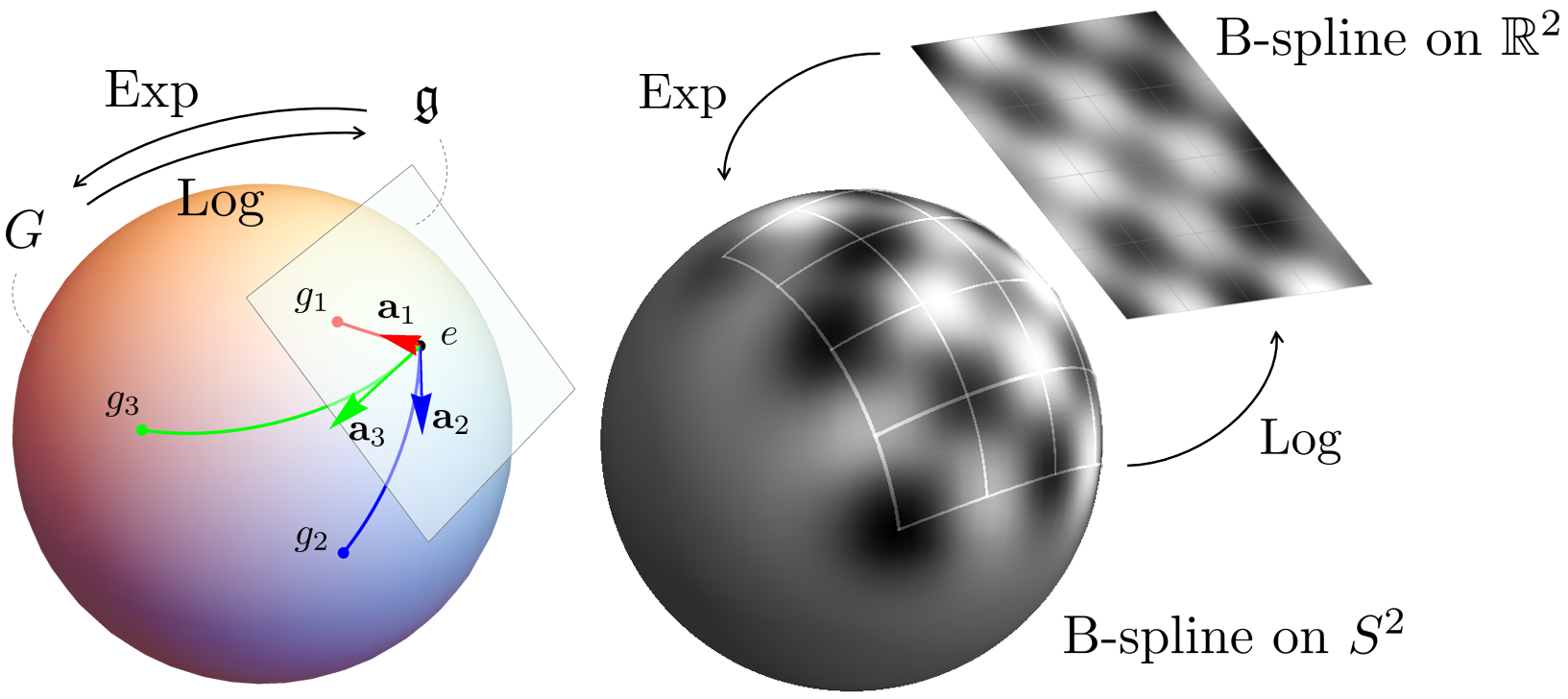}
}
\end{figure}

\section{Related work}
\label{sec:related}

\textbf{G-CNNs}
The introduction of G-CNNs to the machine learning community by \citet{cohen_group_2016} led to a growing body of G-CNN literature that consistently demonstrates an improvement of G-CNNs over classical CNNs. It can be roughly divided into work on discrete G-CNNs \citep{cohen_group_2016,dieleman_exploiting_2016,winkels_3d_2018,worrall_cubenet:_2018,hoogeboom_hexaconv_2018}, regular continuous G-CNNs \citep{oyallon_deep_2015,bekkers_training_2015,bekkers_template_2018,weiler_3d_2018,zhou_oriented_2017,marcos_rotation_2017} and steerable continuous G-CNNs \citep{cohen_spherical_2018,worrall_harmonic_2017,kondor_generalization_2018,thomas_tensor_2018,weiler_3d_2018,esteves_learning_2018,andrearczyk_exploring_2019}. Since 3D rotations can only be sampled in very restrictive ways (without destroying the group structure) the construction of 3D roto-translation G-CNNs is limited. In order to avoid having to sample all together, steerable (G-)CNNs can be used. These are specialized G-CNNs in which the kernels are expanded in circlar/spherical harmonics and computations take place using the basis coefficients only \citep{chirikjian_engineering_2000,franken_enhancement_2008,almsick_van_context_2007,skibbe_spherical_2017}. The latter approach is however only possible for unimodular groups such as roto-translations. 

\textbf{Scale equivariance}
In this paper we experiment with scale-translation G-CNNs, which is the first direct application of G-CNNs to achieve equivariance beyond roto-translations. Scale equivariance is however addressed in several settings \citep{henriques_warped_2017,esteves_polar_2018,marcos_scale_2018,tai_equivariant_2019,worrall_deep_2019,jaderberg_spatial_2015,li_scale-aware_2019}, of which \citep{worrall_deep_2019} is most related. There, scale-space theory and semi-group theory is used to construct scale equivariant layers that elegantly take care of moving band-limits due to rescaling. Although our work differs in several ways (e.g. non-learned lifting layer, discrete group convolutions via atrous kernels, semi-group theory), the first two layers of deep scale-space networks relate to our lifting layer by treating our B-splines as a superposition of dirac deltas transformed under the semi-group action of \citep{worrall_deep_2019}, as we show in App.~\ref{app:deepscalespace}. \citet{li_scale-aware_2019} achieve scale invariance by sharing weights among kernels with different dilation rates. Their approach can be considered a special case of our proposed B-spline G-CNNs with kernels that do not encode scale interactions. Related work by \citet{tai_equivariant_2019} and \citet{henriques_warped_2017} relies on the same Lie group principles as we do in this paper (the $\operatorname{Log}$ map) to construct convenient coordinate systems, such as log-polar coordinates \cite{esteves_polar_2018}, to handle equivariance. Such methods are however generally not translation equivariant and do not deal with local symmetries as they act globally on feature maps, much like spatial transformer networks \citep{jaderberg_spatial_2015}.

\textbf{B-splines and vector fields in deep learning}
The current work can be seen as a generalization of the B-spline based $SE(2)$ CNNs of \citet{bekkers_training_2015,bekkers_template_2018}, see Sec.~\ref{sec:bsplines}. Closely related is also the work of \citet{fey_splinecnn:_2018} in which B-splines are used to generalize CNNs to non-Euclidean data (graphs). There it is proposed to perform convolution via B-spline kernels on $\mathbb{R}^d$ that take as inputs vectors $\mathbf{u}(i,j)\in \mathbb{R}^d$ that relate any two points $i,j\in \mathcal{G}$ in the graph to each other. How $\mathbf{u}(i,j)$ is constructed is left as a design choice, however, in \citep{fey_splinecnn:_2018} this is typically done by embedding the graph in an Euclidean space where points relate via offset vectors. In our work on Lie G-CNNs, two points $g,g' \in G$ in the Lie group $G$ relate via the logarithmic map $\mathbf{u}(g,g')=\operatorname{Log} g^{-1} g'$. Another related approach in which convolutions take place on manifolds in terms of "offset vectors" is the work by \citet{cohen_gauge_2019}. There, points relate via the exponential map with respect to gauge frames rather than the left-invariant vector fields as in this paper, see App.~\ref{app:relatedgauge}.

\section{Lie group CNNs}
\label{sec:theory}
\subsection{Preliminaries and notation}
\label{sec:preliminaries}
The following describes the essential tools required for deriving a generic framework G-CNNs. Although we treat groups mostly in an abstract setting, we here provide examples for the roto-translation group and refer to App.~\ref{app:examples} for more details, explicit formula's, and figures for several groups. Implementations are available at \url{https://github.com/ebekkers/gsplinets}.

\textbf{Group}
A group is defined by a set $G$ together with a binary operator $\cdot$, the group product, that satisfies the following axioms: \textit{Closure}: For all $h,g \in G$ we have $h\cdot g \in G$; \textit{Identiy}: There exists an identity element $e$; \textit{Inverse}: for each $g \in G$ there exists an inverse element $g^{-1} \in G$ such that $g^{-1} \cdot g = g \cdot g^{-1} = e$; and \textit{Associativity}: For each $g,h,i \in G$ we have $ (g \cdot h) \cdot i = g \cdot (h \cdot i)$.

\textbf{Lie group and Lie algebra}
If furthermore the group has the structure of a differential manifold and the group product and inverse are smooth, it is called a \emph{Lie group}. The differentiability of the group induces a notion of infinitesimal generators (see also the exponential map below), which are elements of the Lie algebra $\mathfrak{g}$. The Lie algebra consists of a vector space (of generators), that is typically identified with the tangent space $T_e(G)$ at the identity $e$, together with a bilinear operator called the Lie bracket. In this work the Lie bracket is not of interest and we simply say $\mathfrak{g} = T_e(G)$. 

\textbf{Exponential and logarithmic map}
We expand vectors in $\mathfrak{g}$ in a basis $\{A_i\}_{i=1}^n$ and write $A = \sum_{i}^n a^i A_i \in \mathfrak{g}$, with components $\mathbf{a} = (a^1,\dots,a^n) \in \mathbb{R}^n$. This allows us to identify the Lie algebra with $\mathbb{R}^n$. Lie groups come with a logarithmic map that maps elements from the typically non-flat manifolds $G$ to the flat Euclidean vector space $\mathfrak{g}$ via $A = \operatorname{Log} g$ (conversely $g = \operatorname{Exp} A$), see Fig.~\ref{fig:LogExpMap}.

\textbf{Semi-direct product groups}
In this paper we specifically consider (affine) Lie groups of type $G = \mathbb{R}^d \rtimes H$ that are the semi-direct product of the translation group $\mathbb{R}^d$ with a Lie group $H$ that acts on $\mathbb{R}^d$. Let $h \odot \mathbf{x}$ denote the action of $h \in H$ on $\mathbf{x}\in \mathbb{R}^d$; it describes how a vector in $\mathbb{R}^d$ is transformed by $h$. Then the group product of $\mathbb{R}^d \rtimes H$ is given by
\begin{equation}
\label{eq:semiproduct}
g_1 \cdot g_2 = (\mathbf{x}_1,h_1) \cdot (\mathbf{x}_2,h_2) = (\mathbf{x}_1 + h_1 \odot \mathbf{x}_2, h_1 \cdot h_2),
\end{equation}
with $g_1 = (\mathbf{x}_1,h_1), g_2 = (\mathbf{x}_2,h_2) \in G$, $\mathbf{x}_1,\mathbf{x}_2 \in \mathbb{R}^d$ and $h_1,h_2 \in H$. For example the special Euclidean motion group $SE(2)$ is constructed by choosing $H=SO(2)$, the group of $2 \times 2$ rotation matrices with matrix multiplication as the group product. The group product of $G$ is then given by
$$
(\mathbf{x}_1,\mathbf{R}_{\theta_1}) \cdot (\mathbf{x}_2,\mathbf{R}_{\theta_2}) = (\mathbf{x}_1 + \mathbf{R}_{\theta_1}.\mathbf{x_2}, \mathbf{R}_{\theta_1}.\mathbf{R}_{\theta_2}),
$$
with $\mathbf{x}_1,\mathbf{x}_2 \in \mathbb{R}^2$ and $\mathbf{R}_{\theta_1},\mathbf{R}_{\theta_2} \in SO(2)$ rotation matrices parameterized by a rotation angle $\theta_i$, and in which rotations act on vectors in $\mathbb{R}^2$ simply by matrix vector multiplication.

\textbf{Group representations}
We consider linear transformations $\mathcal{L}^{G \rightarrow \mathbb{L}_2(X)}_g: \mathbb{L}_2(X) \rightarrow \mathbb{L}_2(X)$ that transform functions (or feature maps) $f \in \mathbb{L}_2(X)$ on some space $X$ as representations of a group $G$ if they share the group structure via 
$$
(\mathcal{L}^{G \rightarrow \mathbb{L}_2(X)}_{g_1} \circ \mathcal{L}^{G \rightarrow \mathbb{L}_2(X)}_{g_2} f)(x) = (\mathcal{L}^{G \rightarrow \mathbb{L}_2(X)}_{g_1 \cdot g_2} f)(x),
$$
with $\circ$ denoting function composition. Thus, a concatenation of two such transformations, parameterized by $g_1$ and $g_2$, can be described by a single transformation parameterized by $g_1 \cdot g_2$. For semi-direct product groups $G = \mathbb{R}^d \rtimes H$ such a representation can be split into
\begin{equation}
\label{eq:splitting}
\mathcal{L}^{G \rightarrow \mathbb{L}_2(X)}_g = \mathcal{L}^{\mathbb{R}^d \rightarrow \mathbb{L}_2(X)}_{\mathbf{x}} \circ \mathcal{L}^{H \rightarrow \mathbb{L}_2(X)}_h.
\end{equation}

\textbf{Equivariance}
An operator $\Phi:\mathbb{L}_2(X)\rightarrow\mathbb{L}_2(Y)$ is said to be equivariant to $G$ when it satisfies
\begin{equation}
\label{eq:equivariance}
\forall_{g \in G}:\;\;\;\;\; \mathcal{L}_g^{G \rightarrow \mathbb{L}_2(Y)} \circ \Phi = \Phi \circ \mathcal{L}_g^{G \rightarrow \mathbb{L}_2(X)}.
\end{equation}

\subsection{Group convolutional neural networks}
\label{subsec:gcnns}
Equivariance of artificial neural networks (NNs) with respect to a group $G$ is a desirable property as it guarantees that no information is lost when applying transformations to the input, the information is merely shifted to different locations in the network. It turns out that if we want NNs to be equivariant, then our only option is to use layers whose linear operator is defined by group convolutions. We summarize this in Thm.~\ref{thm:equivariantOperators}. To get there, we start with the traditional definition of NN layers via
$$
\underline{y} = \phi( \mathcal{K}_{\underline{w}} \underline{x} + \underline{b}),
$$
with $\underline{x} \in \mathcal{X}$ the input vector, $\mathcal{K}_{\underline{w}}:\mathcal{X}\rightarrow\mathcal{Y}$ a linear map parameterized by a weight vector $\underline{w}$, with $\underline{b} \in \mathcal{Y}$ a bias term, and $\phi$ a point-wise non-linearity. In classical neural networks $\mathcal{X}=\mathbb{R}^{N_x}$ and $\mathcal{Y}=\mathbb{R}^{N_y}$ are Euclidean vector spaces and the linear map $\mathcal{K}_{\underline{w}}=\mathbb{R}^{N_y \times N_x}$ is a weight matrix. 
In this work we instead focus on structured data and consider feature maps on some domain $X$ as functions $\underline{f}:X\rightarrow \mathbb{R}^N$, the space of which we denote with $(\mathbb{L}_2(X))^{N}$. In this case $\mathcal{X}=(\mathbb{L}_2(X))^{N_x}$ and $\mathcal{Y}=(\mathbb{L}_2(Y))^{N_y}$ are the spaces of multi-channel feature maps, $\underline{b} \in \mathbb{R}^{N_y}$, and $\mathcal{K}_{\underline{w}}:\mathcal{X}\rightarrow \mathcal{Y}$ is a kernel operator. Now when we constrain the linear operator $\mathcal{K}_{\underline{w}}$ to be equivariant under transformations in some group $G$ we arrive at group convolutional neural networks. This is formalized in the following theorem on equivariant maps between homogeneous spaces (see \citep{duits_scale_2007,kondor_generalization_2018,cohen_general_2018} for related statements).

\begin{theorem}
\label{thm:equivariantOperators}
Let operator $\mathcal{K}:\mathbb{L}_2(X)\rightarrow \mathbb{L}_2(Y)$ be linear and bounded, let $X,Y$ be homogeneous spaces on which Lie group $G$ act transitively, and ${\rm d}\mu_X$ a Radon measure on $X$, then
\begin{enumerate}
    \item $\mathcal{K}$ is a kernel operator, i.e., 
    $
    \exists_{\tilde{k}\in \mathbb{L}_1(Y \times X)}:  \; (\mathcal{K} f)(y) = \int_X \tilde{k}(y,x)f(x){\rm d}\mu_X,
    $
    \item under the equivariance constraint of Eq.~(\ref{eq:equivariance}) the map is defined by a one-argument kernel
    \begin{equation}
    \tilde{k}(y,x) = \tfrac{{\rm d}\mu_X(g_y^{-1} \odot x)}{{\rm d}\mu_X(x)} k(g_y^{-1} \odot x)
    \end{equation}
    for any $g_y\in G$ such that $y = g_y \odot y_0$ for some fixed origin $y_0 \in Y$,
    \item if $Y \equiv G/H$ is the quotient of $G$ with $H=\operatorname{Stab}_G(y_0) = \{ g \in G | g \odot y_0 = y_0\}$ then the kernel is constrained via
    \begin{equation}\label{symmetry}
    \forall_{h \in H}, \forall_{x\in X}:  \;\;\;\;\;\;\; k(x) = \tfrac{{\rm d}\mu_X(g_y^{-1} \odot  x)}{{\rm d}\mu_X(x)} k(h^{-1} \odot x),
    \end{equation}
\end{enumerate}
\end{theorem}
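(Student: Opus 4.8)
The plan is to establish the three claims in sequence, since each feeds the next. For claim~(1) I would invoke a Schwartz-kernel / integral-representation theorem: a bounded linear operator $\mathcal{K}:\mathbb{L}_2(X)\to\mathbb{L}_2(Y)$ between these function spaces admits a representation $(\mathcal{K}f)(y)=\int_X \tilde{k}(y,x)f(x)\,{\rm d}\mu_X(x)$, and the boundedness together with the homogeneous-space structure (a $G$-quasi-invariant Radon measure $\mu_X$) is what upgrades the generic distributional kernel to an honest $\tilde{k}\in\mathbb{L}_1(Y\times X)$. This step is purely functional-analytic and I would cite it rather than reprove it.

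For claim~(2) I would substitute this integral representation into the equivariance identity~(\ref{eq:equivariance}). Taking the (quasi-regular) representation $(\mathcal{L}_g f)(x)=f(g^{-1}\odot x)$ on both spaces, the identity $\mathcal{L}_g^{G\to\mathbb{L}_2(Y)}\mathcal{K}f=\mathcal{K}\mathcal{L}_g^{G\to\mathbb{L}_2(X)}f$ expands to $\int_X \tilde{k}(g^{-1}\odot y,x)f(x)\,{\rm d}\mu_X(x)=\int_X \tilde{k}(y,x)f(g^{-1}\odot x)\,{\rm d}\mu_X(x)$. The core move is the change of variables $x\mapsto g\odot x$ in the right-hand integral, which produces the factor $\tfrac{{\rm d}\mu_X(g^{-1}\odot x)}{{\rm d}\mu_X(x)}$ precisely because $\mu_X$ is only quasi-invariant. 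Since the equality holds for all $f$, I read off the master relation $\tilde{k}(g\odot y,x)=\tfrac{{\rm d}\mu_X(g^{-1}\odot x)}{{\rm d}\mu_X(x)}\,\tilde{k}(y,g^{-1}\odot x)$. Transitivity of $G$ on $Y$ lets me write any $y=g_y\odot y_0$, and defining the one-argument kernel $k(x):=\tilde{k}(y_0,x)$ then gives exactly the stated formula by setting $g=g_y$, $y=y_0$ in the master relation.

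For claim~(3) I would simply specialize the same master relation to $y=y_0$ and $g=h\in H$. Because $H=\operatorname{Stab}_G(y_0)$, we have $h\odot y_0=y_0$, so the left-hand side collapses to $\tilde{k}(y_0,x)=k(x)$, while the right-hand side becomes $\tfrac{{\rm d}\mu_X(h^{-1}\odot x)}{{\rm d}\mu_X(x)}\,k(h^{-1}\odot x)$. Equating the two yields $k(x)=\tfrac{{\rm d}\mu_X(h^{-1}\odot x)}{{\rm d}\mu_X(x)}\,k(h^{-1}\odot x)$ for all $h\in H$, which is the displayed constraint~(\ref{symmetry}). Conceptually this is exactly the well-definedness condition guaranteeing that the one-argument kernel descends to the quotient $G/H$ independently of the chosen representative $g_y$; the cocycle (chain-rule) property of the Radon--Nikodym derivative is the tool that makes this representative-independence consistent across all of $G$.

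I expect the genuine obstacle to lie in claim~(1) rather than in the equivariance bookkeeping: securing an $\mathbb{L}_1(Y\times X)$ kernel (as opposed to a mere distribution) is where the boundedness and Radon-measure hypotheses must be used carefully. The equivariance steps are direct once the master relation is in hand, the only thing to watch being consistent tracking of the quasi-invariance factors. I would therefore budget most of the effort on making the integral-kernel representation rigorous, and a small amount on reconciling the exact placement of the Radon--Nikodym factor in the final symmetry condition (the derivation naturally places it on the $h$-action).
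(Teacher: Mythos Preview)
Your proposal is correct and follows essentially the same route as the paper: item~1 is handled by citing an integral-representation theorem (the paper invokes Dunford--Pettis rather than Schwartz, which is the sharper reference for obtaining an $\mathbb{L}_1$ rather than distributional kernel), item~2 is obtained by expanding the equivariance identity, changing variables, and reading off the bi-left-invariance relation on $\tilde k$, and item~3 follows by specializing to $g=h\in\operatorname{Stab}_G(y_0)$. Your observation that the derivation naturally places the Radon--Nikodym factor on the $h$-action in the symmetry constraint is also what the paper's own proof produces.
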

\begin{proof} 
See App.~\ref{app:proofthm1}
\end{proof}

\begin{corollary}
\label{cor:deth}
If $X = \mathbb{R}^d$ is a homogeneous space of an affine Lie group $G = \mathbb{R}^d \rtimes H$ and ${\rm d}\mu_X({x}) = {\rm d}{x}$ is the Lebesgue measure on $\mathbb{R}^d$ then the kernel front-factor simplifies to $\tfrac{{\rm d}\mu_X(g^{-1} \odot  {x})}{{\rm d}\mu_X({x})}= \tfrac{1}{|\operatorname{det} h|}$ with $|\operatorname{det}h|$ denoting the determinant of the matrix representation of $h$, for any $g = (\mathbf{x},h) \in G$. If $X = G$ and ${\rm d}\mu_X(x)$ is a Haar measure on $G$ then $\tfrac{{\rm d}\mu_X(g^{-1} \odot x)}{{\rm d}\mu_X(x)} = 1$.
\end{corollary}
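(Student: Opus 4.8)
The plan is to read the front-factor $\tfrac{{\rm d}\mu_X(g^{-1}\odot x)}{{\rm d}\mu_X(x)}$ as the Radon--Nikodym derivative that records how the measure $\mu_X$ is rescaled under the point map $x \mapsto g^{-1}\odot x$. For a smooth, invertible action this derivative is exactly the absolute value of the Jacobian determinant of that map (taken with respect to $\mu_X$). Both assertions then reduce to writing down the action of $g^{-1}$ explicitly and evaluating this Jacobian in the two cases.

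First I would treat $X = \mathbb{R}^d$. Using the group product (Eq.~\ref{eq:semiproduct}), the inverse of $g = (\mathbf{x},h)$ is $g^{-1} = (-h^{-1}\odot\mathbf{x},\, h^{-1})$, so the induced action on a point $\mathbf{y}\in\mathbb{R}^d$ is the affine map
$$
g^{-1}\odot\mathbf{y} = h^{-1}\odot(\mathbf{y} - \mathbf{x}).
$$
Its differential with respect to $\mathbf{y}$ is the linear part alone, namely the matrix representation of $h^{-1}$, since the constant translation by $-h^{-1}\odot\mathbf{x}$ vanishes under differentiation. For the Lebesgue measure ${\rm d}\mu_X = {\rm d}\mathbf{y}$ the change-of-variables formula then yields ${\rm d}\mu_X(g^{-1}\odot\mathbf{y}) = |\det h^{-1}|\,{\rm d}\mathbf{y} = \tfrac{1}{|\det h|}\,{\rm d}\mathbf{y}$, so the front-factor equals $\tfrac{1}{|\det h|}$, independently of the translation component $\mathbf{x}$.

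For $X = G$ the group acts on itself by left multiplication, so that $g^{-1}\odot x = g^{-1}\cdot x$. By the defining property of a (left) Haar measure, left translation is measure-preserving, i.e. ${\rm d}\mu_X(g^{-1}\cdot x) = {\rm d}\mu_X(x)$ for every $g\in G$. Hence the Radon--Nikodym derivative is identically $1$, as claimed.

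The computations themselves are routine; the point that needs care is conceptual rather than arithmetic, namely making precise that the symbolic quotient $\tfrac{{\rm d}\mu_X(g^{-1}\odot x)}{{\rm d}\mu_X(x)}$ is the absolute Jacobian of a diffeomorphism, and fixing the conventions (the left action paired with the left-invariant Haar measure in the second case) so that both statements are consistent with the front-factor as it appears in Thm.~\ref{thm:equivariantOperators}. It is worth noting that $G = \mathbb{R}^d\rtimes H$ need not be unimodular, yet the second claim relies only on left-invariance of the left Haar measure and therefore holds regardless.
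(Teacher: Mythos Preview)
The paper does not supply a proof for this corollary; it is stated immediately after Theorem~\ref{thm:equivariantOperators} without justification, the authors evidently regarding it as a routine consequence of the change-of-variables formula and the definition of Haar measure. Your argument is correct and is precisely the computation one would write down to fill this gap: you interpret the front-factor as the Jacobian of the diffeomorphism $x\mapsto g^{-1}\odot x$ relative to $\mu_X$, compute it explicitly for the affine action on $\mathbb{R}^d$ (translation contributes nothing, the linear part gives $|\det h^{-1}| = 1/|\det h|$), and invoke left-invariance of the Haar measure when $X=G$. Your closing remark that unimodularity of $G$ is not required for the second claim is a worthwhile clarification, since the paper elsewhere contrasts unimodular groups (roto-translations) with non-unimodular ones (scale-translations).
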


In view of Thm.~\ref{thm:equivariantOperators} we see that standard CNNs are a special case of G-CNNs that are equivariant to translations. In this case the domain of the feature maps $X=\mathbb{R}^d$ coincides with the space of translation vectors in the translation group $G=(\mathbb{R}^d,+)$. It is well known that if we want the networks to be translation \emph{and} rotation equivariant ($G=\mathbb{R}^d \rtimes SO(d)$), but stick to planar feature maps, then the kernels should be rotation invariant, which of course limits representation power. This constraint is due Thm.~\ref{thm:equivariantOperators} item 3, since the domain of such features maps can be seen as the quotient $\mathbb{R}^d \equiv G / \{\mathbf{0}\} \times SO(d)$ in which rotations ($SO(d)$) are factored out of the roto-translation group $G=\mathbb{R}^d \rtimes SO(d)$. Thus, in order to maximize representation power (without constrains on $k$) the feature maps should be lifted to the higher dimensional domain of the group itself (i.e. $Y=G$). We therefore propose to build G-CNNs with the following 3 types of layers (illustrated in Fig.~\ref{fig:gccnnoverview}):
\begin{itemize}
    \item \textbf{Lifting layer ($X=\mathbb{R}^d, Y = G$):} In this layer $\mathcal{K}$ is defined by lifting correlations
    $$
    (\mathcal{K} f)(g) = (k \tilde{\star} f)(g) := \tfrac{1}{|\operatorname{det}h|}\left( \mathcal{L}^{G \rightarrow \mathbb{L}_2(\mathbb{R}^d)}_g k, f \right)_{\mathbb{L}_2(\mathbb{R}^d,{\rm d}\mathbf{x})},
    $$
    with $g=(\mathbf{x},h)$, which by splitting of the representation (Eq.~(\ref{eq:splitting})) can be written as 
    \begin{equation}
    \label{eq:liftingcor}
    \boxed{
    (\mathcal{K} f)(g) = (k \tilde{\star} f)(g) = (k_h \star_{\mathbb{R}^d} f)(\mathbf{x}),
    }
    \end{equation}
    with $k_h(\mathbf{x}) = \frac{1}{|\det h|} \left(\mathcal{L}^{H \rightarrow \mathbb{L}_2(\mathbb{R}^d)}_h k\right)(\mathbf{x})$ the transformed kernel. Lifting correlations thus match a kernel with the input feature map under all possible transformations in $G$.
    \item \textbf{Group correlation layer ($X=G, Y = G$):} In this case $\mathcal{K}$ is defined by group correlations
    $$
    (\mathcal{K} F)(g) = (K \star F)(g) := \left( \mathcal{L}^{G \rightarrow \mathbb{L}_2(G)}_g K, F \right)_{\mathbb{L}_2(G,{\rm d}\mu)} = \int_G K(g^{-1} \tilde {g})F(\tilde{g}){\rm d}\mu(\tilde{g}),
    $$
    with ${\rm d}\mu(g)$ a Haar measure on $G$. We can again split this cross-correlation into a transformation of $K$ followed by a spatial cross-correlation via
    \begin{equation}
    \label{eq:gcor}
    \boxed{
    (\mathcal{K} F)(g) = (K \star F)(g) = (K_h \star_{\mathbb{R}^d} F)(\mathbf{x}),
    }
    \end{equation}
    with $K_h(\tilde{\mathbf{x}},\tilde{h}) = K( \,h^{-1} \odot \tilde{\mathbf{x}} \, , \, h^{-1} \cdot \tilde{h} \, )$ the convolution kernel transformed by $h \in H$ and in which we overload $\star_{\mathbb{R}^d}$ to indicate cross-correlation on the $\mathbb{R}^d$ part of $G = \mathbb{R}^d \rtimes H$.
    \item \textbf{Projection layer ($X=G, Y = \mathbb{R}^d$):} In this case $\mathcal{K}$ is a linear projection defined by
    \begin{equation}
    \label{eq:projection}
    \boxed{
    (\mathcal{K} F)(\mathbf{x}) = \int_{H} F(\mathbf{x},\tilde{h}) {\rm d}\mu(\tilde{h}),
    }
    \end{equation}
    where we simply integrate over $H$ instead of using a kernel that would otherwise be constant over $H$ and spatially isotropic with respect to $H$. 
\end{itemize}

App.~\ref{app:examples} provides explicit formula's for these layers for several groups. E.g., in the $SE(2)$ we get:
\begin{align}
\text{$SE(2)$-{lifting}:}\hspace{8.5mm} & \hspace{2mm}(k \tilde{\star} f)(\mathbf{x},\theta) = \int_{\mathbb{R}^2} k(\mathbf{R}_\theta^{-1}.(\mathbf{x}' - \mathbf{x})) f(\mathbf{x}') {\rm d}\mathbf{x}', \nonumber\\
\text{$SE(2)$-{correlation}:}\hspace{2mm} & (K {\star} F)(\mathbf{x},\theta) = \int_{\mathbb{R}^2}\int_{S^1} K(\mathbf{R}_\theta^{-1}.(\mathbf{x}' - \mathbf{x}), \theta' - \theta \operatorname{mod} 2 \pi) \nonumber F(\mathbf{x}',\theta') {\rm d}\mathbf{x}'{\rm d}\theta'. \nonumber
\end{align}

\subsection{B-Splines on Lie groups}
\label{sec:bsplines}
Central in our formulation of G-CNNs is the transformation of convolution kernels under the action of $H$ as described above in Eqs.~(\ref{eq:liftingcor}) and (\ref{eq:gcor}) in the continuous setting. For the implementation of G-CNNs the kernels and their transformations need to be sampled on a discrete grid. We expand on the idea's in \citep{bekkers_training_2015,bekkers_template_2018,weiler_learning_2018} to express the kernels in an analytic form which we can then sample under arbitrary transformations in $G$ to perform the actual computations. In particular we generalize the approach of \citet{bekkers_training_2015,bekkers_template_2018} to expand group correlation kernels in a basis of shifted cardinal B-splines, which are localized polynomial functions on $\mathbb{R}^d$ with finite support. 
In \citep{bekkers_training_2015,bekkers_template_2018}, B-splines on $\mathbb{R}^d$ could be used to construct kernels on $SE(2)$ by identifying the group with the space of positions and orientations and simply using periodic splines on the 1D orientation axis $S^1$. However, in order to construct B-splines on arbitrary Lie groups, we need a generalization. In the following we propose a new definition of B-splines on Lie groups $H$ which enables us to construct the kernels on $\mathbb{R}^d \rtimes H$ that are required in the G-correlations (Eq.~(\ref{eq:gcor})).

\begin{definition}[Cardinal B-spline on $\mathbb{R}^n$]\label{def:cardinalRd}
The 1D cardinal B-Spline of degree $n$ be is defined as
\begin{equation}
B^n(x) := \left(1_{\left[-\frac{1}{2},\frac{1}{2}\right]}*^{(n)}1_{\left[-\frac{1}{2},\frac{1}{2}\right]}\right)(x),
\end{equation}
where $*^{(n)}$ denotes $n$-fold convolution of the indicator function $1_{\left[-\frac{1}{2},\frac{1}{2}\right]}$.  The multi-variate cardinal B-spline on $\mathbb{R}^d$, with coordinates $\mathbf{x} = (x_0,\dots,x_d)^T \in \mathbb{R}^d$, is defined via the tensor product 
\begin{equation}
B^{\mathbb{R}^d,n}(\mathbf{x}) := \underbrace{(B^n \otimes \dots \otimes B^n)(\mathbf{x})}_{d \;\; \text{times}} = B^n(x_0) B^n(x_1) \dots B^n(x_d).
\end{equation}
\end{definition}
Cardinal B-splines are piece-wise polynomials and are localized on support $[-\frac{n+1}{2},\frac{n+1}{2}]$. Functions can be expanded in a basis of shifted cardinal B-splines, which we simply refer to as B-splines.

\begin{definition}[B-splines on $\mathbb{R}^n$]\label{def:splineRd}
A B-spline is a function $f: \mathbb{R}^d \rightarrow \mathbb{R}$ expanded in a basis that consists of shifted and scaled copies of the cardinal B-spline
\begin{equation}
\label{eq:BSplineOnRd}
    f(\mathbf{x}):= \sum_{i=1}^{N} c_i B^{\mathbb{R}^d,n}\left(\frac{\mathbf{x} - \mathbf{x}_i}{s_\mathbf{x}}\right),
\end{equation}
and is fully characterized by spline degree $n$, scale $s_\mathbf{x}$, set of centers $\{\mathbf{x}_i\}_{i=1}^N$ with $\mathbf{x}_i \in \mathbb{R}^d$ and corresponding coefficients $\mathbf{c} = (c_1,c_2,\dots,c_N)^T \in \mathbb{R}^N$. The B-spline is called uniform if the set of centers $\{\mathbf{x}_i\}_{i=1}^N$ forms a uniform grid on $\mathbb{R}^d$, in which the distance $\lVert \mathbf{x}_j - \mathbf{x}_i \rVert$ between neighbouring centers $\mathbf{x}_i,\mathbf{x}_j \in \mathbb{R}^d$ is constant along each axis and equal to $s_\mathbf{x}$.
\end{definition}

\begin{definition}[B-splines on Lie group $H$]\label{def:splineH}
A B-spline on $H$ is a function $f: H \rightarrow \mathbb{R}$ expanded in a basis that consists of shifted (by left multiplication) and scaled copies of the cardinal B-spline
\begin{equation}
\label{eq:BSplineOnH}
    f(h):= \sum_{i=1}^{N} c_i B^{\mathbb{R}^d,n}\left(\frac{\operatorname{Log}h_i^{-1} h}{s_h}\right),
\end{equation}
with $h \in H$ and $\operatorname{Log}: H \rightarrow \mathfrak{h}$ the logarithmic map on $H$.
The B-spline is fully characterized by the spline degree $n$, scale $s_h$, set of centers $\{h_i\}_{i=1}^N$ with $h_i \in H$ and corresponding coefficients $\mathbf{c} = (c_1,c_2,\dots,c_N)^T \in \mathbb{R}^N$. The spline is called uniform if the distance $\lVert \operatorname{Log} h_i^{-1} h_j \rVert$ between neighbouring centers $h_i,h_j \in H$ is constant. 
\end{definition}

\newlength{\subfigwidth}
\setlength{\subfigwidth}{0.18\textwidth}
\begin{figure}[p]
    \centering
    \begin{tabular}{ccccc|c}
    \multicolumn{5}{c}{$\overbrace{\hspace{0.73\textwidth}}^{\text{\normalsize {Partition of unity}}}$} & \multicolumn{1}{c}{$\overbrace{\hspace{0.18\textwidth}}^{\text{\normalsize {B-Spline}}}$}\\
    \multicolumn{5}{l|}{$S^1 \equiv SO(2)$, N=8}&\\
    \raisebox{-.5\height}{\includegraphics[width=\subfigwidth]{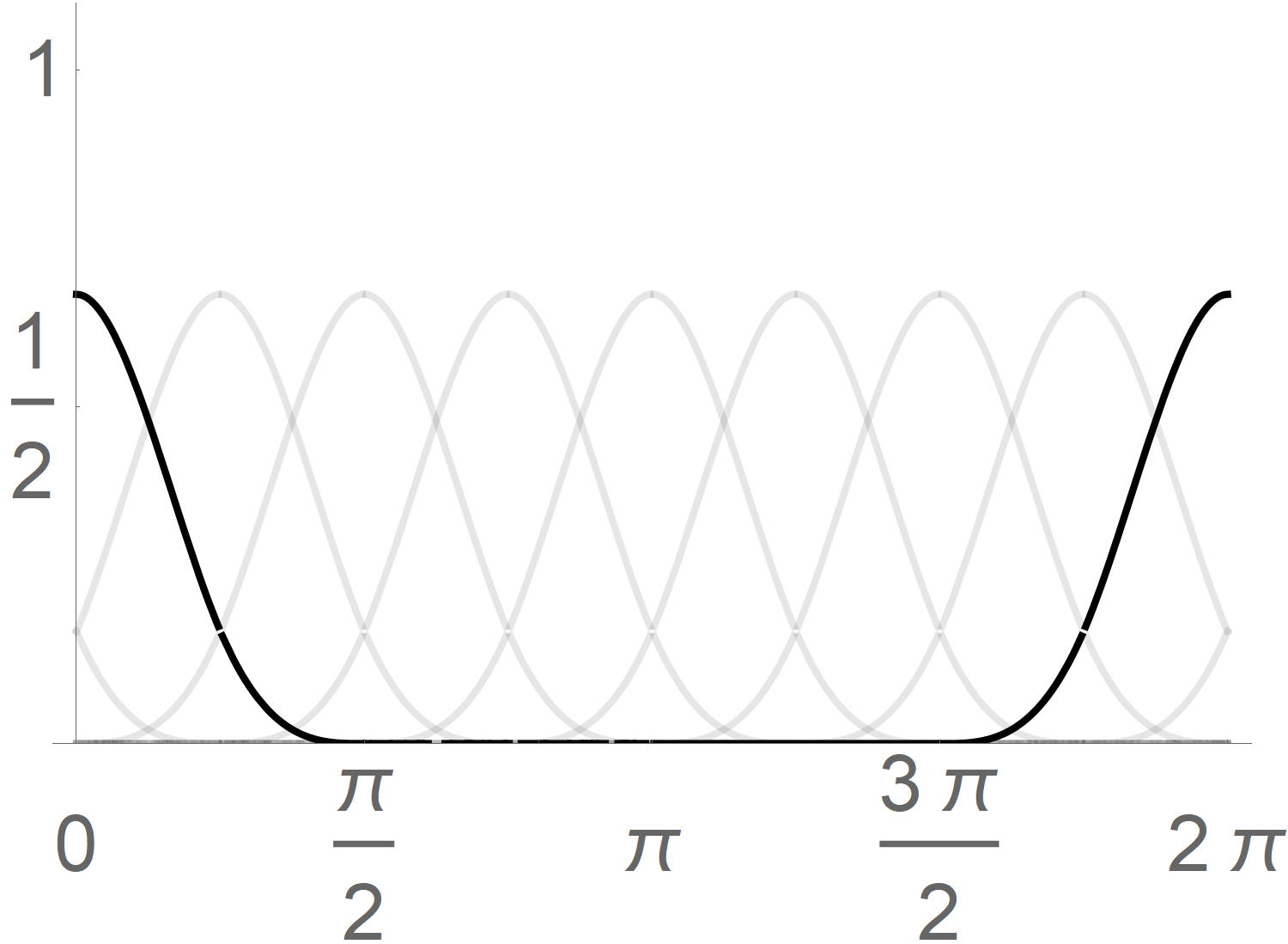}} & + &
    \raisebox{-.5\height}{\includegraphics[width=\subfigwidth]{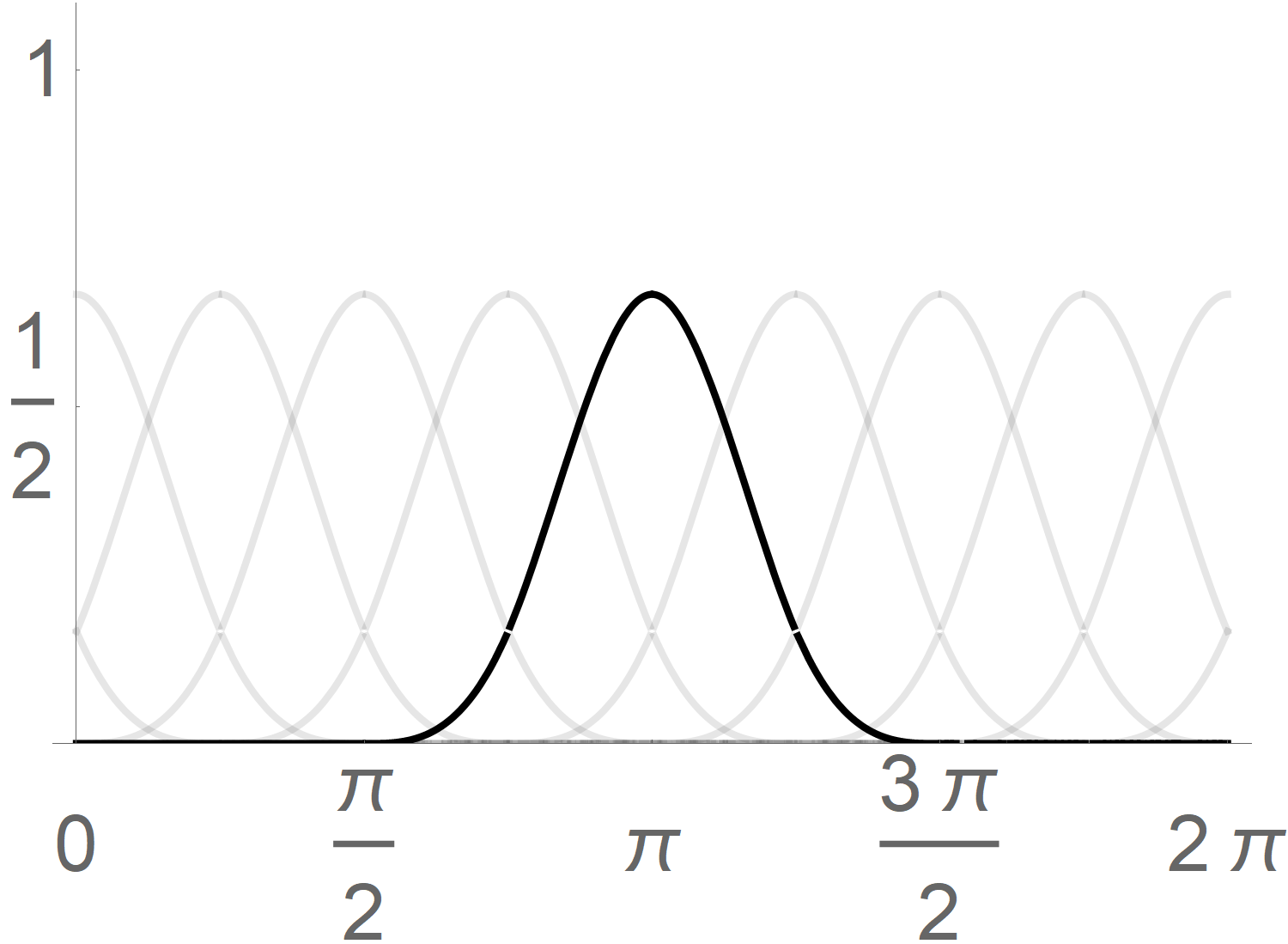}} & + $\dots$ = & 
    \raisebox{-.5\height}{\includegraphics[width=\subfigwidth]{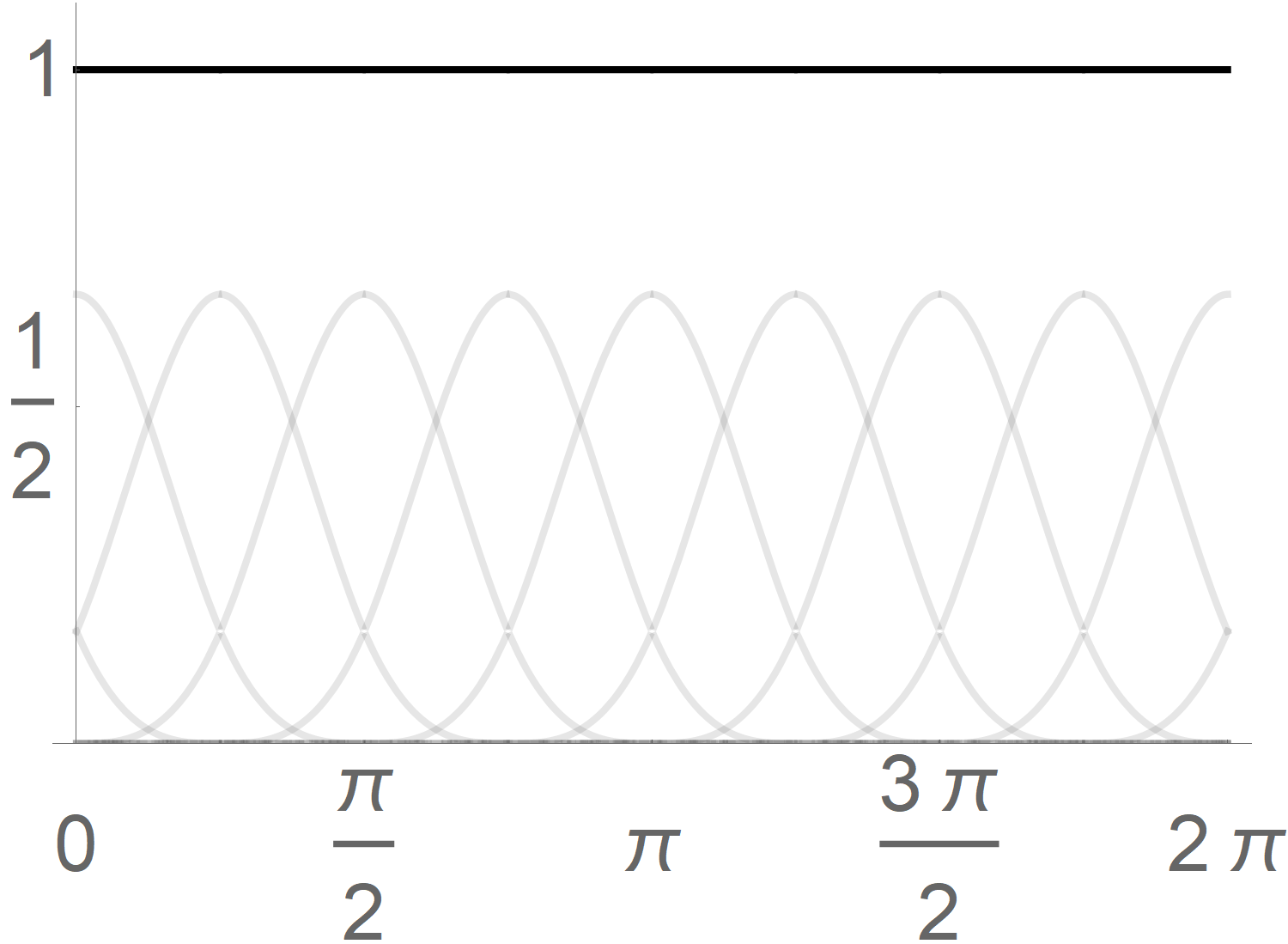}} & 
    \raisebox{-.5\height}{\includegraphics[width=\subfigwidth]{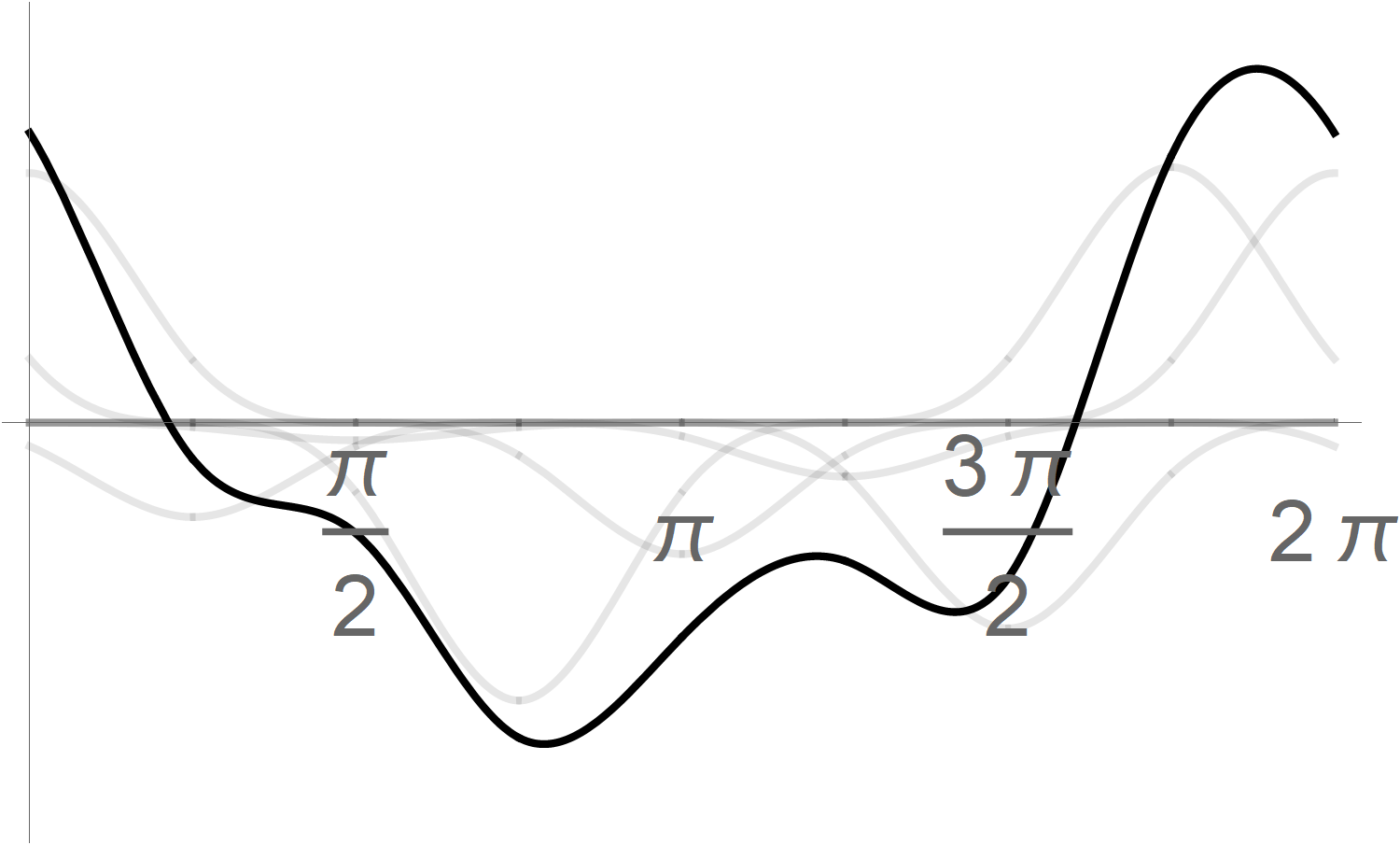}} \\
    &&&&&\\
    \multicolumn{5}{l|}{$(\mathbb{R^+},\cdot)$, N=6}\\
    \raisebox{-.5\height}{\includegraphics[width=\subfigwidth]{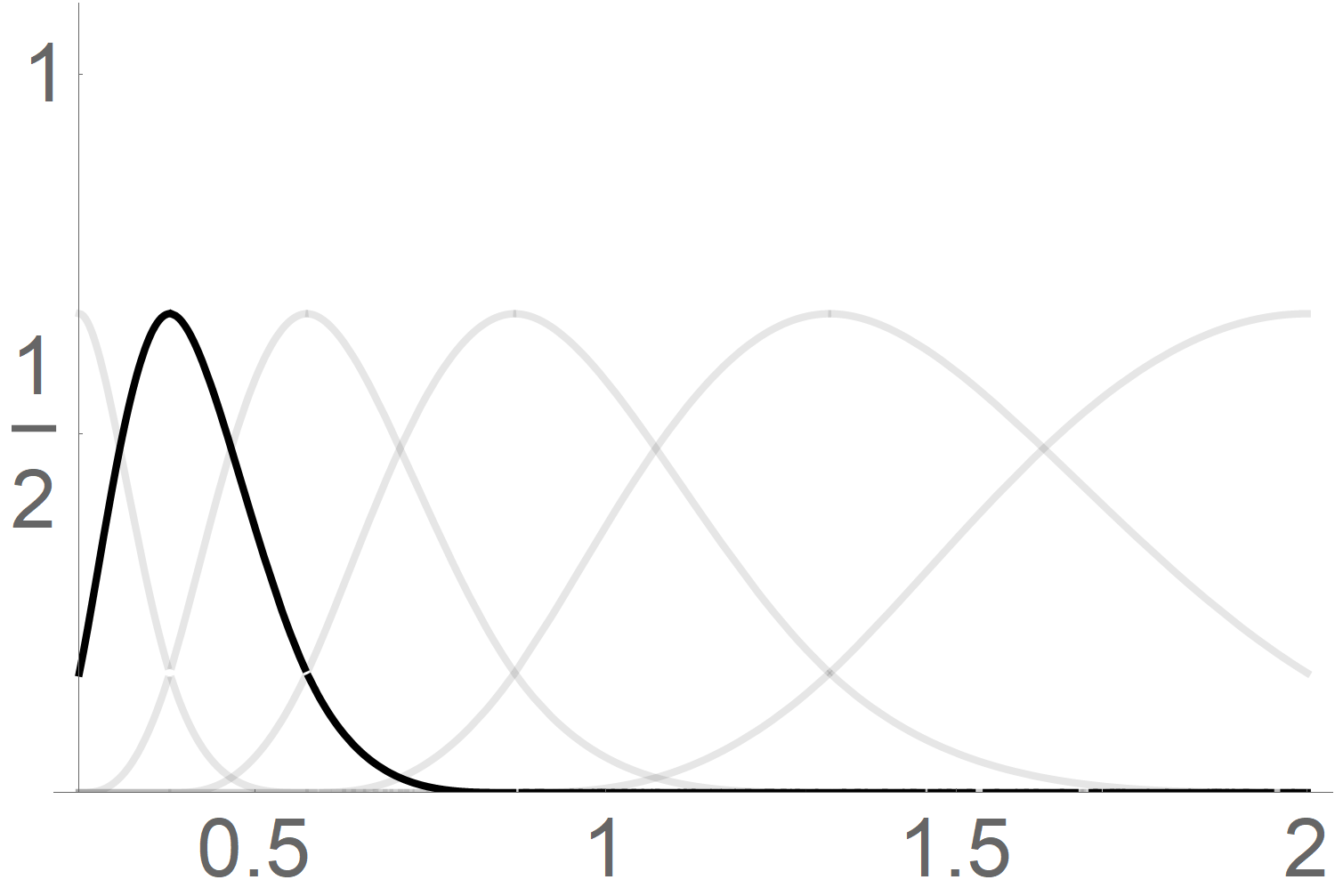}} & + &
    \raisebox{-.5\height}{\includegraphics[width=\subfigwidth]{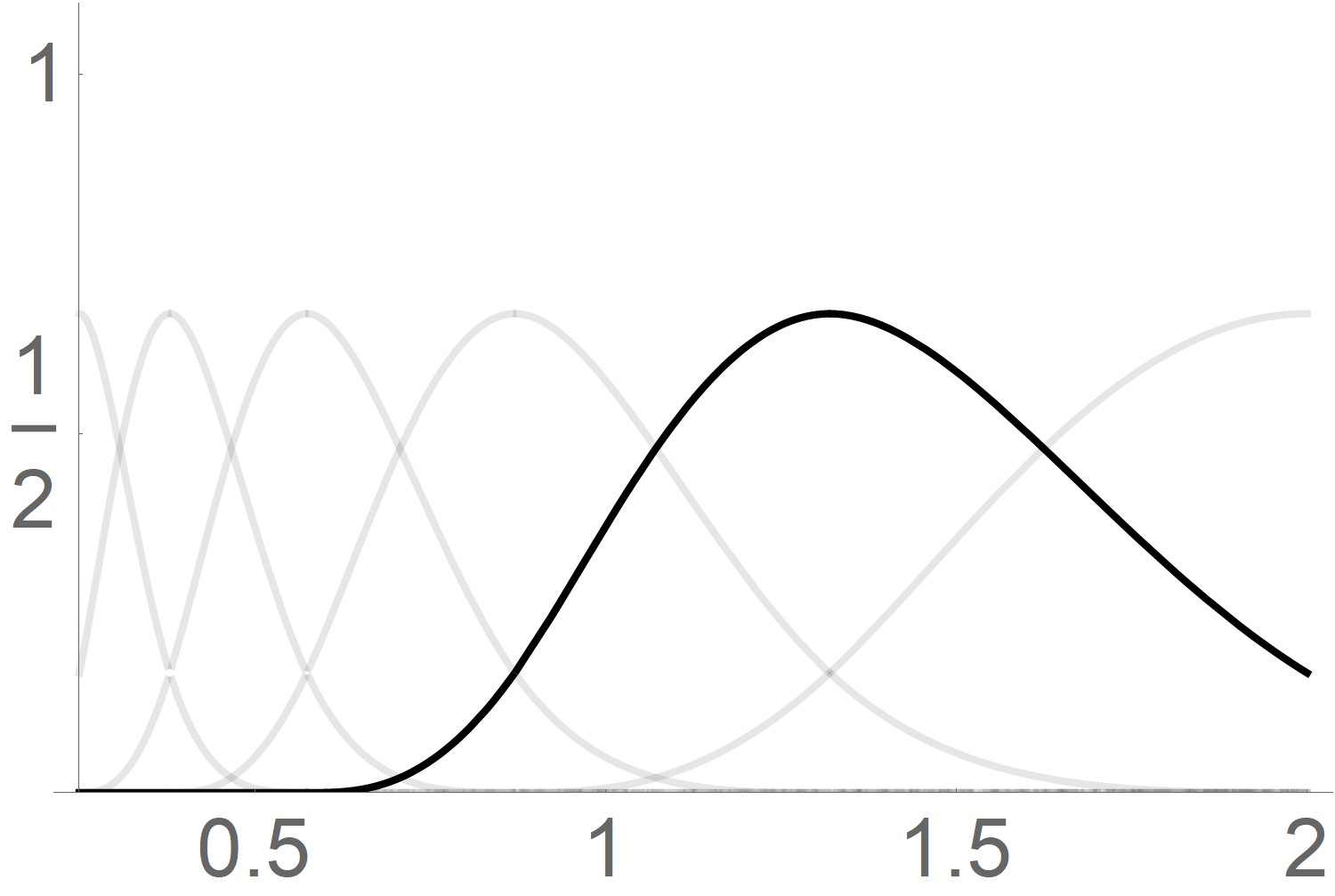}} & + $\dots$ = & 
    \raisebox{-.5\height}{\includegraphics[width=\subfigwidth]{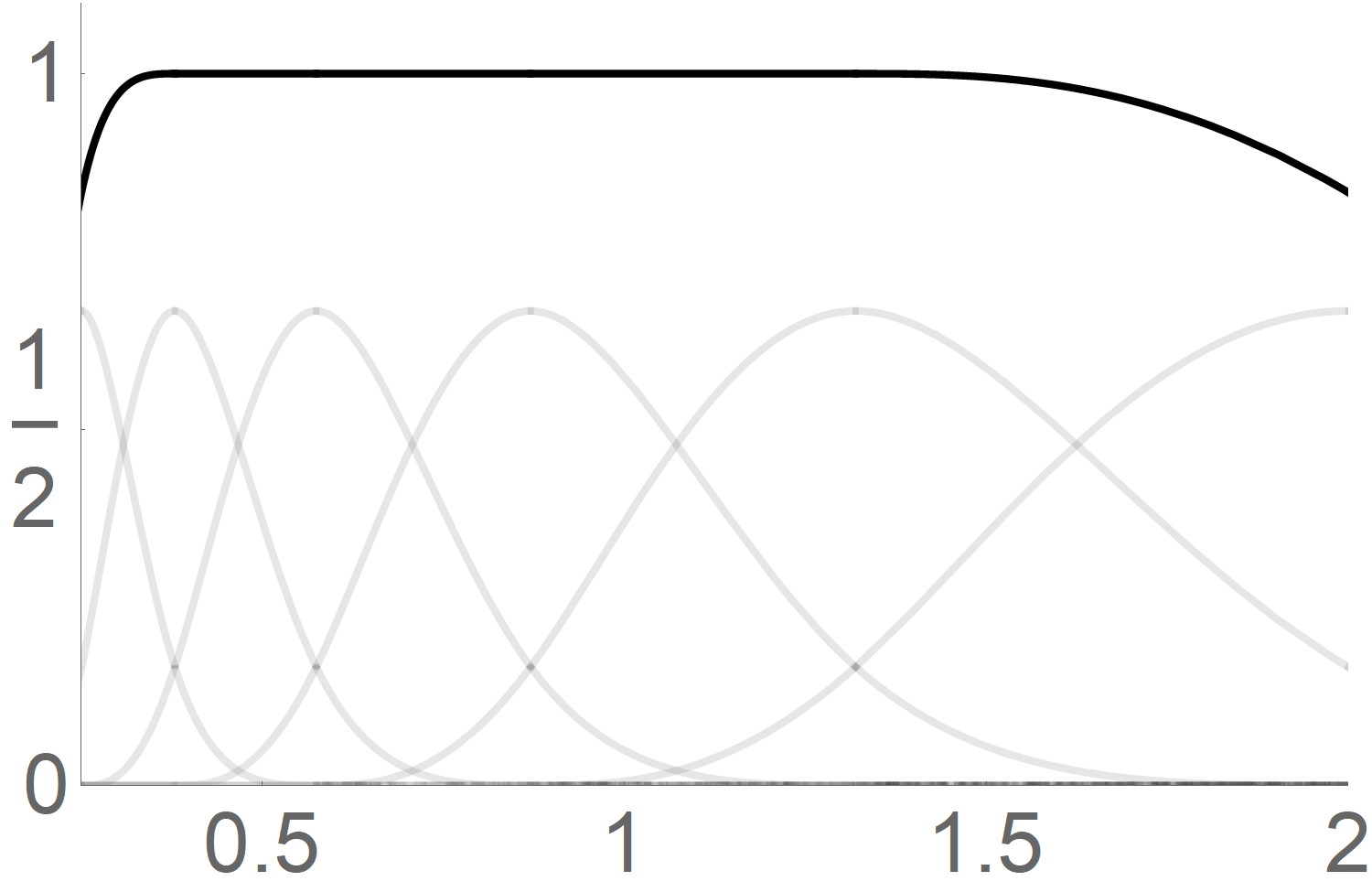}} & 
    \raisebox{-.5\height}{\includegraphics[width=\subfigwidth]{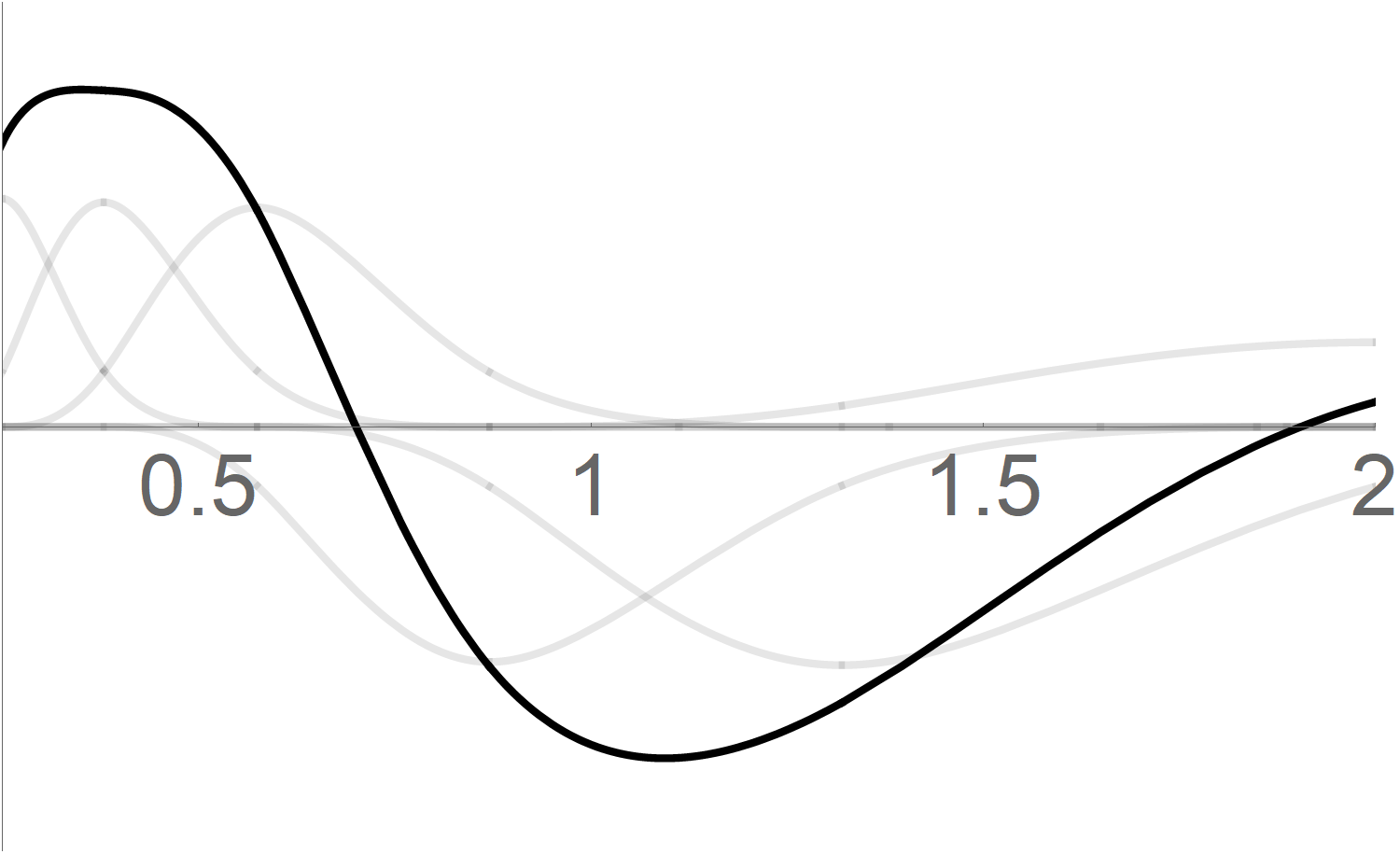}} \\
    &&&&&\\
    \multicolumn{5}{l|}{$S^2 \equiv SO(3)/SO(2)$, N=50}\\
    \raisebox{-.5\height}{\includegraphics[width=\subfigwidth]{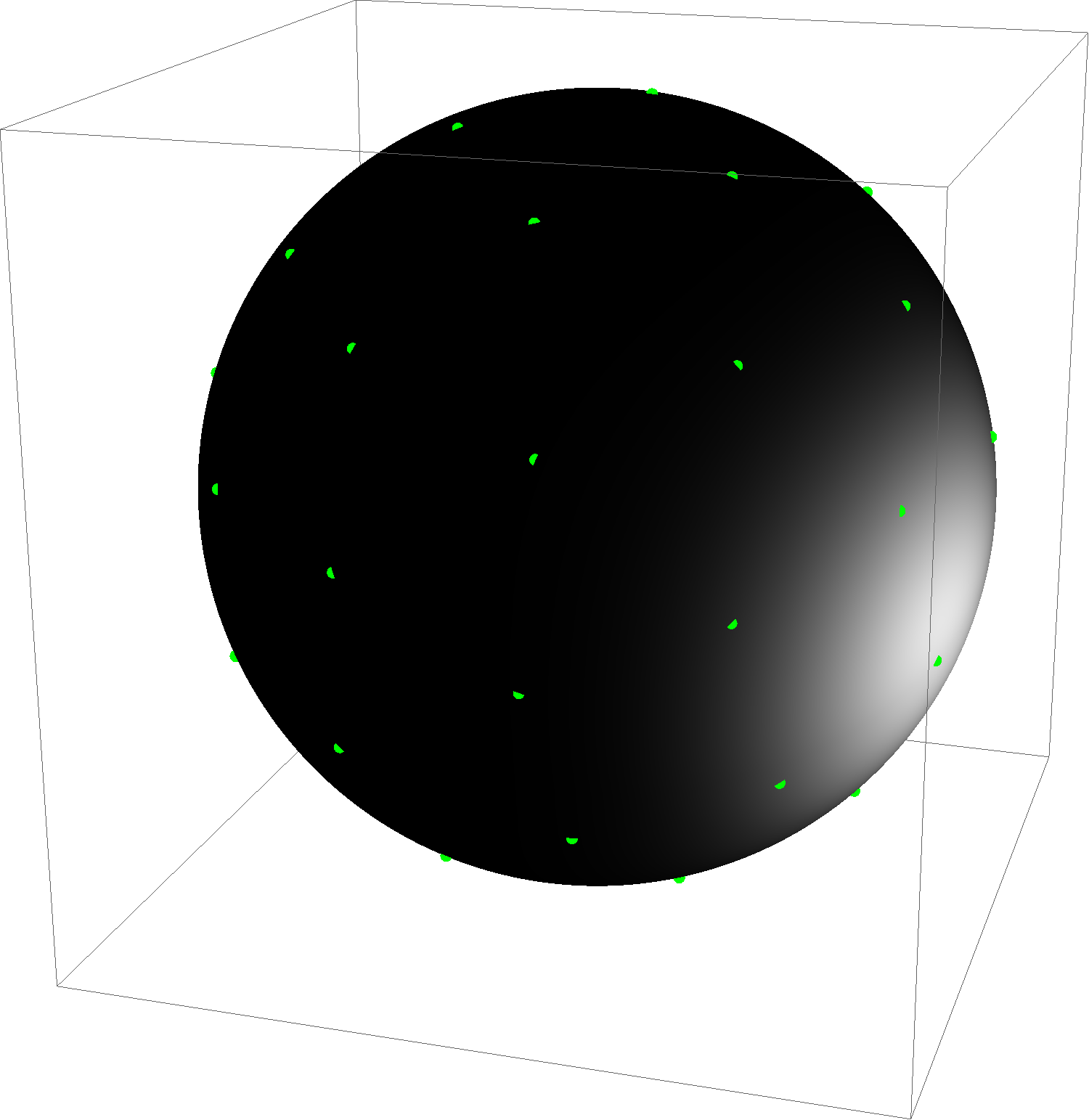}} & + &
    \raisebox{-.5\height}{\includegraphics[width=\subfigwidth]{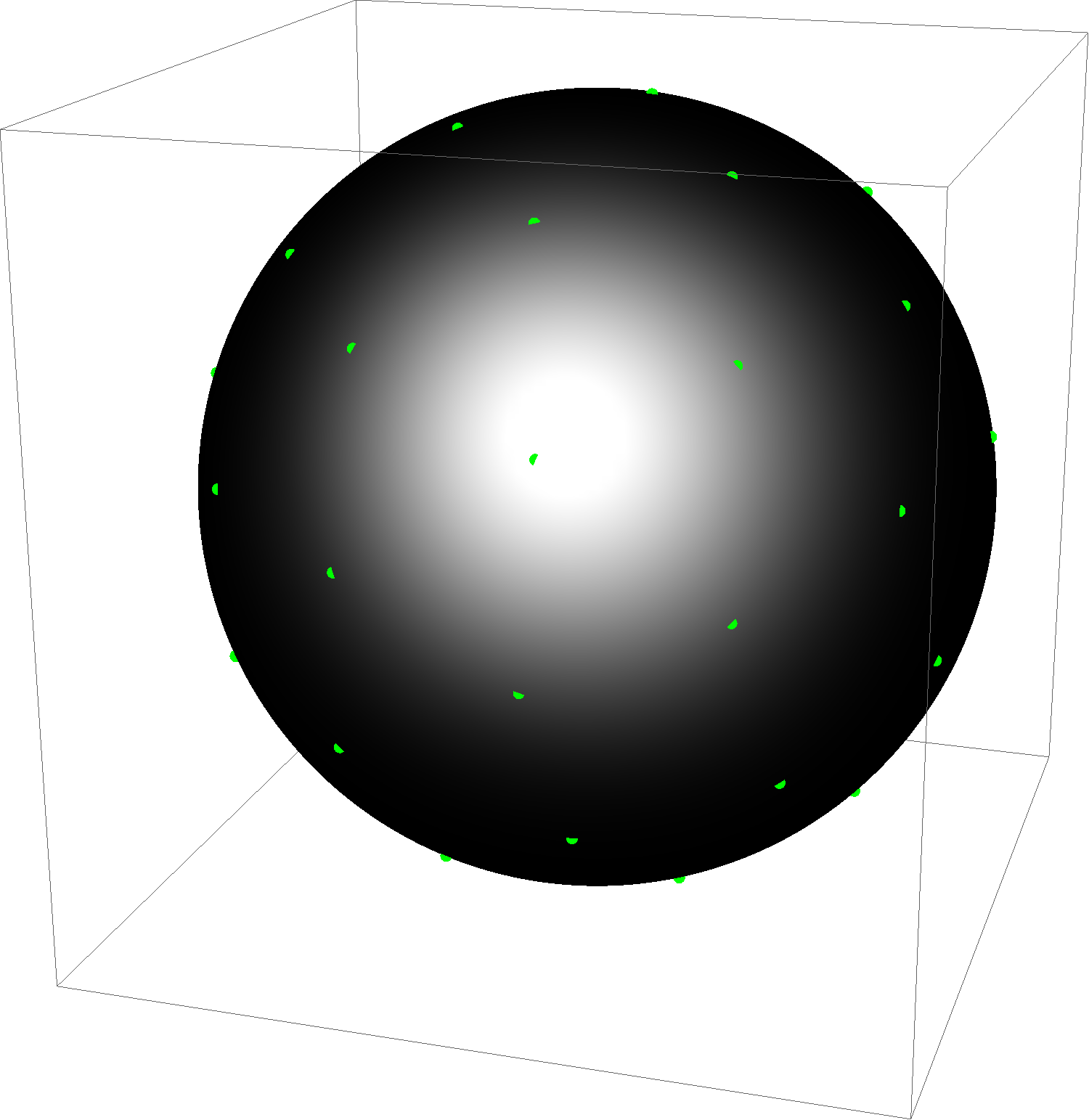}} & + $\dots$ = & 
    \raisebox{-.5\height}{\includegraphics[width=\subfigwidth]{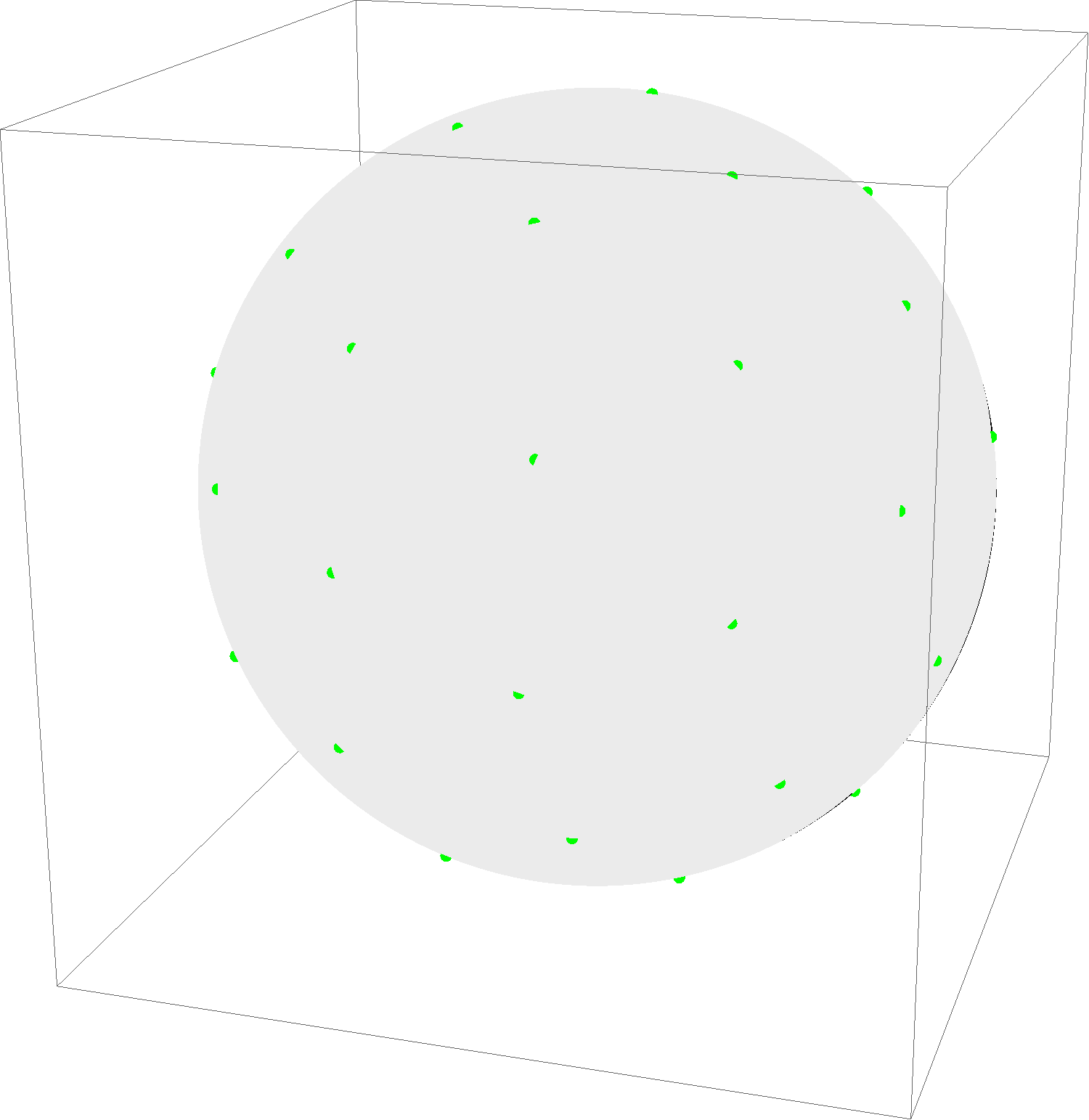}} & 
    \raisebox{-.5\height}{\includegraphics[width=\subfigwidth]{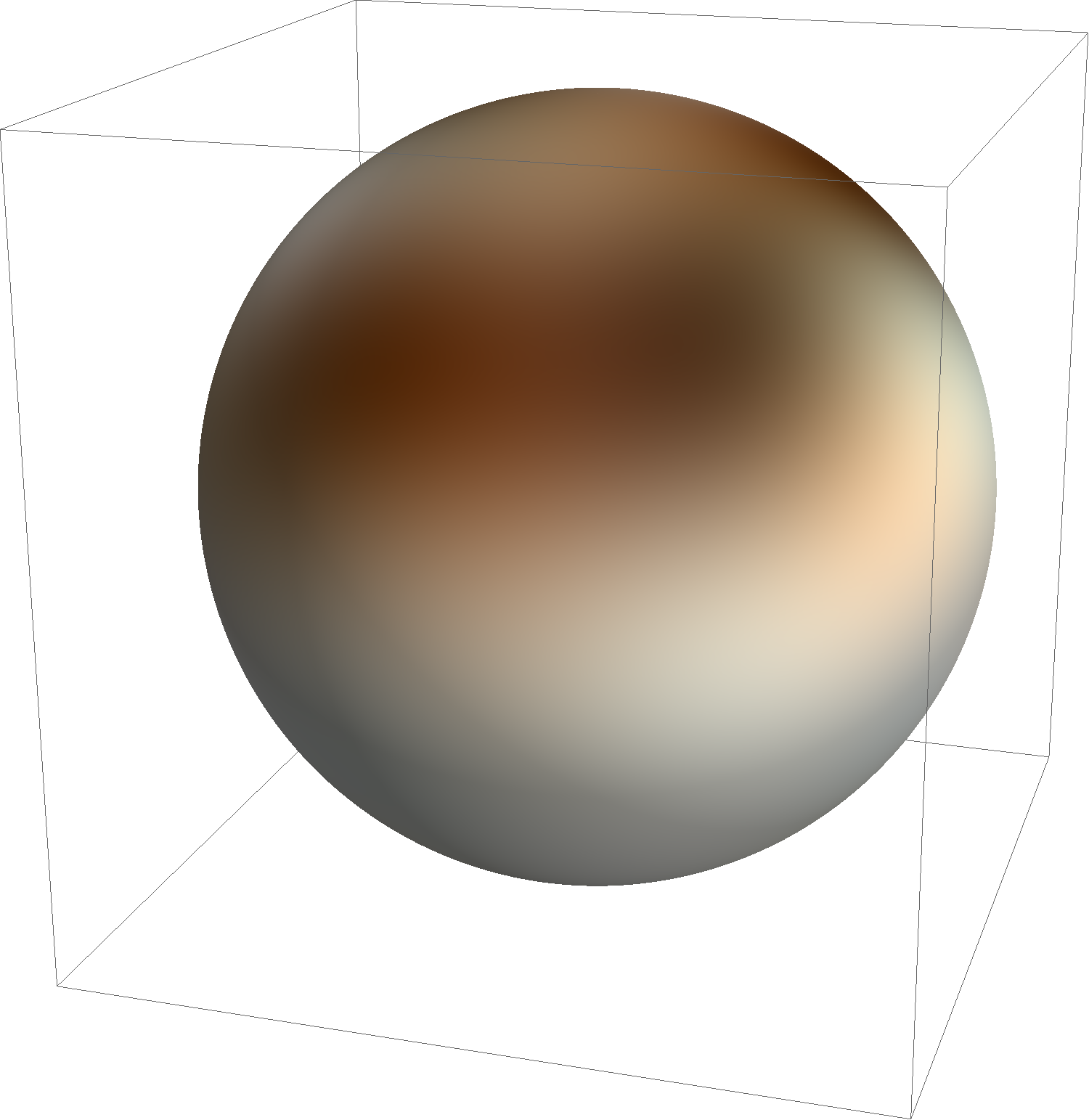}} \\
    \multicolumn{5}{l|}{\hspace{8.9em} N=500}\\
    \raisebox{-.5\height}{\includegraphics[width=\subfigwidth]{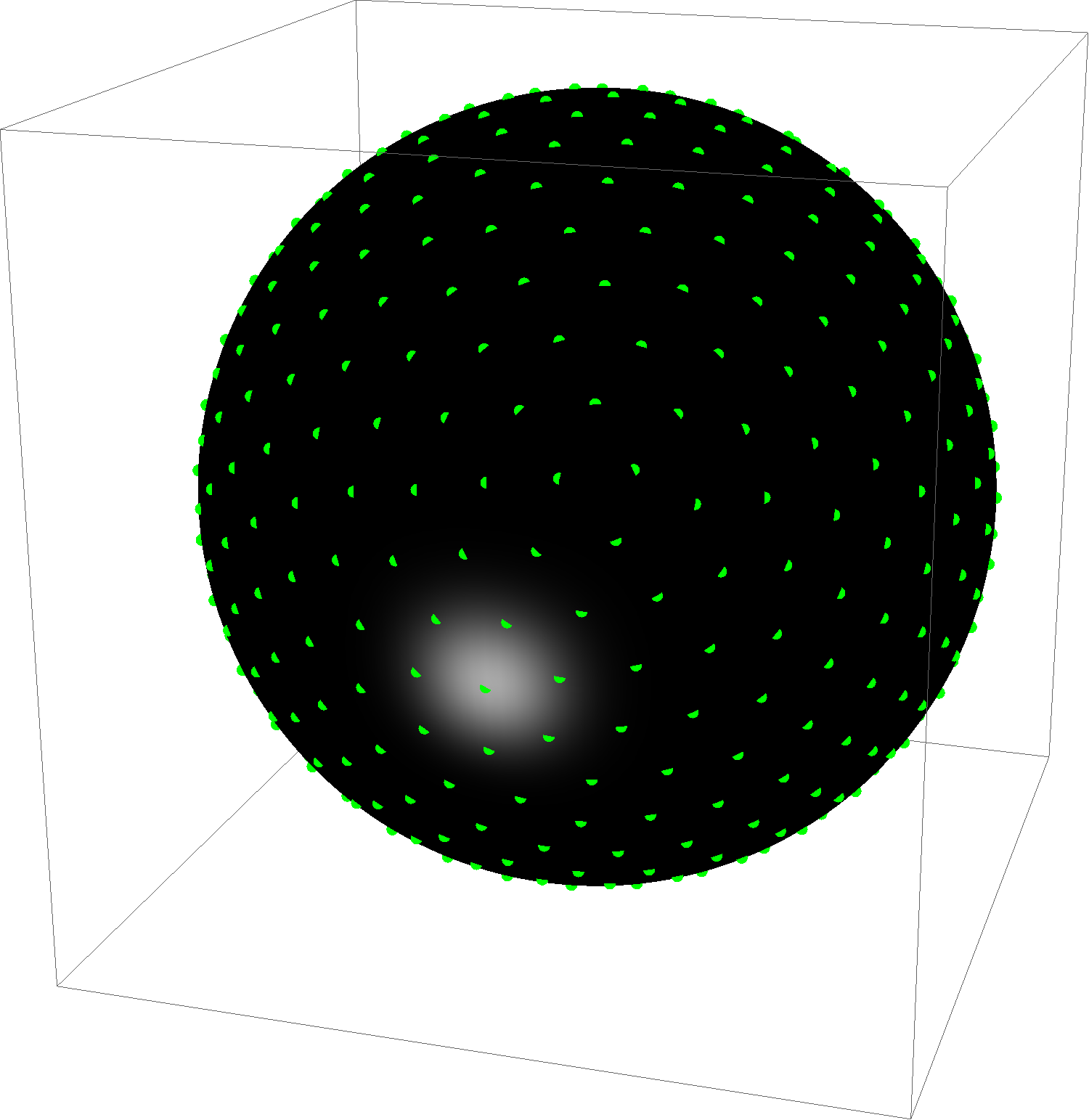}} & + &
    \raisebox{-.5\height}{\includegraphics[width=\subfigwidth]{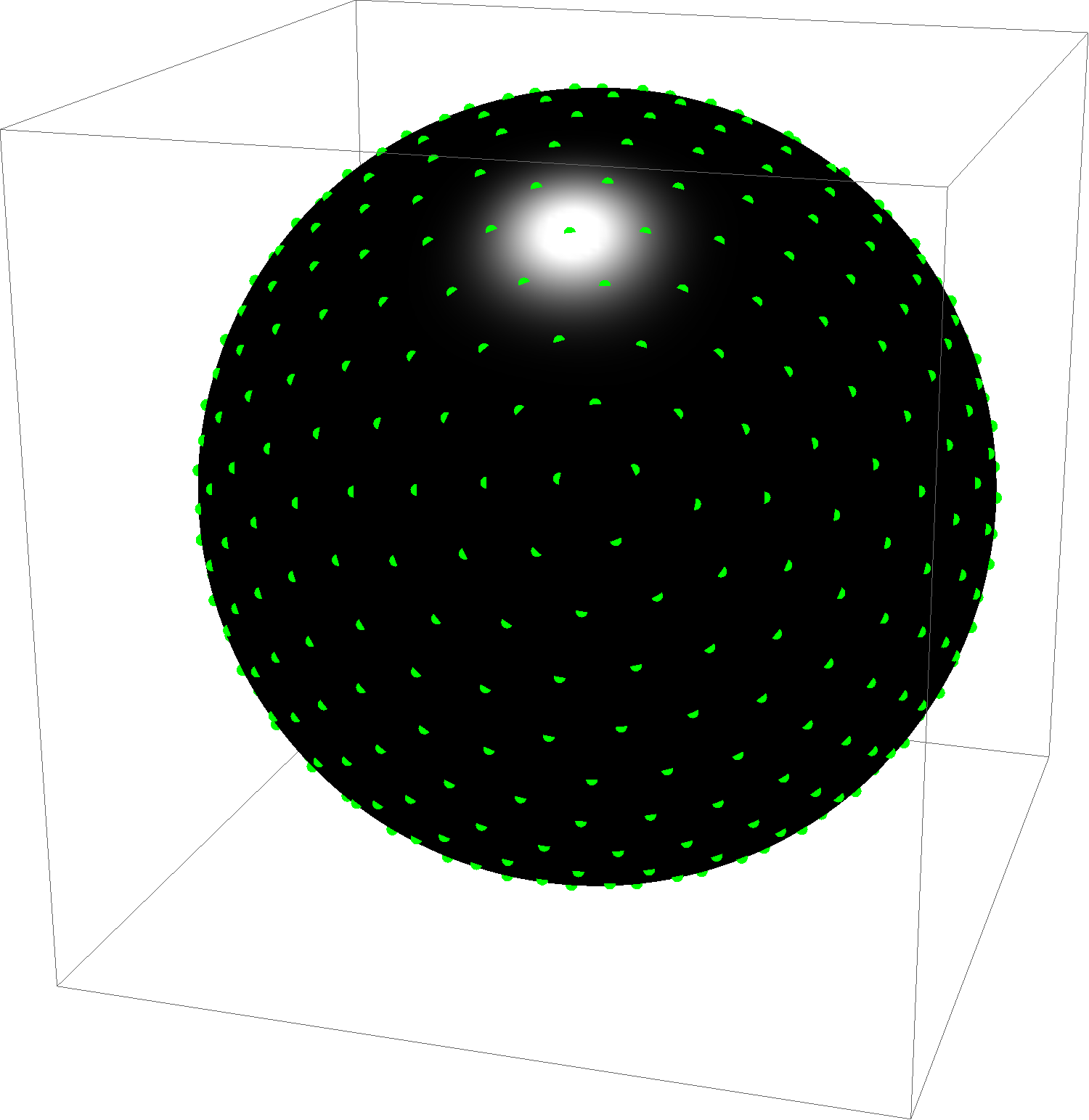}} & + $\dots$ = &
    \raisebox{-.5\height}{\includegraphics[width=\subfigwidth]{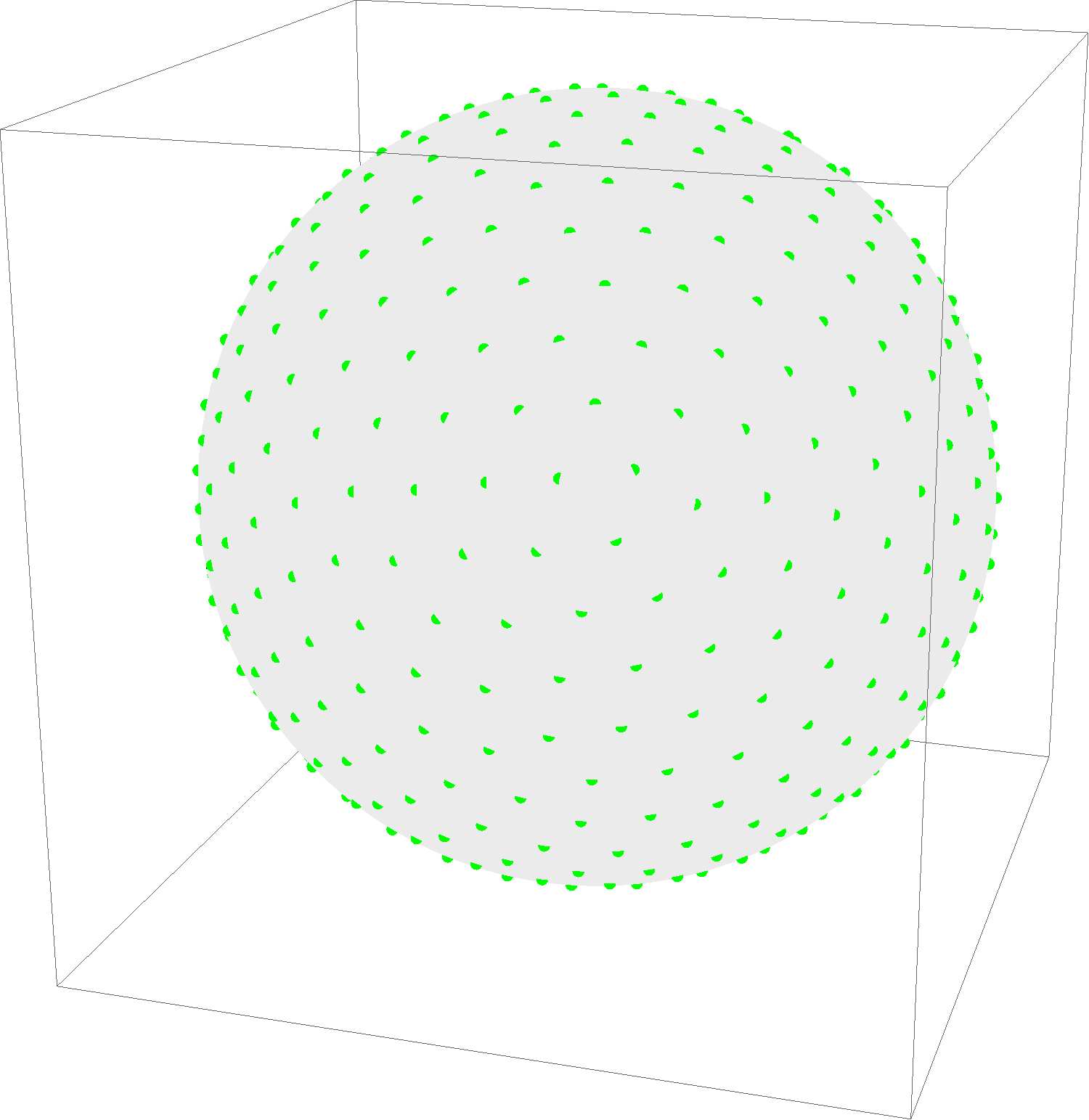}} & 
    \raisebox{-.5\height}{\includegraphics[width=\subfigwidth]{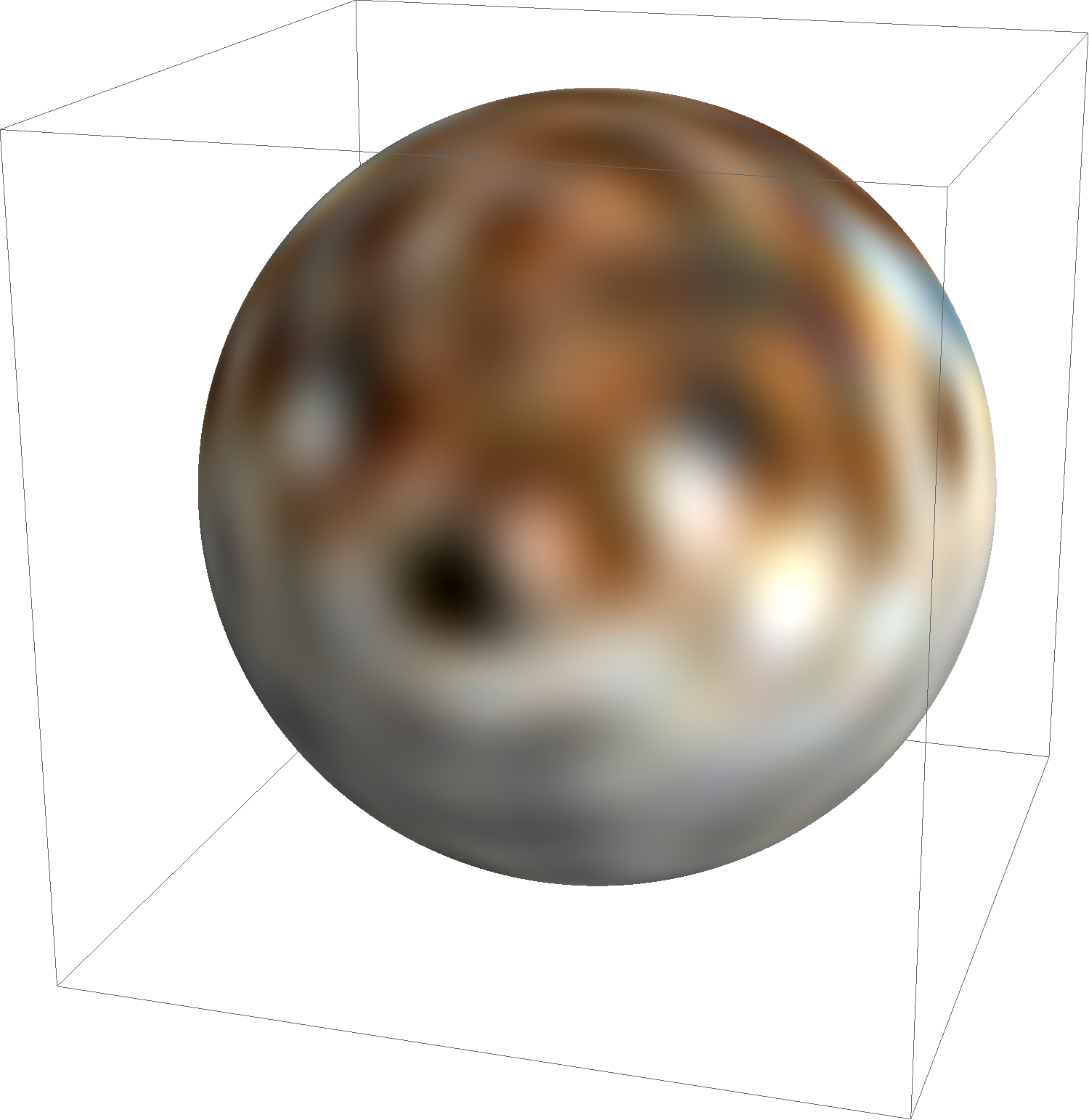}}\\
    \multicolumn{5}{l|}{\hspace{8.9em} N=5000}\\
    \raisebox{-.5\height}{\includegraphics[width=\subfigwidth]{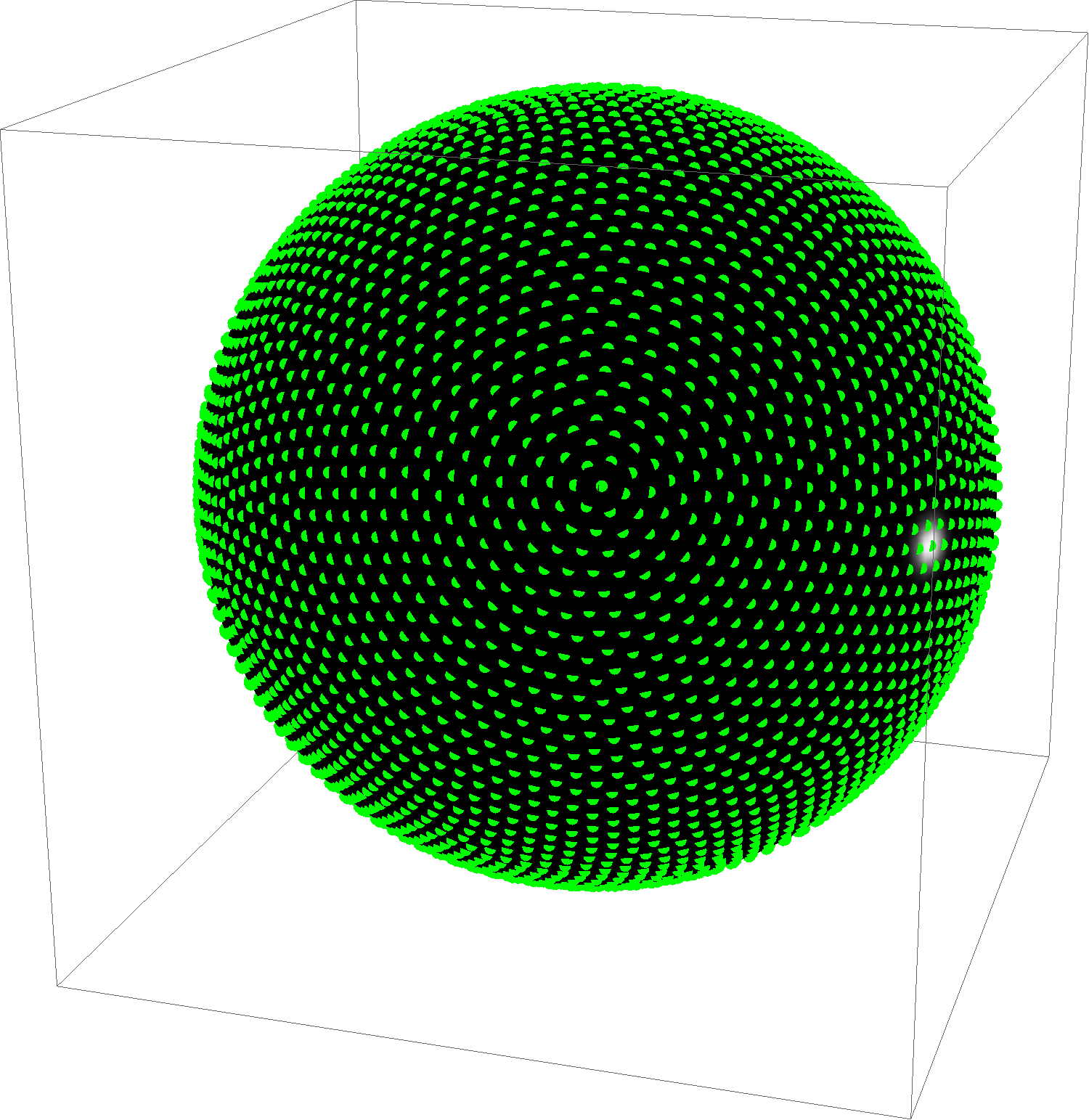}} & + &
    \raisebox{-.5\height}{\includegraphics[width=\subfigwidth]{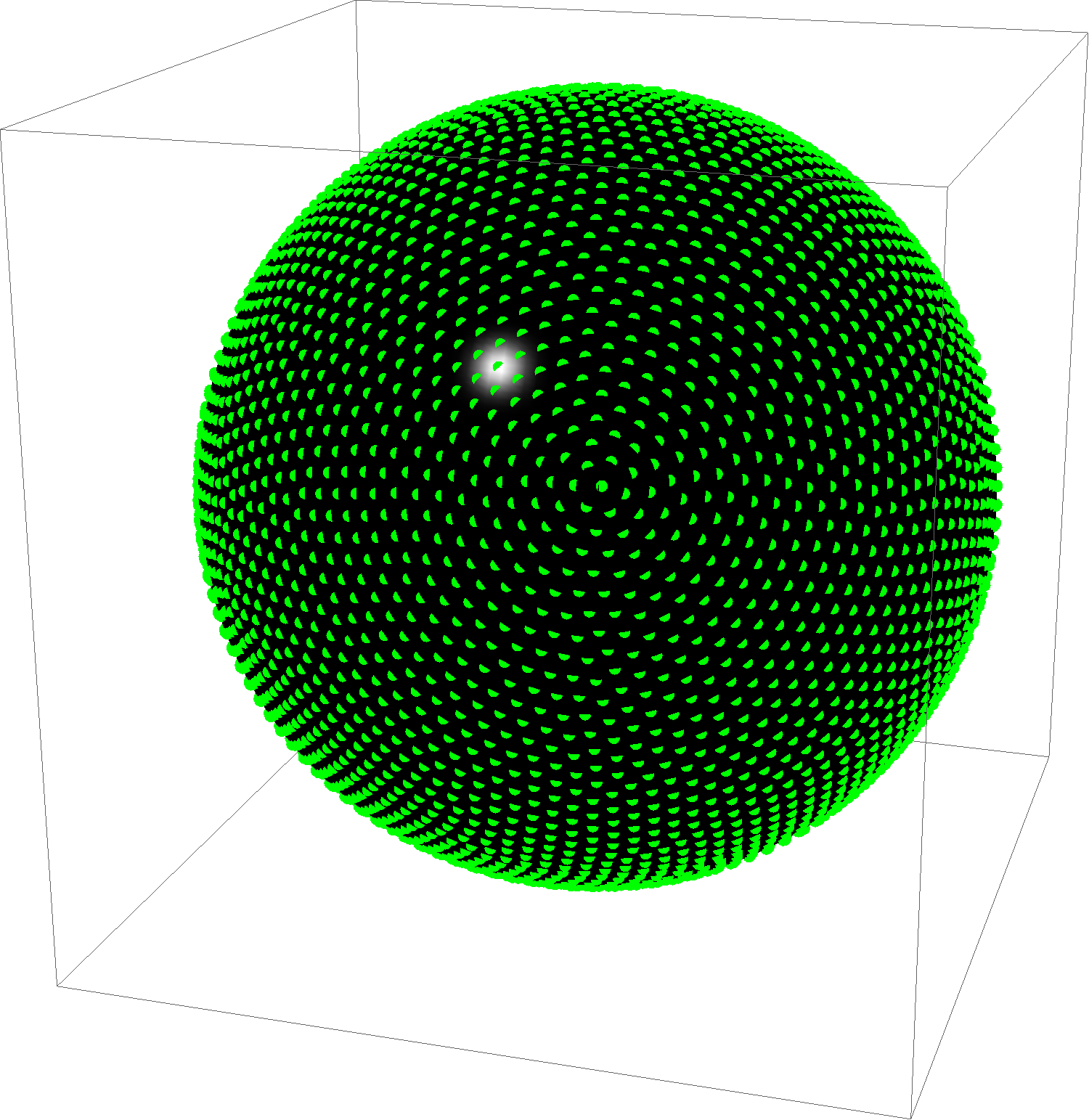}} & + $\dots$ = &
    \raisebox{-.5\height}{\includegraphics[width=\subfigwidth]{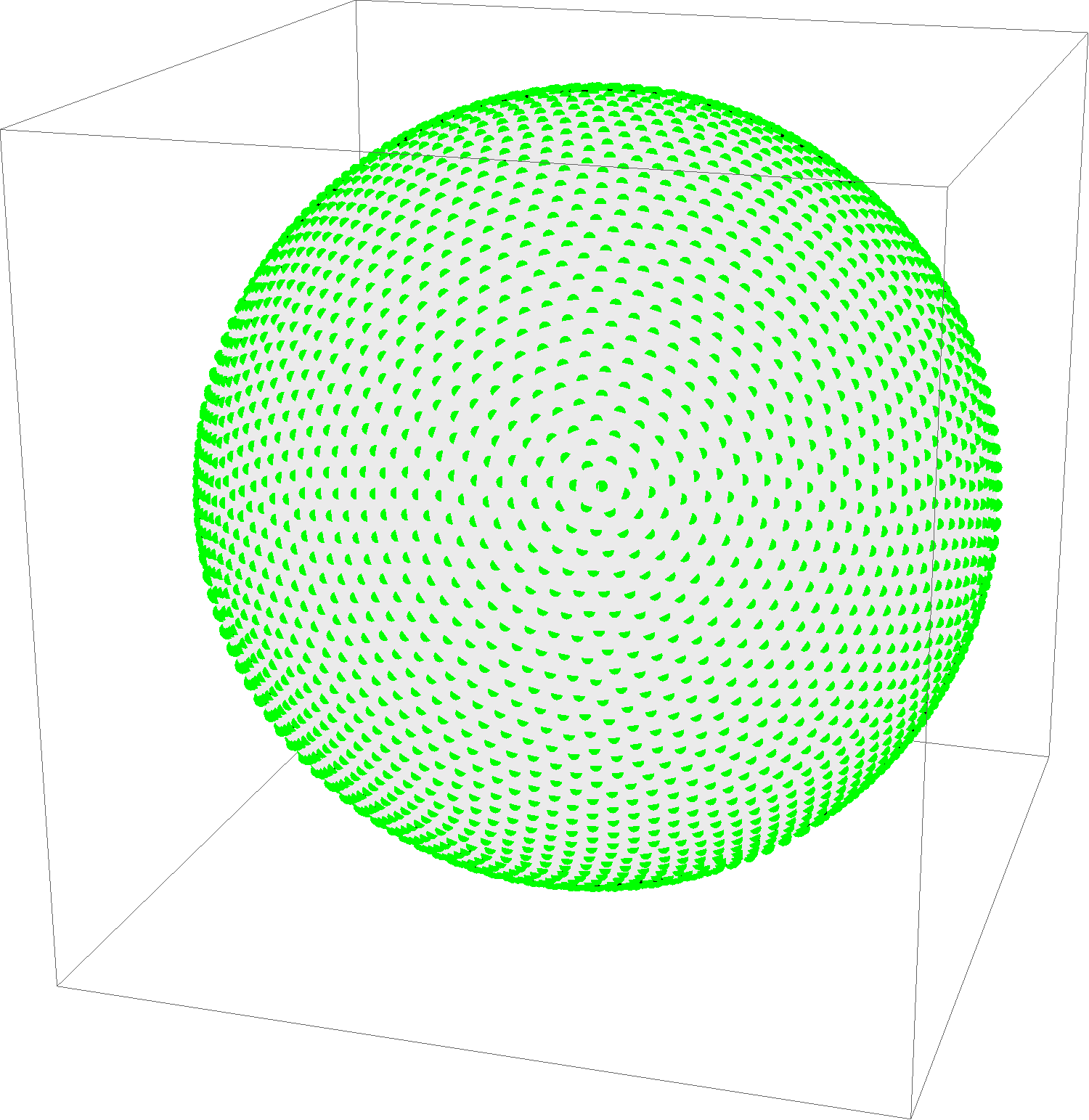}} & 
    \raisebox{-.5\height}{\includegraphics[width=\subfigwidth]{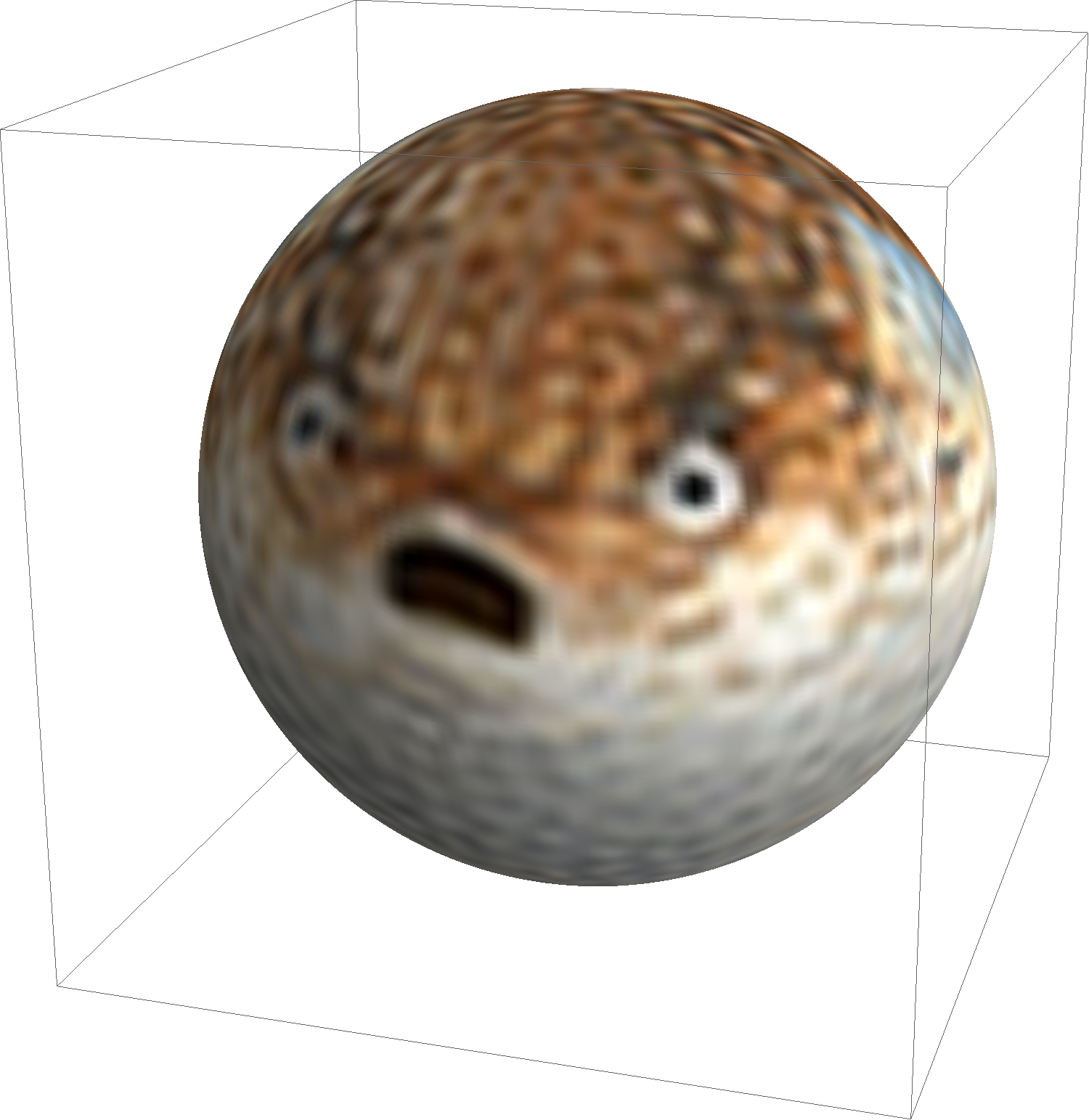}}
    \end{tabular}
    \caption{\textit{Left}: The sum of all B-spline basis functions add up to one, illustrating partition of unity on the 2D rotation group $SO(2)$ (row 1), scaling/dilation group $(\mathbb{R}^+,\cdot)$ (row 2), and the sphere $S^2$ treated as the quotient $SO(3)/SO(2)$, with B-spline centers indicated with green dots (row 3-5). \textit{Right}: A random B-Spline on $SO(2)$ (row 1) and $(\mathbb{R}^+,\cdot)$ (row 2) and reconstruction of a color texture on the sphere $S^2$ at several scales (row 3-5) to illustrate multi-scale properties.}
    \label{fig:BSplinesOnH}
\end{figure}

\setlength{\subfigwidth}{0.125\textwidth}
\begin{figure}[p]
\floatbox[{\capbeside\thisfloatsetup{capbesideposition={right,top},capbesidewidth=4.2cm}}]{figure}[\FBwidth]
{\caption{A B-Spline on $\mathbb{R}^2$ (row 1), sampled on a grid (row 2), and a B-spline on the sphere (row 3). From left to right: a localized kernel, scaled kernel by increasing $s_{\mathbf{x}}$ and $s_h$, atrous kernel, deformable kernel. A green circle is drawn around each B-spline center with radius $\frac{1}{2}s_{\mathbf{x}}$ or $\frac{1}{2}s_{h}$ to indicate the individual basis functions.}\label{fig:kerneltypes}}
{
\begin{tabular}{cccc}
    \raisebox{-.5\height}{\includegraphics[width=\subfigwidth]{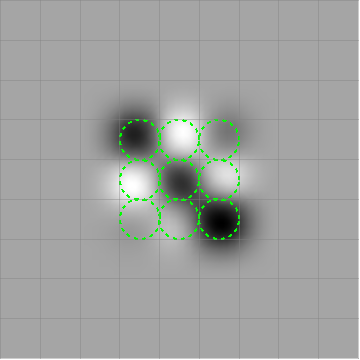}} &
    \raisebox{-.5\height}{\includegraphics[width=\subfigwidth]{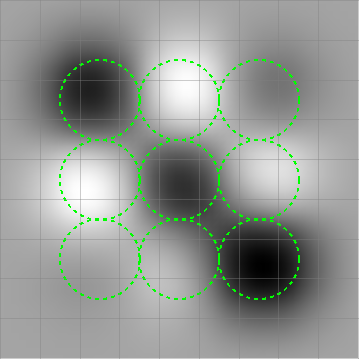}} &
    \raisebox{-.5\height}{\includegraphics[width=\subfigwidth]{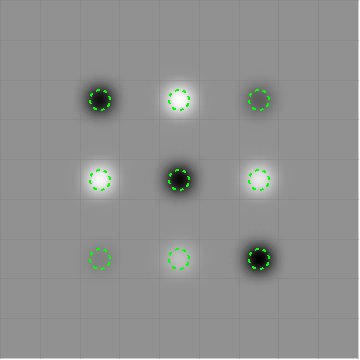}} &
    \raisebox{-.5\height}{\includegraphics[width=\subfigwidth]{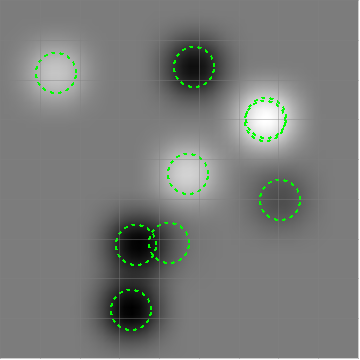}} \\
    \raisebox{-.5\height}{\includegraphics[width=\subfigwidth]{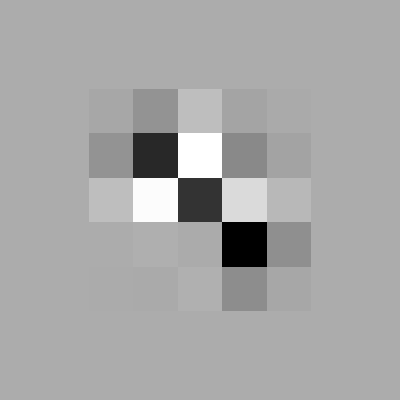}} &
    \raisebox{-.5\height}{\includegraphics[width=\subfigwidth]{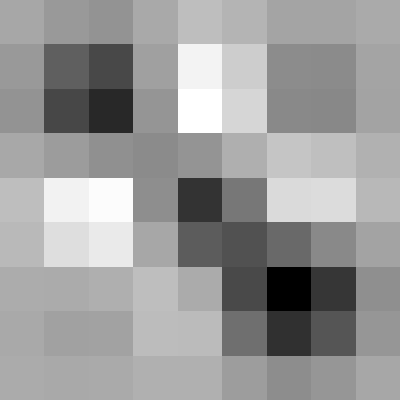}} &
    \raisebox{-.5\height}{\includegraphics[width=\subfigwidth]{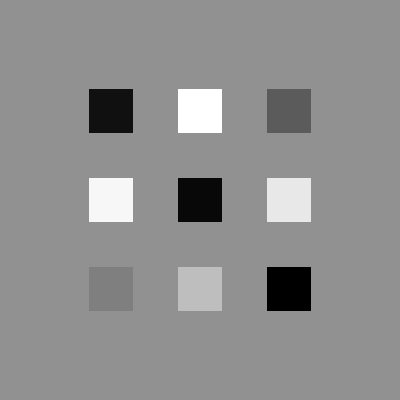}} &
    \raisebox{-.5\height}{\includegraphics[width=\subfigwidth]{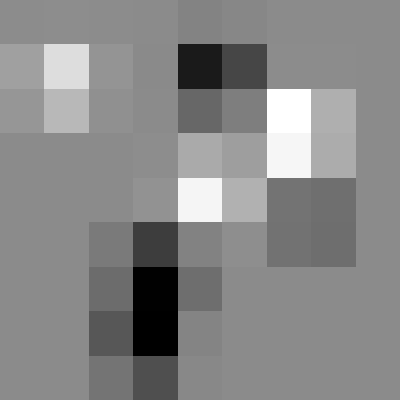}} \\
    \raisebox{-.5\height}{\includegraphics[width=\subfigwidth]{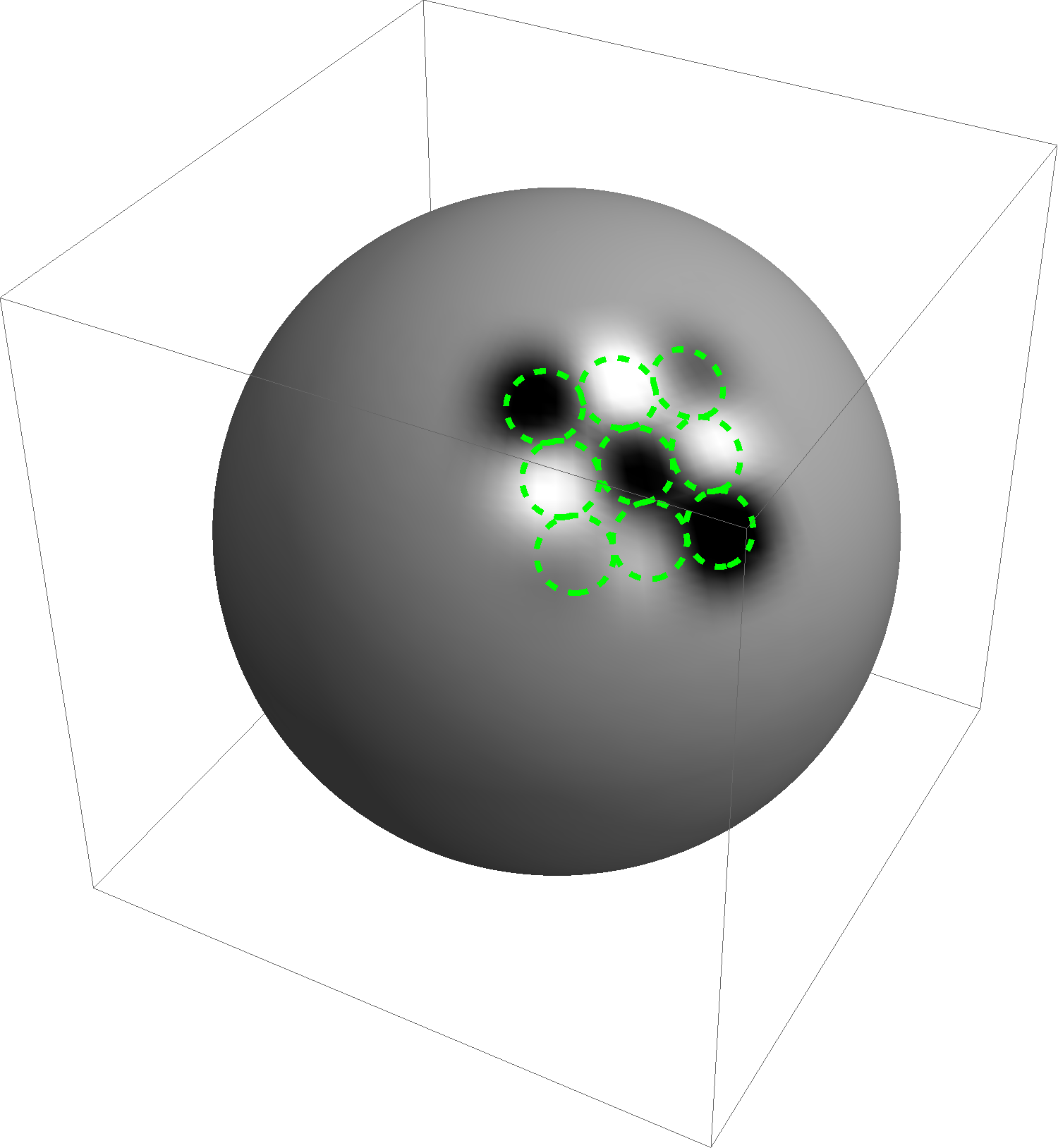}} &
    \raisebox{-.5\height}{\includegraphics[width=\subfigwidth]{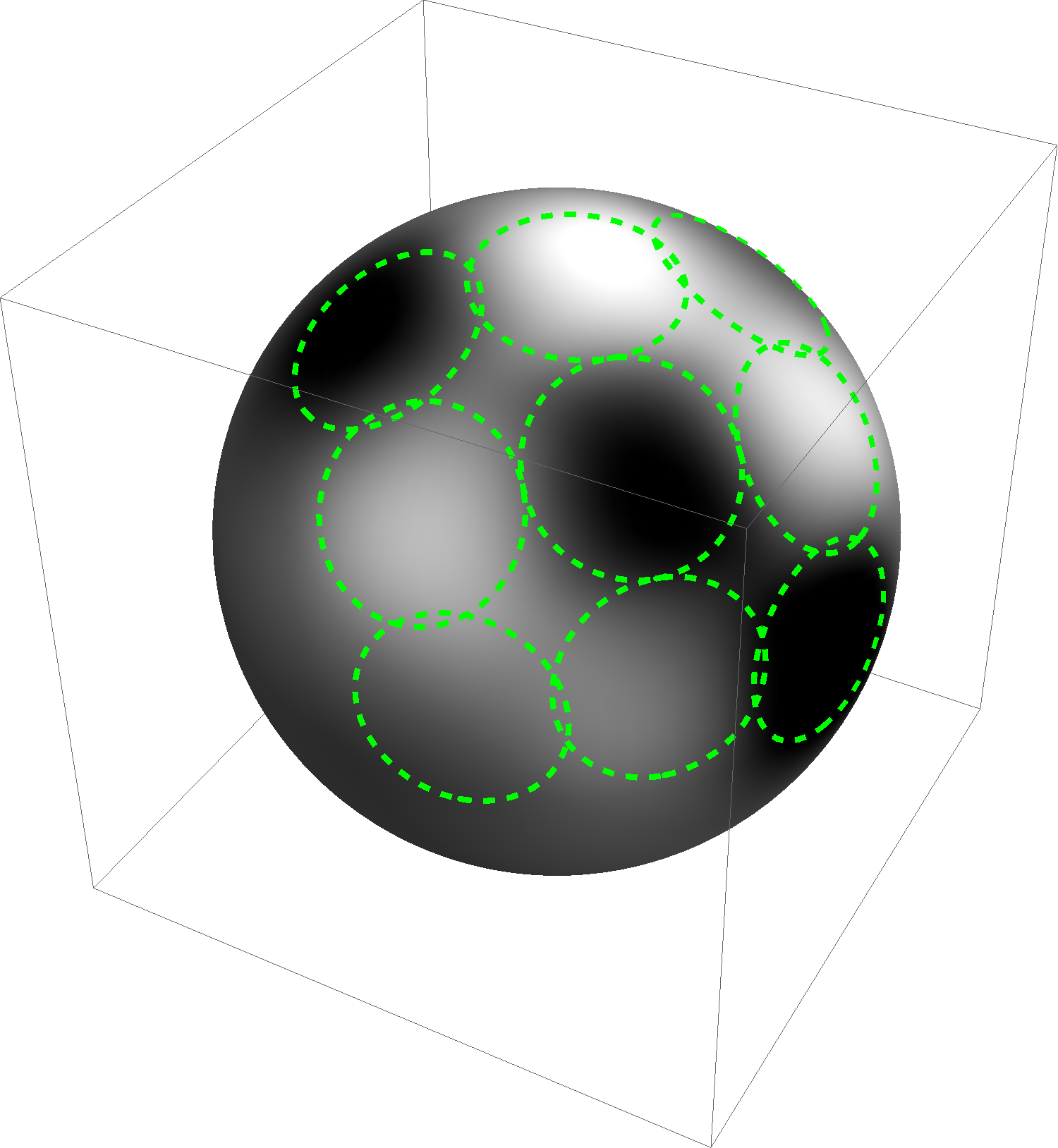}} &
    \raisebox{-.5\height}{\includegraphics[width=\subfigwidth]{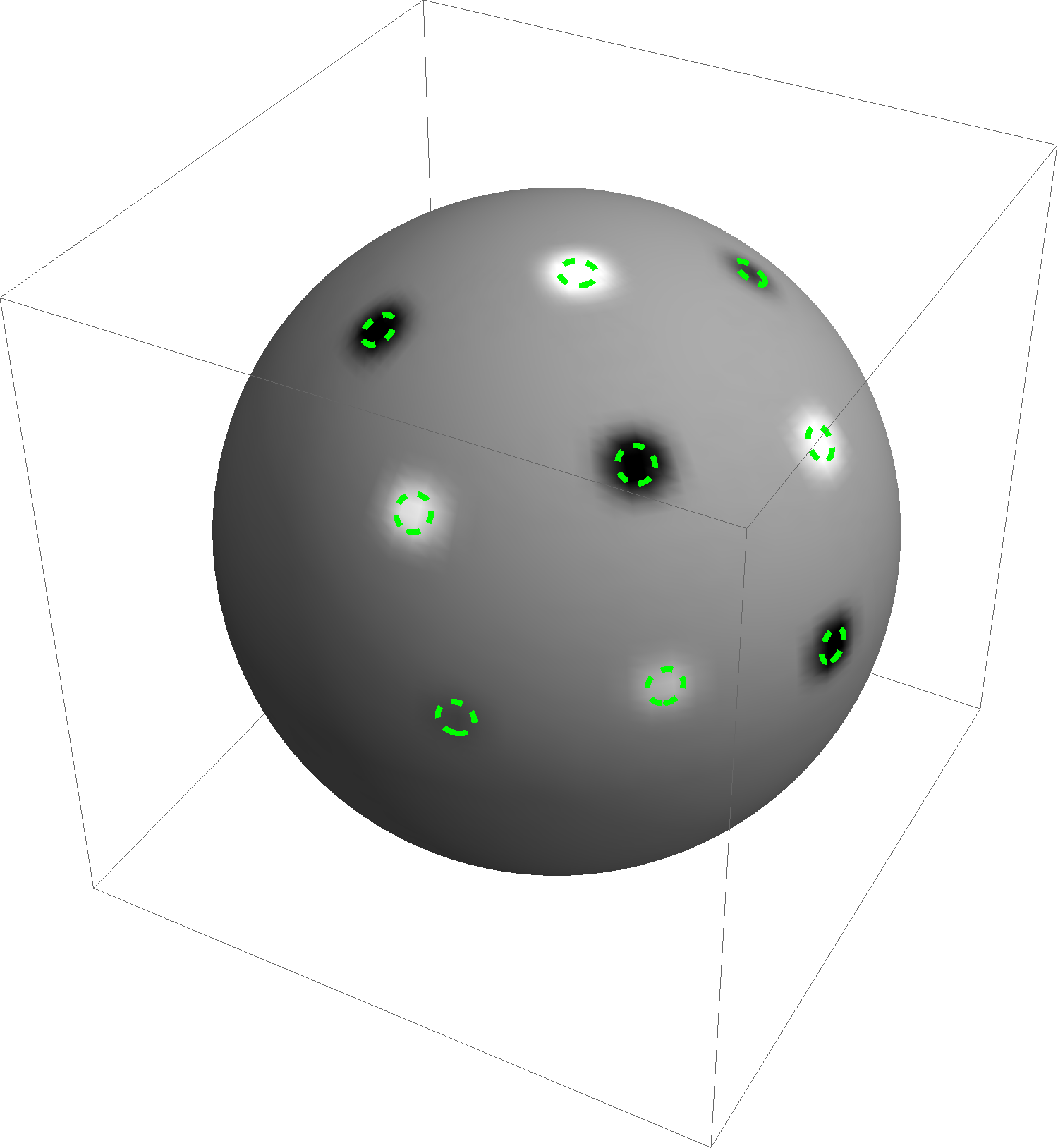}} &
    \raisebox{-.5\height}{\includegraphics[width=\subfigwidth]{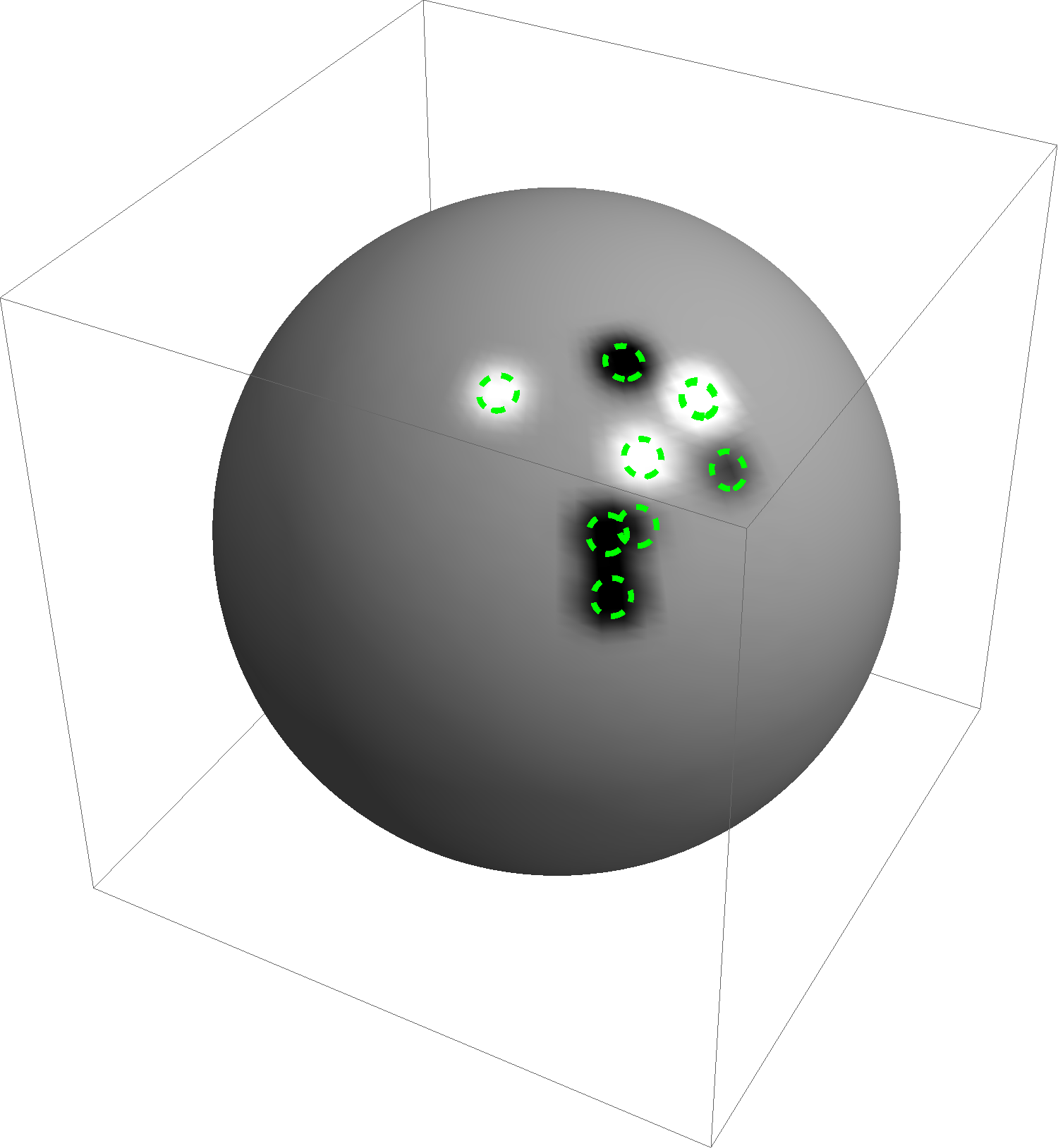}} 
    \end{tabular}
}
\end{figure}

Examples of B-splines on Lie groups $H$ are given in Fig.~\ref{fig:BSplinesOnH}. In this paper we choose to expand convolution kernels on $G=\mathbb{R}^d \rtimes H$ as the tensor product of B-splines on $\mathbb{R}^d$ and $H$ respectively and obtain functions $f:\mathbb{R}^d \times H$ via
\begin{equation}
\label{eq:BSplinesOnG}
    f(\mathbf{x},h):= \sum_{i=1}^{N} c_i  B^{\mathbb{R}^d,n}\left(\frac{\mathbf{x} - \mathbf{x}_i}{s_\mathbf{x}}\right) B^{\mathbb{R}^d,n}\left(\frac{\operatorname{Log}h_i^{-1} h}{s_h}\right).
\end{equation}
Note that that one could also directly define B-splines on $G$ via (\ref{eq:BSplineOnH}), however, this splitting ensures we can use a regular Cartesion grid on the $\mathbb{R}^d$ part.
In our experiments we use B-splines as in (\ref{eq:BSplinesOnG}) and consider the coefficients $\mathbf{c}$ as trainable parameters and the centers ($\mathbf{x}_i$ and/or $\mathbf{h_i}$) and scales ($s_{\mathbf{x}}$ and/or $s_h$) are fixed by design. Some design choices are the following (and illustrated in Fig.~\ref{fig:kerneltypes}).

\textbf{Global vs localized uniform B-splines} The notion of a \textit{uniform} B-spline globally covering $H$ exists only for a small set of Lie groups, e.g. for any 1D group and abelian groups, and it is not possible to construct uniform B-splines on Lie groups in general due to non-zero commutators. Nevertheless, we find that it is possible to construct \textit{approximately uniform} B-splines on compact groups either by constructing a grid of centers $\{h_i\}_{i=1}^N$ on $H$ that approximately uniformly covers $H$, e.g. by using a repulsion model in which $\lVert \operatorname{Log} h_i^{-1} \cdot h_j\rVert$ between any two grid points $h_i,h_j \in H$ is maximized (as is done in Fig.~\ref{fig:BSplinesOnH}), or by specifying a uniform localized grid on the lie algebra $\mathfrak{h}$ and obtaining the centers via the exponential map. The latter approach is in fact possible for any Lie group and leads to a notion of localized convolution kernels that have a finite support on $H$, see Fig.~\ref{fig:kerneltypes}.

\textbf{Atrous B-splines}
Atrous convolutions, i.e. convolutions with sparse kernels defined by weights interleaved with zeros \citep{holschneider_real-time_1990}, are commonly used to increase the effective receptive field size and add a notion of scale to deep CNNs \citep{yu_multi-scale_2016,chen_deeplab:_2018}. Atrous convolution kernels can be constructed with B-splines by fixing the scale factors $s_{\mathbf{x}}$ and $s_h$, e.g. to the grid size, and increasing the distance between the center points $\mathbf{x}_i$ and $h_i$.

\textbf{Non-uniform/deformable B-splines}
In non-uniform B-splines the centers $\mathbf{x}_i$ and $h_i$ do not necessarily need to lie on a regular grid. Then, deformable CNNs, first proposed by \citet{dai_deformable_2017}, are obtained by treating the centers as trainable parameters. For B-spline CNNs on $\mathbb{R}^d$ of order $n=1$ this in fact leads to the deformable convolution layers as defined in \citep{dai_deformable_2017}.

\textbf{Modular design}
The design of G-correlation layers (Eqs.~(\ref{eq:liftingcor}-\ref{eq:projection})) using B-spline kernels (Eqs.~(\ref{eq:BSplineOnRd}-\ref{eq:BSplinesOnG})) results in a generic and modular construction of G-CNNs that are equivariant to Lie groups $G$ and only requires a few group specific definitions (see examples in App.~\ref{app:examples}): The group structure of $H$ (group product $\cdot$ and inverse), the action $\odot$ of $H$ on $\mathbb{R}^d$ (together with the group structure of $H$ this automatically defines the structure of $G = \mathbb{R}^d \rtimes H$), and the logarithmic map $\operatorname{Log}:H \rightarrow \mathfrak{h}$.

\section{Experiments}
\label{sec:experiments}
\subsection{Roto-translation CNNs}
\textbf{Data} The PatchCamelyon (PCam) dataset \citep{veeling_rotation_2018} consists of 327,680 RGB patches taken from histopathologic scans of lymph node sections and is derived from Camelyon16 \citep{ehteshami_bejnordi_diagnostic_2017}. The patches are binary labeled for the presence of metastasis. The classification problem is truly rotation invariant as image features appear under arbitrary rotations at all levels of abstraction, e.g. from edges (low-level) to individual cells to the tissue (high-level). 

\textbf{Experiments}
G-CNNs ensure roto-translation equivariance both locally (low-level) and globally (high-level) and invariance is achieved by means of pooling. In our experiments we test the performance of roto-translation G-CNNs (with $G=\mathbb{R}^2 \rtimes SO(2)$) against a 2D baseline and investigate the effect of different choices (local, global, atrous) for defining the kernels on the $SO(2)$-part of the network, cf. Eq.~(\ref{eq:BSplinesOnG}) and Fig.~\ref{fig:kerneltypes}. Each network has the same architecture (detailed in App.~\ref{app:architectures}) but the kernels are sampled with varying resolution on $H=SO(2)$, denoted with $N_h$, and with varying resolution of the B-splines, which is achieved by varying $s_h$ and the number of basis functions on $H$, denoted with $N_k$. Each network has approximately the same number of trainable weights. Each network is trained with data-augmentation (see App.~\ref{app:architectures} for details), but for reference we also compare to a 2D and a $SE(2)$ model ($N_k=N_h=12$) which are trained without $90^\circ$ rotation augmentation.

The results are summarized in Fig.~\ref{fig:results}. Here $N_h=1$ means the kernels are transformed for only one rotation, which coincides with standard 2D convolutions (our baseline). A result labeled "dense" with $N_h=16$ and $N_k = 8$ means the convolution kernels are rotated 16 times and the kernels are expanded in a B-spline basis with 8 basis functions to fully cover $H$. The label "local" means the basis is localized with $N_k$ basis functions with a spacing of $s_h = \frac{2 \pi}{16}$ between them, with $s_h$ equal to the grid resolution. Atrous kernels are spaced equidistantly on $H$ and have $s_h \ll \frac{2 \pi}{N_h}$.

\textbf{Results}
We generally observe that a finer sampling of $SO(2)$ leads to better results up until $N_h=12$ after which results slightly degrade. This is line with findings in \citep{bekkers_roto-translation_2018}. The degradation after this point could be explained by overfitting; there is a limit on the resolution of the signal generated by rotating 5x5 convolution kernels; at some point the splines are described in more detail than the data and thus an unnecessary amount of coefficients are trained. One could still benefit from sampling coarse kernels (low $N_k$) on a fine grid (high $N_h$), e.g. compare the cases $N_h>N_k$ for fixed $N_k$. This is in line with findings in \citep{weiler_learning_2018} where a fixed circular harmonic basis is used. Generally, atrous kernels tend to outperform dense kernels as do the localized kernels in the low $N_k$ regime. Finally, comparing the models with and without $90^\circ$ augmentation show that such augmentations are crucial for the 2D model but hardly affect the $SE(2)$ model. Moreover, the $SE(2)$ model \emph{without} outperforms the 2D model \textit{with} augmentation. This confirms the theory: G-CNNs guarantee both local and global equivariance by construction, whereas with augmentations valuable network capacity is spend on learning (only) global invariance. The very modest drop in the $SE(2)$ case may be due to the discretization of the network on a grid after which it is no longer purely equivariant but rather approximately, which may be compensated for via augmentations.

\begin{figure}
  \centering   
     \begin{overpic}[height=4.83cm]{{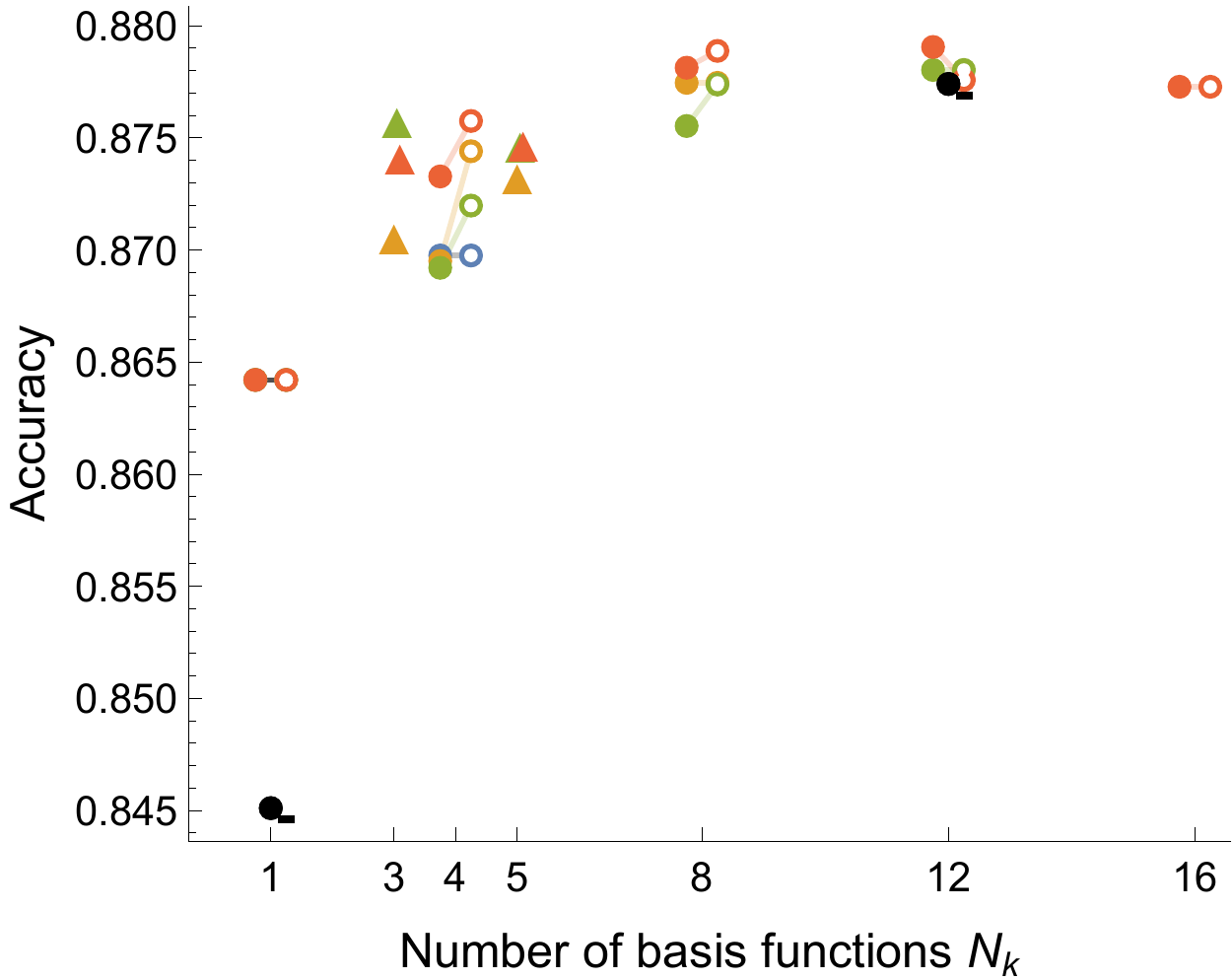}}
    \put(85,14){\includegraphics[scale=0.19]{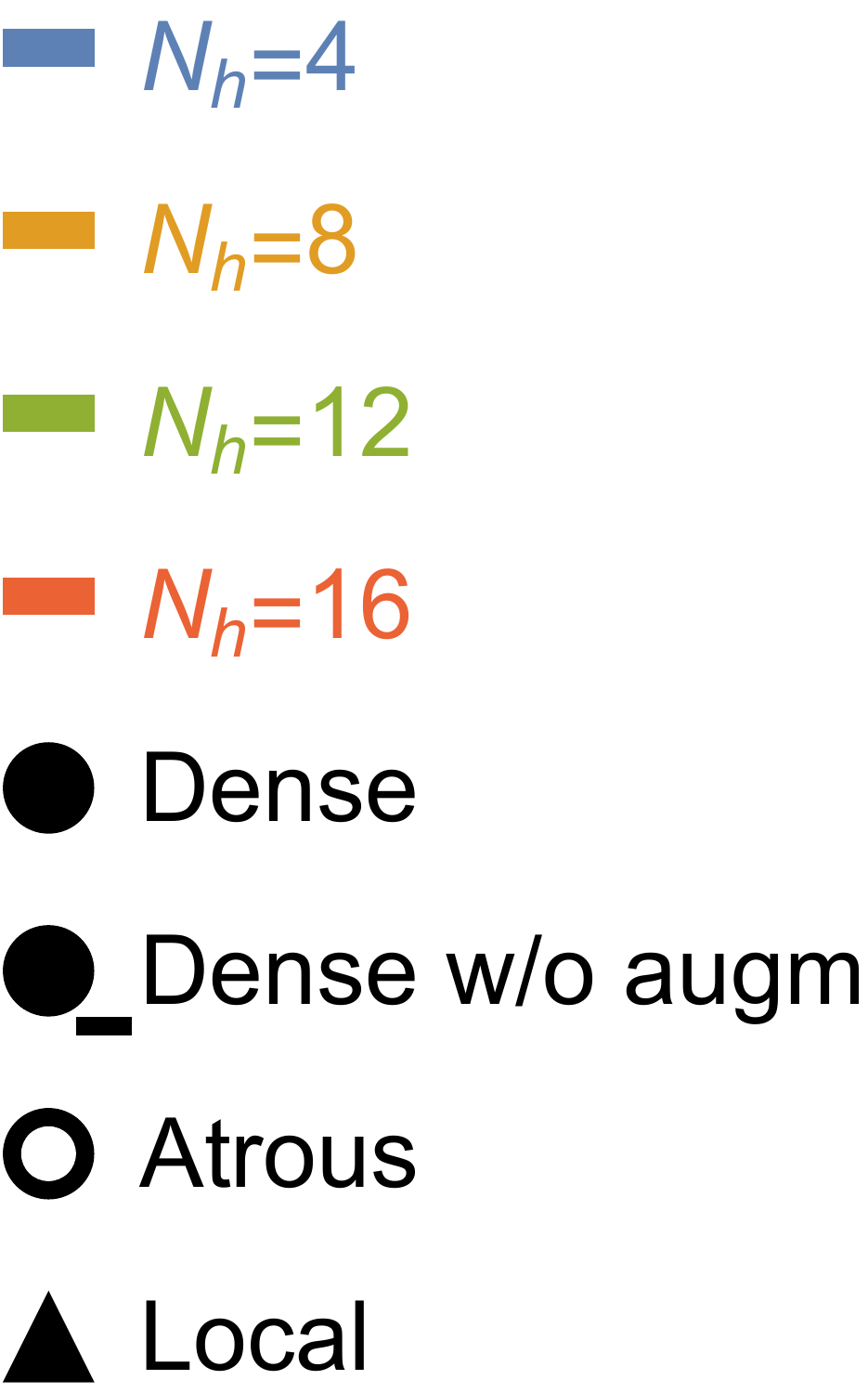}}  
  \end{overpic}
  \hspace{1cm}
  \begin{overpic}[height=4.83cm]{{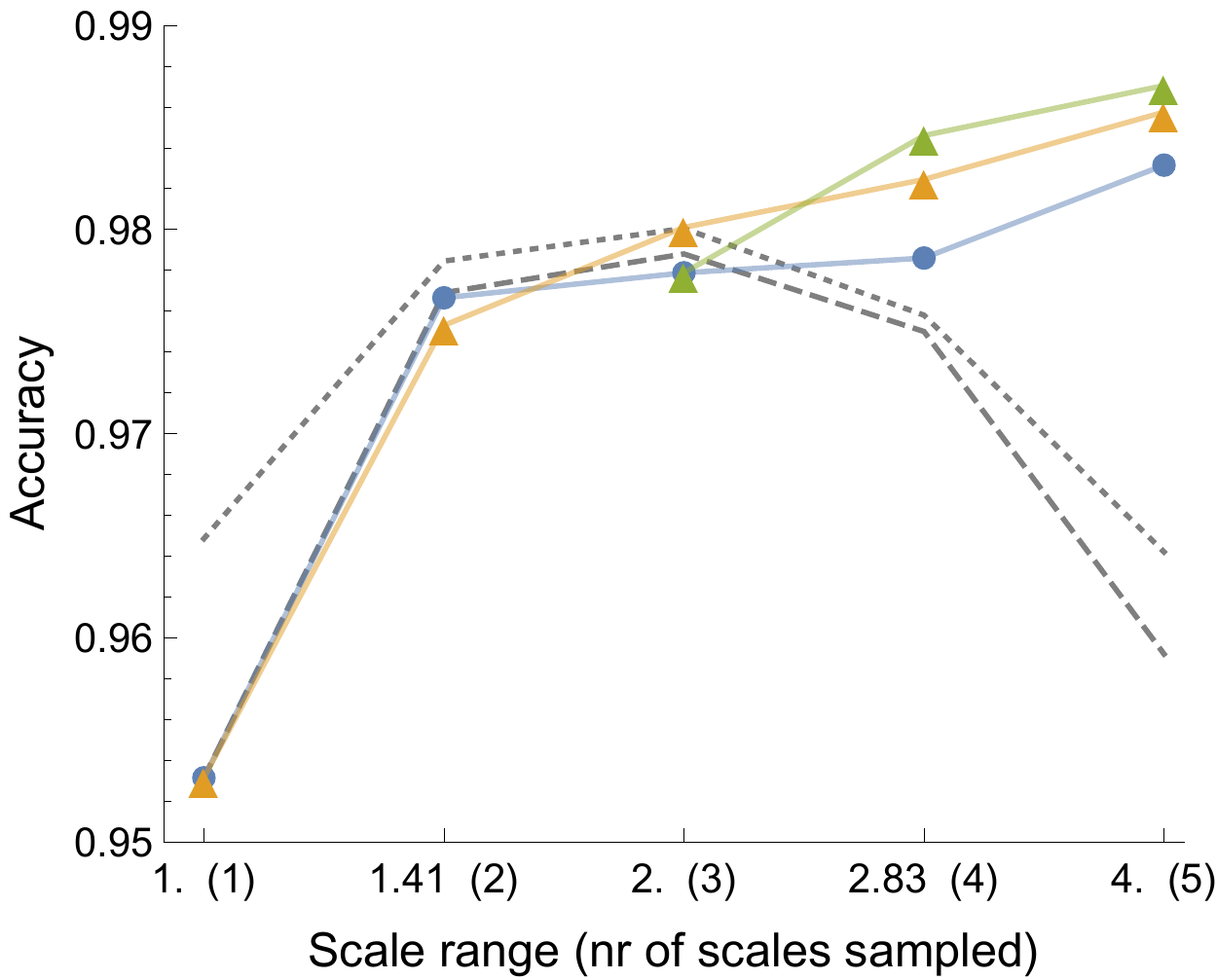}}
     \put(36,18){\includegraphics[scale=0.22]{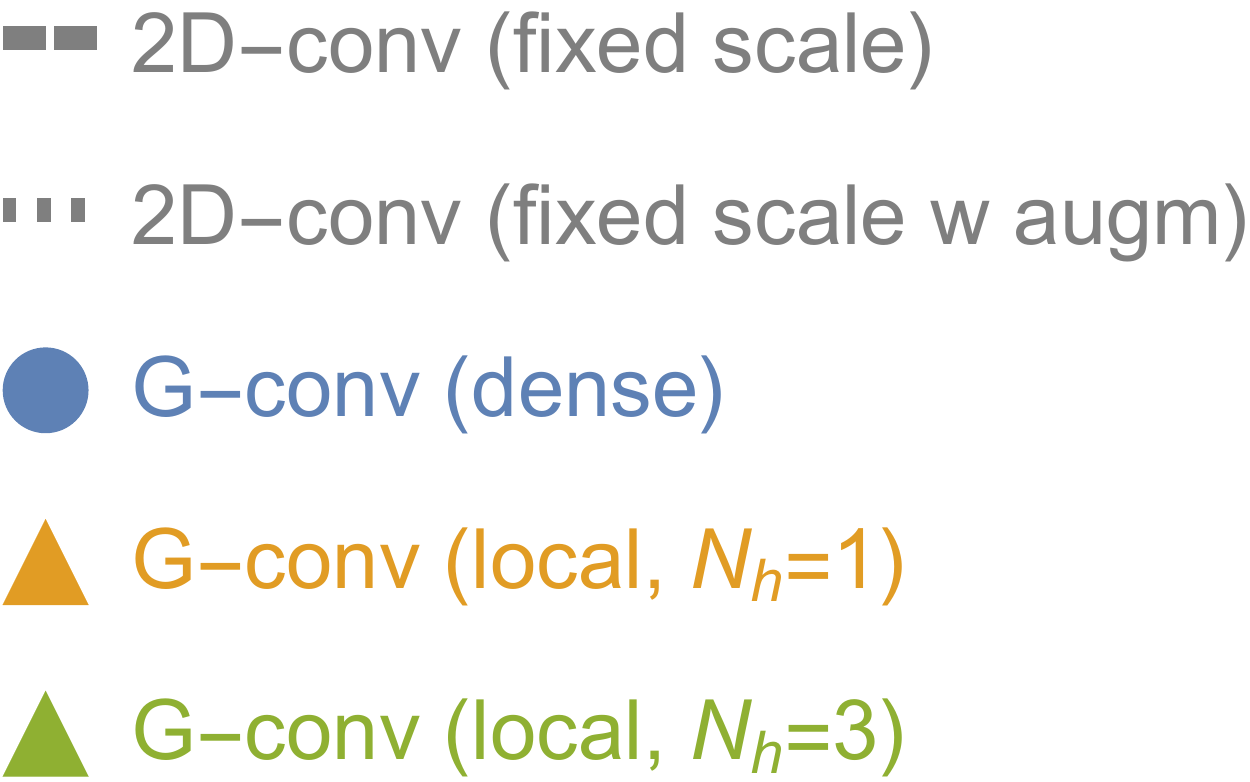}}  
  \end{overpic}
\caption{Left: results of roto-translation G-CNNs on tumor classification (PCam dataset). Right: results of scale-translation G-CNNs on landmark localization (CelebA dataset). \label{fig:results}}
\end{figure}

\subsection{Scale-translation CNNs}
\textbf{Data} The CelebA dataset \citep{liu_deep_2015} contains 202,599 RGB images of varying size of celebrities together with labels for attributes (hair color, glasses, hat, etc) and 5 annotated facial landmarks (2 eyes, 1 nose, 2 corners of the mouth). We reformatted the data as follows. All images are isotropically scaled to a maximum width or height of 128 and if necessary padded in the other dimension with zeros to obtain  a size of 128x128. For each image we took the distance between the eyes as a reference for the size of the face and categorized each image into above and below average size. For each unique celebrity with at least 1 image per class, we randomly sampled 1 image per class. The final dataset consists of 17,548 images of 128x128 of 8,774 celebrities with faces at varying scales. Each image is labeled with 5 heatmaps constructed by sampling a Gaussian with standard deviation 1.5 centered around each landmark.

\textbf{Experiments}
We train a scale-translation G-CNN (with $G=\mathbb{R}^2 \rtimes \mathbb{R}^+$) with different choices for kernels. The "dense" networks have kernels defined over the whole discretization of $H=\mathbb{R}^+$ and thus consider interactions between features at all scales. The "local" networks consider only interaction between neighbouring scales via localized kernels ($N_k=3$) or no scale interaction at all ($N_k=1$). Either way, each G-CNN is a multi-scale network in which kernels are applied at a range of scales. We compared against a 2D baseline with fixed-scale kernels which we tested for several scales separately. In the G-CNNs, $H$ is uniformly sampled (w.r.t. to the metric on $H$) on a fixed scale range, generating the discrete sets $H_d = \{e^{(i-1) s_h}\}_{i=1}^{N_h}$ with $s_h = \frac{1}{2}\ln 2$. Each G-CNN is sampled with the same resolution in $H$ with $s_h$, and each B-spline basis function is centered on the discrete grid (i.e. $h_i \in H_d$). We note that the discretization of $H$ is formally no longer a group as it is not closed, however, the group structure still applies locally. The result is that information may leak out of the domain in a similar way as happens spatially in standard zero-padded 2D CNNs (translational G-CNNs), in which the discretized domain of translations is also no longer (locally) compact. This information leak can be avoided by using localized kernels of size $N_k=1$ along the $H$ axis, as is also done in \citep{worrall_deep_2019,li_scale-aware_2019}. The networks are trained without data-augmentation, except for the 2D network, which for reference we train with and without random scale augmentations at train-time. Augmentations were done with a random scale-factor between $1$ and $1.4$. We found that scale augmentations beyond $1.4$ did not improve results.

\textbf{Results}
Fig.~\ref{fig:results} summarizes the results. By testing our 2D baseline at several scales we observe that there is an optimal scale ($h=2$) that gives a best trade off between the scale variations in the data. This set of experiments is also used to rule out the idea that G-CNNs outperform the 2D baseline simply because they have a larger effective receptive field size. For large scale ranges the G-CNNs start to outperform 2D CNNs as these networks consider both small and large scale features simultaneously (multi-scale behavior). Comparing different G-CNN kernel specifications we observe that enabling interactions between neighbouring scales, via localized kernels on $H$ ("local", $N_h=3$), outperforms both all-scale interactions ("dense") and no-scale interaction at all ($N_h=1$). This finding is in line with those in \citep{worrall_deep_2019}. Finally, although 2D CNNs moderately benefit from scale augmentations, it is not possible to achieve the performance of G-CNNs. It seems that a multi-scale approach (G-CNNs) is essential.

\section{Conclusion}
This paper presents a flexible framework for building G-CNNs for arbitrary Lie groups. The proposed B-spline basis functions, which are used to represent convolution kernels, have unique properties that cannot be achieved by classical Fourier based basis functions. Such properties include the construction of localized, atrous, and deformable convolution kernels. We experimentally demonstrated the added value of localized and atrous group convolutions on two different applications, considering two different groups. In particular in experiments with scale-translation G-CNNs, kernel localization was important. The B-spline basis functions can be considered as smooth pixels on Lie groups and they enable us to design G-CNNs using familiar notions from classical CNN design (localized, atrous, and deformable convolutions). Future work will focus on exploring these options further in new applications that could benefit from equivariance constraints, for which the tools now are available for a large class of transformation groups via the proposed Lie group B-splines.

\subsubsection*{Acknowledgments}
\ifanonymous
{
Anonymous.
}
\else
Remco Duits (Eindhoven University of Technology (TU/e)) is gratefully acknowledged for his contributions to the formulation and proof of Thm.~\ref{thm:equivariantOperators} and for helpful discussions on Lie groups. I thank Maxime Lafarge (TU/e) for advice on setting up the PCam experiments. This work is part of the research programme VENI with project number 17290, which is (partly) financed by the Dutch Research Council (NWO).
\fi

\bibliography{EJBekkers}
\bibliographystyle{iclr2020_conference}

\clearpage

\appendix

\section{Proof of Theorem \ref{thm:equivariantOperators}}
\label{app:proofthm1}

The following proofs the three sub-items of Thm.~\ref{thm:equivariantOperators}.
\begin{enumerate}
\item It follows from Dunford-Pettis Theorem, see e.g. \citep[Thm 1.3]{arendt_integral_1994}, \citep[Ch 9, Thm 5]{kantorovich_functional_1982}, or \citep[Thm 1]{duits_perceptual_2005}, that if $\mathcal{K}$ is linear and bounded it is an integral operator.
\item The left-equivariance constraint then imposes bi-left-invariance of the kernel $\tilde{k}$ as follows, where $\forall_{g \in G}$ and $\forall_{f\in\mathbb{L}_2(X)}$:
\begin{align}
    (\mathcal{K} \circ \mathcal{L}_g^{G \rightarrow \mathbb{L}_2(X)})(f) &= (\mathcal{L}_g^{G \rightarrow \mathbb{L}_2(Y)} \circ \mathcal{K})(f) \;\;\;\;\; \Leftrightarrow \nonumber\\
    \int_X \tilde{k}(y,x)f(g^{-1} x){\rm d}x &= \int_X \tilde{k}(g^{-1} y,x)f(x){\rm d}x \overset{\text{in r.h.s. integral}\; x \leftarrow g^{-1} x}{\Leftrightarrow} \nonumber\\
    \int_X \tilde{k}(y,x)f(g^{-1} x){\rm d}x &= \int_X \tilde{k}(g^{-1} y,g^{-1}x)f(g^{-1}x){\rm d}(g^{-1}x) {\Leftrightarrow} \nonumber\\
    \int_X \tilde{k}(y,x)f(g^{-1} x){\rm d}x &= \int_X \tilde{k}(g^{-1} y,g^{-1}x)f(g^{-1}x)\tfrac{1}{|\operatorname{det}g|}{\rm d}x. \label{bi-left-invariance-integral}
\end{align}
Since (\ref{bi-left-invariance-integral}) should hold for all $f \in \mathbb{L}_2(X)$ we obtain
\begin{equation}\label{bi-left-invariance}
    \forall_{g \in G}:\;\;\;\;\;\;\; \tilde{k}(y,x) = \tfrac{1}{|\operatorname{det}g|} \tilde{k}(g^{-1} y, g^{-1} x).
\end{equation}
Furthermore, since $G$ acts transitively on $Y$ we have that $\forall_{y,y_0\in Y}$ $\exists_{g_y \in G}$ such that $y = g_y y_0$ and thus
$$
\tilde{k}(y,x) = \tilde{k}(g_y \, y_0,x) \overset{(\ref{bi-left-invariance})}{=} \tfrac{1}{|\operatorname{det}g_y|}\tilde{k}(y_0, g^{-1}_y \, x) =: \tfrac{1}{|\operatorname{det}g_y|}k( g^{-1}_{y} \, x )
$$
for every $g_y \in G$ such that $y = g_y \, y_0$ with arbitrary fixed origin $y_0 \in Y$.
\item Every homogeneous space $Y$ of $G$ can be identified with a quotient $G/H$. Choose an origin $y_0 \in Y$ s.t. $\forall_{h \in H}: h \, y_0 = y_0$, i.e., $H = \operatorname{Stab}_G y_0$, then
$$
\tilde{k}(y_0, x) = \tilde{k}(h \, y_0, x) \Leftrightarrow 
{k}(x) = \tfrac{1}{|\operatorname{det}h|} {k}(h^{-1} \, x). 
$$
\end{enumerate}
We further remark that When $Y \equiv G = G / \{e\}$, with $e\in G$ the identify element of $G$, the symmetry constraint of Eq.~(\ref{symmetry}) vanishes. Thus, in order to construct equivariant maps without constraints on the kernel the functions should be lifted to the group $G$.

\section{Examples of Lie groups}
\label{app:examples}
In the following sub-sections some explicit examples of Lie groups $H$ are given, together with their actions on $\mathbb{R}^d$ and the $\operatorname{Log}$ operators. The required tools for building B-spline based G-CNNs for Lie groups of the form $G=\mathbb{R}^d \rtimes H$ are then automatically derived from these core definitions. E.g., the action $\odot$ of $H$ on a space $X$ defines a left-regular representation on functions on $X$ via
$$
(\mathcal{L}_{g}^{H \rightarrow \mathbb{L}_2(X)} f)(x) = f(g^{-1}\odot x).
$$
When $X=G$ is the group itself, the action equals the group product.  The group structure of semi-direct product groups $G=\mathbb{R}^d \rtimes H$ is automatically derived from the action of $H$ on $\mathbb{R}^d$, see Eq.~(\ref{eq:semiproduct}) and is in turn used to define the representations (see Eq.~(\ref{eq:splitting})). Some examples are given below.

\subsection{Translation group $G=(\mathbb{R}^d,+)$}
The group of translations is given by the space of translation vectors $\mathbb{R}^d$ with the group product and inverse given by
\begin{align*}
    g \cdot g' &= (\mathbf{x} + \mathbf{x}')\\
    g^{-1} &= (-\mathbf{x}),
\end{align*}
with $g = (\mathbf{x}), g'=(\mathbf{x}') \in G$ with $\mathbf{x},\mathbf{x}' \in \mathbb{R}^d$. The identity element is $e = (\mathbf{0})$. The left-regular representation on $d$-dimensional functions $f \in \mathbb{L}_2(\mathbb{R}^d)$ produces translations of $f$ via 
$$
(\mathcal{L}_{g}^{G \rightarrow \mathbb{L}_2(\mathbb{R}^d)} f)(g') = f(g^{-1}\cdot g') = f(\mathbf{x}'-\mathbf{x}).
$$
Now, since the space of translations can be identified with $\mathbb{R}^d$, the lifting (Eq.~(\ref{eq:liftingcor})) and group correlations (Eq.~(\ref{eq:gcor})) coincide with the standard definition of cross-correlation. I.e.,
$$
\boxed{
(k \tilde{\star} f)(\mathbf{x}) = (k {\star} f)(\mathbf{x}) = \int_{\mathbb{R}^2} k(\mathbf{x}' - \mathbf{x}) f(\mathbf{x}') {\rm d}\mathbf{x}'.
}
$$
The logarithmic map is simply given by 
$$
\operatorname{Log} g = \mathbf{x},
$$
so the convolution kernels can be defined by regular splines on $\mathbb{R}^d$, cf. Eq.~(\ref{eq:BSplineOnRd}).

\subsection{The 2D rotation group $H=SO(2)$}
The special orthogonal group $SO(2)$ consists of all orthogonal $2\times2$ matrix with determinant 1, i.e., rotation matrices of the form
$$
\mathbf{R}_\theta = \left(
\begin{array}{cc}
\cos \theta & -\sin \theta \\
\sin \theta & \cos \theta
\end{array}
\right),
$$
and the group product and inverse is given by the matrix product and matrix inverse:
\begin{align*}
    h \cdot h' &= (\mathbf{R}_\theta . \mathbf{R}_{\theta'}) = (\mathbf{R}_{\theta + \theta'})\\
    h^{-1} &= \left(\mathbf{R}_{\theta}^{-1}\right),
\end{align*}
with $h = (\mathbf{R}_\theta), h'=(\mathbf{R}_{\theta'}) \in SO(2)$ with $\theta,\theta' \in S^1$. The identity element is $e = (\mathbf{R}_0) = (\mathbf{I})$. The action of $H$ on $\mathbb{R}^2$ is given by matrix vector multiplication:
$$
h \odot \mathbf{x} = \mathbf{R}_\theta . \mathbf{x},
$$
with $\mathbf{x}\in\mathbb{R}^2$. 

Combining the group structure of 2D translations with rotations in $SO(2)$ as a semi-direct product group gives us the roto-translation group $SE(2) = \mathbb{R}^2 \rtimes SO(2)$, also known as the special Euclidean motion group. The group structure of $SE(2)$ is automatically derived from that of $SO(2)$, see Sec.~\ref{sec:preliminaries} for details.

The left-regular representation of $SO(2)$ are given by
\begin{align*}
(\mathcal{L}_{h}^{SO(2) \rightarrow \mathbb{L}_2(\mathbb{R}^2)} f)(\mathbf{x}') &= f(h^{-1}\odot \mathbf{x}') = f(\mathbf{R}_\theta^{-1}. \mathbf{x}'), \\
(\mathcal{L}_{h}^{SO(2) \rightarrow \mathbb{L}_2(SO(2))} F)(h') &= F(h^{-1}\cdot h') = F(\mathbf{R}_{\theta'-\theta}),\\
(\mathcal{L}_{h}^{SO(2) \rightarrow \mathbb{L}_2(SE(2))} F)(\mathbf{x}',h') &= F(h^{-1} \odot \mathbf{x}',h^{-1}\cdot h') = F(\mathbf{R}_\theta^{-1}. \mathbf{x}', \mathbf{R}_{\theta'-\theta}).
\end{align*}
Note that the latter two representations in terms of the rotation parameters $\theta,\theta' \in S^1$ represents the periodic shift $\theta' - \theta \operatorname{mod} 2\pi$ along the rotation axis. See Fig.~\ref{fig:reprSO2} for an illustration for the transformation of $SE(2)$ convolution kernels via the representation $\mathcal{L}_h^{SO(2)\rightarrow \mathbb{L}_2(SE(2))}$. The determinant of the Jacobian of the action of $H$ on $\mathbb{R}^d$, see corollary \ref{cor:deth}, is $|\operatorname{det}h| = 1$. Using the above group structure we can write out the explicit forms for the lifting (Eq.~(\ref{eq:liftingcor})) and group correlations (Eq.~(\ref{eq:gcor})) as follows:
\begin{align}
\text{$SE(2)$-{lifting}:}\hspace{8.5mm} & \boxed{\hspace{2mm}(k \tilde{\star} f)(\mathbf{x},\theta) = \int_{\mathbb{R}^2} k(\mathbf{R}_\theta^{-1}.(\mathbf{x}' - \mathbf{x})) f(\mathbf{x}') {\rm d}\mathbf{x}', }\nonumber\\
\text{$SE(2)$-{correlation}:}\hspace{2mm} & \boxed{(K {\star} F)(\mathbf{x},\theta) = \int_{\mathbb{R}^2}\int_{S^1} K(\mathbf{R}_\theta^{-1}.(\mathbf{x}' - \mathbf{x}), \theta' - \theta \operatorname{mod} 2 \pi) \nonumber F(\mathbf{x}',\theta') {\rm d}\mathbf{x}'{\rm d}\theta'. }\nonumber
\end{align}

The logarithmic map on $H$ is given by the matrix logarithm
$$
\operatorname{Log} \mathbf{R}_\theta = 
\left(
\begin{array}{cc}
0 & -\theta \operatorname{mod} 2\pi \\
\theta \operatorname{mod} 2\pi & 0 
\end{array}
\right) \in T_e(SO(2)) = \operatorname{span}\left\{ A_1 \right\}
$$
Which in terms of the Lie algebra basis $\{A_1\}$ with $A_1 = \left(
\begin{array}{cc}
0 & -1 \\
1 & 0 
\end{array}
\right)$ gives a vector with coefficient $a^1 = \theta \operatorname{mod} 2\pi$. The B-spline basis, centered around each $h_i = (\mathbf{R}_{\theta_i})\in H$ with scale $s_h \in \mathbb{R}$, as depicted in Fig.~\ref{fig:BSplinesOnH}, is thus computed via
$$
B^{\mathbb{R},n}\left(\frac{\operatorname{Log} h_i^{-1} \cdot h}{s_h}\right) =
B^{\mathbb{R},n}\left(\frac{\theta - \theta_i \operatorname{mod} 2\pi}{s_h}\right).
$$

\begin{figure}[tb]
\includegraphics[width=\textwidth]{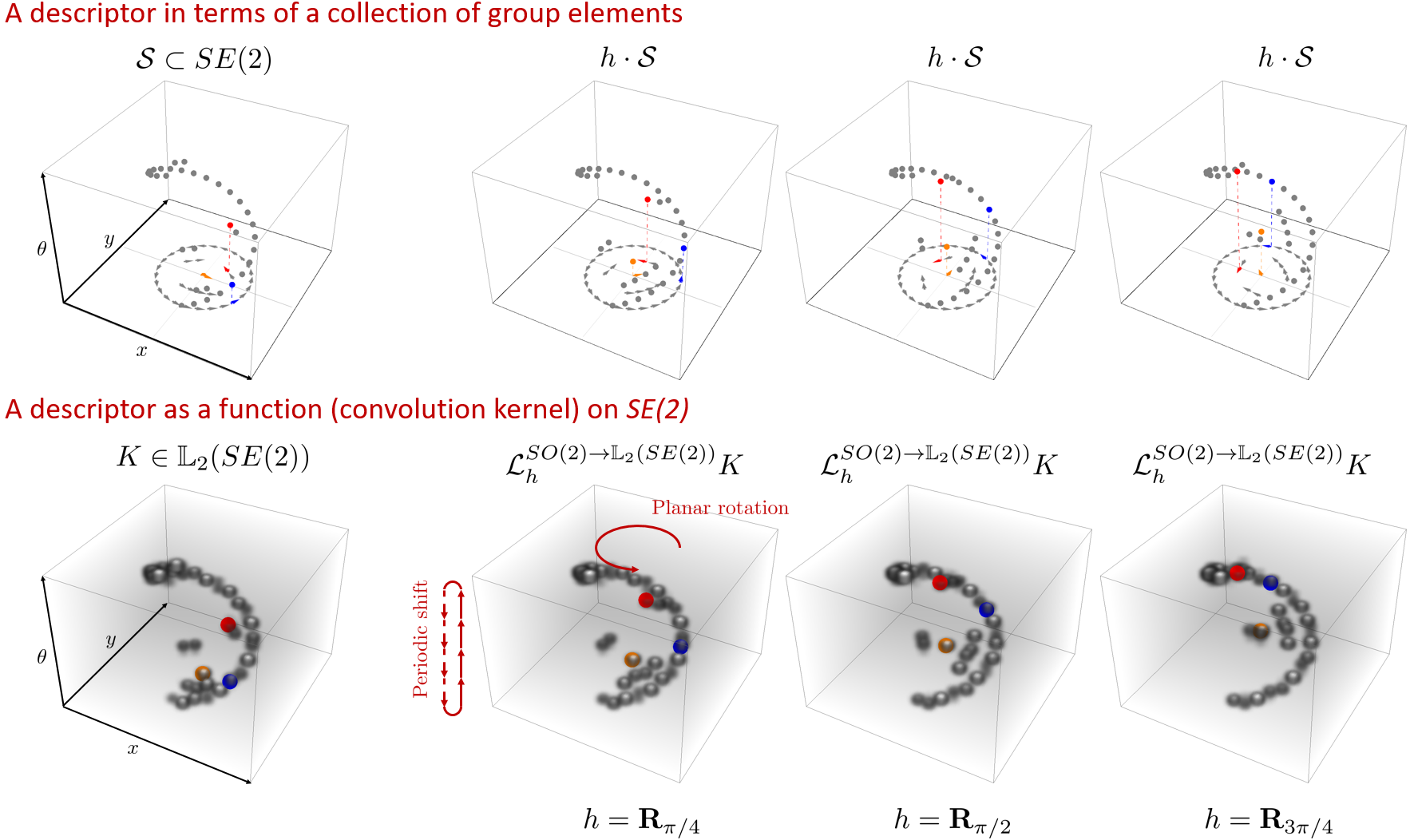}
\caption{\label{fig:reprSO2} \textbf{Top row}: In group theoretical terms we can describe a smiley face via a collection of group elements $\mathcal{S} \subset SE(2)$, which e.g. represents the locations and orientations of low-level features such as edges/lines. Such a collection of points transforms via left multiplication, e.g. $h \cdot \mathcal{S} := \{ (\mathbf{0},h) \cdot g \;|\; g \in \mathcal{S}\}$, with $h \in SO(2)$. \textbf{Bottom row}: In a group convolutional setting we can work with convolution kernels which are functions on $SE(2)$ that assign weights to locations in which locally oriented features are expected. Such kernels transform via the group representation.
}
\end{figure}

\subsection{Scaling group $H=(\mathbb{R}^+,\times)$}
We call the positive real line $\mathbb{R}^+$, together with multiplication, the scaling group. The group product and inverse are given by
\begin{align*}
    h \cdot h' &= (s s')\\
    h^{-1} &= \left(\tfrac{1}{s}\right),
\end{align*}
with $h = (s), h'=(s') \in H$ with $s,s' \in \mathbb{R}^+$. The identity element is $e = (1)$. The action of $H$ on $\mathbb{R}^d$ is given by scalar multiplication
$$
h \odot \mathbf{x} = s \,\mathbf{x},
$$
with $\mathbf{x}\in\mathbb{R}^d$. By combining 2D translations with scaling as a semi-direct product group we obtain the scale-translation group, which we denote with $\mathbb{R}^2 \rtimes \mathbb{R}^+$. The left-regular representation of the scaling group are given by
\begin{align*}
(\mathcal{L}_{h}^{\mathbb{R}^+ \rightarrow \mathbb{L}_2(\mathbb{R}^d)} f)(\mathbf{x}') &= f(h^{-1}\odot \mathbf{x}') = f(\tfrac{1}{s} \mathbf{x}'), \\
(\mathcal{L}_{h}^{\mathbb{R}^+ \rightarrow \mathbb{L}_2(\mathbb{R}^+)} F)(h') &= F(h^{-1}\cdot h') = F(\tfrac{s'}{s}), \\
(\mathcal{L}_{h}^{\mathbb{R}^+ \rightarrow \mathbb{L}_2(\mathbb{R}^2 \times \mathbb{R}^+)} F)(\mathbf{x}', h') &= F(h^{-1} \odot \mathbf{x}', h^{-1}\cdot h') = F(\tfrac{1}{s} \mathbf{x}', \tfrac{s'}{s}).
\end{align*}
The scaling of a $\mathbb{R}^2 \rtimes \mathbb{R}^+$ group convolution kernel is thus achieved by a planar scaling and a logarithmic shift along the scale axis. This is illustrated in Fig.~\ref{fig:reprR2Rp}. The determinant of the Jacobian of this action is $|\operatorname{det} h| = s^d$. Using the above group structure we can write out the explicit forms for the lifting (Eq.~(\ref{eq:liftingcor})) and group correlations (Eq.~(\ref{eq:gcor})) as follows:
\begin{align}
\text{$\mathbb{R}^2\rtimes\mathbb{R}^+$-{lifting}:}\hspace{8.5mm} & \boxed{\hspace{2mm}(k \tilde{\star} f)(\mathbf{x},s) = \int_{\mathbb{R}^2} \tfrac{1}{s^d} k(\tfrac{1}{s'}(\mathbf{x}' - \mathbf{x})) f(\mathbf{x}') {\rm d}\mathbf{x}', }\nonumber\\
\text{$\mathbb{R}^2\rtimes\mathbb{R}^+$-{correlation}:}\hspace{2mm} & \boxed{(K {\star} F)(\mathbf{x},s) = \int_{\mathbb{R}^2}\int_{\mathbb{R}^+} \tfrac{1}{s^d} K(\tfrac{1}{s'}(\mathbf{x}' - \mathbf{x}), \tfrac{s}{s'}) \nonumber F(\mathbf{x}',\theta') {\rm d}\mathbf{x}'{\rm d}s'. }\nonumber
\end{align}

The logarithmic map on $H$ is provided by the natural logarithm as follows
$$
\operatorname{Log} h = \operatorname{ln} s.
$$
The B-spline basis, centered around each $h_i = (s_i)\in H$ with scale $s_i \in \mathbb{R}$, as depicted in Fig.~\ref{fig:BSplinesOnH}, is thus computed via
$$
B^{\mathbb{R},n}\left(\frac{\operatorname{Log} h_i^{-1} \cdot h}{s_h}\right) =
B^{\mathbb{R},n}\left(\frac{\ln {s_i}^{-1} s}{s_h}\right).
$$

\begin{figure}[tb]
\includegraphics[width=\textwidth]{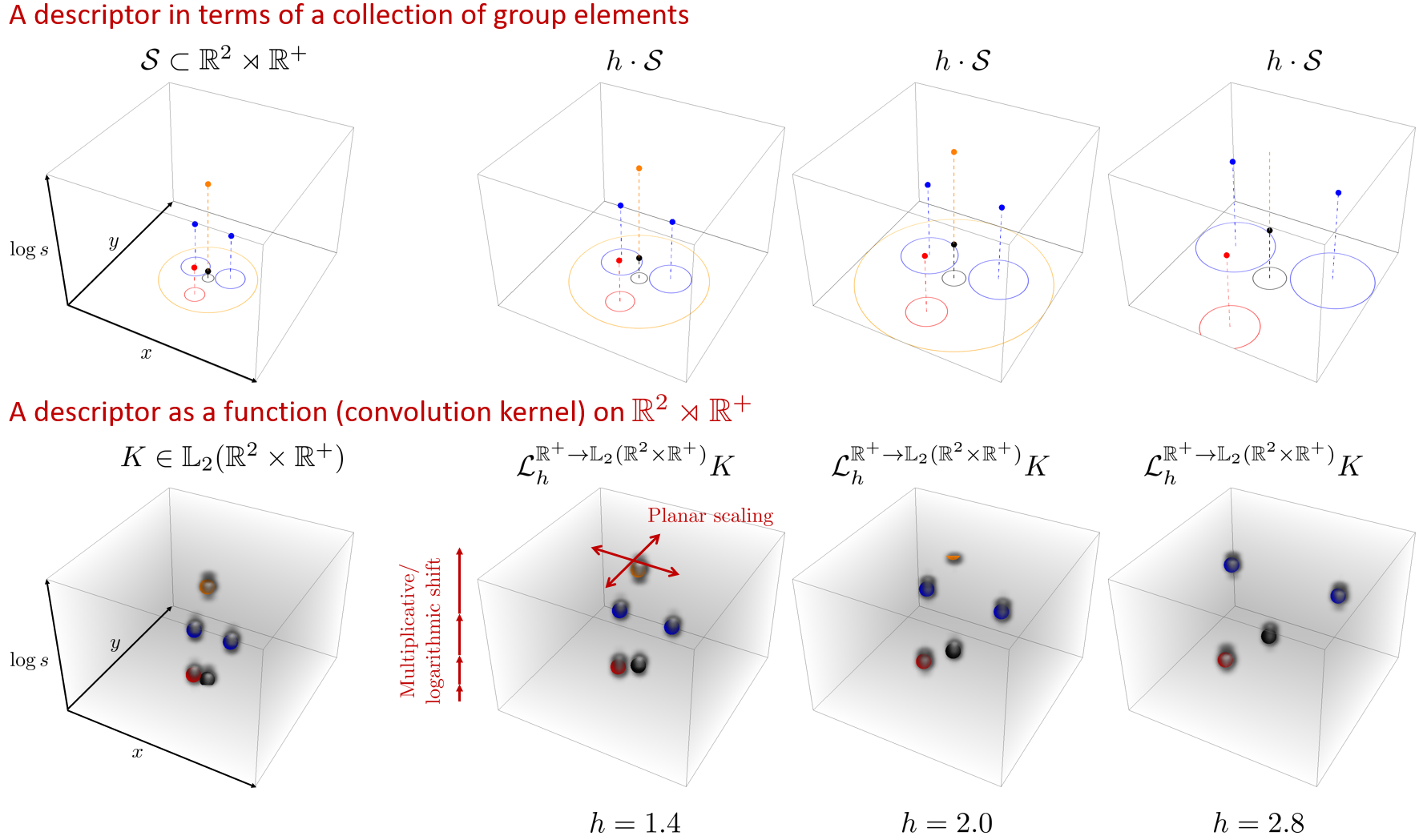}
\caption{\label{fig:reprR2Rp} \textbf{Top row}: In group theoretical terms we can describe a smiley face via a collection of group elements $\mathcal{S} \subset \mathbb{R}^2\rtimes\mathbb{R}^+$, which e.g. represents the locations and scale of low-level features such as blobs/circles. Such a collection of points transforms via left multiplication, e.g. $h \cdot \mathcal{S} := \{ (\mathbf{0},h) \cdot g \;|\; g \in \mathcal{S}\}$, with $h \in \mathbb{R}^+$. \textbf{Bottom row}: In a group convolutional setting we can work with convolution kernels which are functions on $\mathbb{R}^2 \times \mathbb{R}^+$ that assign weights to locations in which locally scaled features are expected. Such kernels transform via the group representation.
}
\end{figure}

\subsection{The 3D rotation group $H=SO(3)$}
\label{app:SO3}
The 3D rotation group is given by space of $3\times3$ orthogonal matrices with determinant 1, with the group product and inverse given by matrix product and matrix inverse:
\begin{align*}
    h \cdot h' &= (\mathbf{R} .\mathbf{R}')\\
    h^{-1} &= \left(\mathbf{R}^{-1}\right).
\end{align*}
The action of $SO(3)$ on $\mathbb{R}^3$ is given by matrix-vector multiplication
$$
h \odot \mathbf{x} = \mathbf{R} .\mathbf{x},
$$
with $\mathbf{x} \in \mathbf{R}^3$. Combining the group structure of 3D translations with rotations in $SO(3)$ as a semi-direct product group gives us the roto-translation group $SE(3) = \mathbb{R}^3 \rtimes SO(3)$, also known as the 3D special Euclidean motion group. The left-regular representation are then
\begin{align*}
(\mathcal{L}_{h}^{SO(3) \rightarrow \mathbb{L}_2(\mathbb{R}^3)} f)(\mathbf{x}') &= f(h^{-1}\odot \mathbf{x}') = f(\mathbf{R}^{-1}. \mathbf{x}'), \\
(\mathcal{L}_{h}^{SO(3) \rightarrow \mathbb{L}_2(SO(3))} F)(h') &= F(h^{-1}\cdot h') = F(\mathbf{R}^{-1}.\mathbf{R}'),\\
(\mathcal{L}_{h}^{SO(3) \rightarrow \mathbb{L}_2(SE(3))} F)(\mathbf{x}',h') &= F(h^{-1} \odot \mathbf{x}',h^{-1}\cdot h') = F(\mathbf{R}^{-1}. \mathbf{x}', \mathbf{R}^{-1}.\mathbf{R}').
\end{align*}
The determinant of the Jacobian of the action of $SO(3)$ on $\mathbb{R}^3$ is $|\operatorname{det}h| = 1$. Using the above group structure we can write out the explicit forms for the lifting (Eq.~(\ref{eq:liftingcor})) and group correlations (Eq.~(\ref{eq:gcor})) as follows:
\begin{align}
\text{$SE(3)$-{lifting}:}\hspace{8.5mm} & \boxed{\hspace{2mm}(k \tilde{\star} f)(\mathbf{x},\mathbf{R}) = \int_{\mathbb{R}^2} k(\mathbf{R}^{-1}.(\mathbf{x}' - \mathbf{x})) f(\mathbf{x}') {\rm d}\mathbf{x}', }\nonumber\\
\text{$SE(3)$-{correlation}:}\hspace{2mm} & \boxed{(K {\star} F)(\mathbf{x},\mathbf{R}) = \int_{\mathbb{R}^2}\int_{SO(3)} K(\mathbf{R}^{-1}.(\mathbf{x}' - \mathbf{x}), \mathbf{R}^{-1}.\mathbf{R}') \nonumber F(\mathbf{x}',\mathbf{R}') {\rm d}\mathbf{x}'{\rm d}\mathbf{R}', }\nonumber
\end{align}
with ${\rm d}\mathbf{R}'$ denoting the Haar measure on $SO(3)$, which depends on the parameterization, see details below.

The logarithmic map from the group $SO(3)$ to the Lie algebra $\mathfrak{so}(3)$ is given by the matrix logarithm and the resulting matrix can be expanded in a basis $\{A_1,A_2,A_3\}$ for the Lie algebra
$$
\operatorname{Log} \mathbf{R} = \sum_{i=1}^3 a^i A_i,
$$
with
$$
A_1 = \left(
\begin{array}{ccc}
0 & 0 & 0 \\
0 & 0 & -1 \\
0 & 1 & 0
\end{array}
\right), \;\;\;\;\;
A_2 = \left(
\begin{array}{ccc}
0 & 0 & 1 \\
0 & 0 & 0 \\
-1 & 0 & 0
\end{array}
\right), \;\;\;\;\;
A_3 = \left(
\begin{array}{ccc}
0 & -1 & 0 \\
1 & 0 & 0 \\
0 & 0 & 0
\end{array}
\right), \;\;\;\;\;
$$
which at the origin represent an infinitisimal rotation around the $x$, $y$, and $z$ axis respectively. A cardinal B-spline centered at some $\mathbf{R}_i \in SO(3)$ with scale $s_h$ can then be computed in terms of these coefficients via
$
B^{\mathbb{R},n}\left(\frac{\operatorname{Log} \mathbf{R}_i^{-1} .\mathbf{R}}{s_h}\right)
$.

In practice it is often convenient to rely on a parameterization of the group and define the group structure in terms of these parameters. A common choice is to do this via ZYZ Euler angles via
$$
\mathbf{R}_{\alpha,\beta,\gamma} = 
\mathbf{R}_{\mathbf{e}_z,\gamma}.
\mathbf{R}_{\mathbf{e}_y,\beta}.
\mathbf{R}_{\mathbf{e}_z,\alpha},
$$
with $\mathbf{R}_{\mathbf{e},\theta}$ a rotation of $\theta$ around a reference axis $\mathbf{e}$, and $\alpha \in [0,2\pi], \beta \in [0,\pi], \gamma \in [0,2\pi]$. A Haar measure in terms of this parameterization is then given by ${\rm d}\mu(R) = \sin \beta {\rm d}\alpha {\rm d}\beta {\rm d}\gamma$. We will use this parameterization in construction of the quotient $SO(3)/SO(2)$ next.

\subsection{The 2-sphere $H = S^2 \equiv SO(3)/SO(2)$}
Here we define the 2-sphere as a group quotient of $SO(3)$ and remark that the same equations as in the $SO(3)$ case (App.~\ref{app:SO3}) are used for the lifting and group correlations. This section describes how to construct convolution kernels on the sphere using the logarithmic map of $SO(3)$, which can then be used to build G-CNNs on $\mathbb{R}^d \rtimes S^2$.

The 2-sphere is defined as $S^2 = \{ \mathbf{x} \in \mathbb{R}^{3} \;|\; \lVert \mathbf{x} \rVert = 1\}$. Any point on the sphere can be obtained by rotating a reference vector $\mathbf{z}=(0,0,1)^T$ with elements of $SO(3)$, i.e., $\forall_{\mathbf{n}\in S^2},\exists_{\mathbf{R}\in SO(3)}:\mathbf{n} = \mathbf{R}.\mathbf{z}$. In other words, the group $SO(3)$ acts transitively on $S^2$. In ZYZ Euler angle parameterization of $SO(3)$ all angles $\alpha$ leave the reference vector $\mathbf{z}$ in place, meaning that for each $\mathbf{n} \in S^2$ we have several $\mathbf{R} \in SO(3)$ that map $\mathbf{z}$ to the same $\mathbf{n}$. As such, we can treat $S^2$ as the quotient $SO(3)/SO(2)$, where $SO(2)$ refers to the sub-group of rotations around the $z$-axis.

In order to define B-splines on the 2-sphere we need a logarithmic map from a point in $S^2$ to the (Euclidean) tangent vector space $T_e(S^2)$ at the origin. We will construct this logarithmic map using the $\operatorname{Log}$ define for $SO(3)$. Let us parameterize the sphere with 
$$
\mathbf{n}(\beta,\gamma) = \mathbf{R}_{\mathbf{e}_z,\gamma}.\mathbf{R}_{\mathbf{e}_y,\beta}.\mathbf{z}.
$$
Any rotation $\mathbf{R}_{\alpha,\beta,\gamma}$ with arbitrary $\alpha$ maps to the same $\mathbf{n}(\beta,\gamma) \in S^2$. As such, there are also many vectors $A = \operatorname{Log} \mathbf{R}_{\alpha,\beta,\gamma}\in T_e(SO(3))$ that map to a suitable rotation matrix via the exponential map $\mathbf{R} = \operatorname{exp} A$. We aim to find the vector in $T_e(SO(3))$ for which $c^3 = 0$, which via the exponential map generate torsion free exponential curves. The $\operatorname{Log}$ of any $\mathbf{R}_{\alpha,\beta,\gamma}$ with $\alpha = -\gamma$ results in such a vector \citep{portegies_new_2015}. As such we define 
$$
\operatorname{Log}_{S^2} \mathbf{n}(\beta,\gamma) := \operatorname{Log}_{SO(3)} \mathbf{R}_{-\gamma,\beta,\gamma},
$$
which maps any point in $S^2$ to a 2-dimensional vector space $T_e(S^2)\subset T_e(SO(3))$. A B-spline on $S^2$ can then be defined via
\begin{equation}
\label{eq:splineS2}
f(\beta,\gamma) = \sum_{i=1}^N c^i B^{\mathbb{R},n}\left(\frac{\operatorname{Log}_{S^2} \mathbf{R}_{0,\beta_i,\gamma_i}^{-1}. \mathbf{R}_{0,\beta,\gamma}.\mathbf{z}}{s_h}\right),
\end{equation}
in which individual splines basis functions are centered around points $\mathbf{n}(\beta_i,\gamma_i)$. 

We remark that the group product $\mathbf{R}_{\alpha_i,\beta_i,\gamma_i}^{-1}. \mathbf{R}_{0,\beta,\gamma}$ generates different rotations when varying $\alpha_i$, that however still map to the same $\mathbf{n}$. The vectors obtained by taking $\operatorname{Log}_{S^2}$ of the rotation matrices rotate with the choice for $\alpha_i$. Since the B-splines are approximately isotropic we neglect this effect and simply set $\alpha_i=0$ in Eq.~(\ref{eq:splineS2}). Finally, we remark that the superposition of shifted splines (as in Eq.~(\ref{eq:splineS2})) is not isotropic by construction, which is desirable when using the spline as a convolution kernel to lift functions to $SO(3)$. When constraining G-CNNs to generate feature maps on $S^2$, the kernels are constrained to be isotropic. Alternatively on could stay on $S^2$ entirely and resort to gauge-equivariant networks \citep{cohen_gauge_2019}, for which the proposed splines are highly suited to move from the discrete setting (as in \citep{cohen_gauge_2019}) to the continuous setting, see also App.~\ref{app:relatedgauge}. For examples of splines on $S^2$ see Figs.~\ref{fig:BSplinesOnH} and \ref{fig:kerneltypes}.

\section{Related Work}
\label{app:related}

\subsection{Deep scale-spacs}
\label{app:deepscalespace}

\subsubsection{Scale space lifting and correlations}

In \citep{worrall_deep_2019} images $f \in \mathbb{L}_2(\mathbb{R}^d)$ are lifted to a space of positions and scale parameters by constructing a scale space via
$$
f^{\uparrow}(\mathbf{x},s) := f_s(\mathbf{x}):= (G_s \star f)(\mathbf{x})
$$
with $G_s(\mathbf{x}) = (4\pi s)^{-d/2} e^{-\tfrac{\lVert \mathbf{x} \rVert^2}{4s}}$.
The kernels and images are sampled on a discrete grid. Let $\Omega \subset \mathbb{Z}^d$ be the support of the kernel. Then the discrete scale space correlation is given by \citep[Eq.~(19)]{worrall_deep_2019}
$$
(K \star_S f^{\uparrow})(\mathbf{x},s) = \sum_{\tilde{\mathbf{x}}\in \Omega} \sum_{\tilde{s} \in H_d} K(\tilde{\mathbf{x}},\tilde{s}) f_{\tilde{s} s}(s \tilde{x} + \mathbf{x}),
$$
with $H_d$ the discretized set of scales, e.g., $H_d = \left\{ 2^{i-1}\right\}_{i=1}^{N_h}$, where we remark that here we use the convention of scaling of a function by $s\ge 1$ instead of using the dilation parameter $a=\tfrac{1}{s}$ in \citep{worrall_deep_2019}. Next we remark that the scale space correlation without any scale interaction ($H_d = \{1\}$) is defined by a 2D correlation kernel via 
$$
(k \star_S f^{\uparrow})(\mathbf{x},s) = \sum_{\tilde{\mathbf{x}}\in \mathbb{Z}^d}  k(\tilde{\mathbf{x}}) f_{s}(s \tilde{\mathbf{x}} + \mathbf{x}),
$$
which can be regarded as a discrete atrous/dilated correlation on each of the scale slices of $f^{\uparrow}(\mathbf{x},s)$ with kernels dilated by a factor $s$.

\subsubsection{Relation to lifting correlations (Eq.~(\ref{eq:liftingcor})) with B-splines}
Let our lifting correlation kernel $k$ be given in a B-spline basis via Eq.~(\ref{eq:BSplineOnRd}) and let $c: \Omega \subset \mathbb{Z}^d \rightarrow \mathbb{R}$ be the map that assigns the weights to each B-spline center $\mathbf{x}_i \in \Omega$ with $\Omega$ the set of spline centers (i.e. $c(\mathbf{x}_i) = c_i$). Let the Gaussian kernel $G_s(\mathbf{x})$ be approximated by a scaled B-spline (up to a factor) and define $B_s^{\mathbb{R}^d,n} := \tfrac{1}{s^d}B^{\mathbb{R}^d,n}( \tfrac{1}{s} \mathbf{x})$. With such an approximation (see also \citep{bouma_fast_2007}) our lifting correlation via Eq.~(\ref{eq:liftingcor}) coincides with the lifting of \citep{worrall_deep_2019} followed by their non-scale interacting scale-space correlation, i.e., 
$$
( k \tilde{\star} f )(\mathbf{x},h) = (c \star_S f^{\uparrow})(\mathbf{x},s),
$$
with $c: \Omega \rightarrow \mathbb{R}$ the spline coefficients. We show this by rewriting
\begin{align*}
( k \tilde{\star} f )(\mathbf{x},h) &= \int_{\mathbb{R}^d} \tfrac{1}{|\operatorname{det} h|}k(h^{-1} \odot( \tilde{\mathbf{x}} - \mathbf{x})) f(\tilde{\mathbf{x}}) {\rm d}\tilde{\mathbf{x}}\\
&= \int_{\mathbb{R}^d} \tfrac{1}{s^d} \sum_{\mathbf{x}_i \in \Omega}  c(\mathbf{x}_i) B^{\mathbb{R}^d,n}(\tfrac{1}{s}(\tilde{\mathbf{x}} - \mathbf{x}) - \mathbf{x}_i) f(\tilde{\mathbf{x}}) {\rm d}\tilde{\mathbf{x}} \\
&=\sum_{\mathbf{x}_i \in \Omega} c(\mathbf{x}_i)  \int_{\mathbb{R}^d} B_s^{\mathbb{R}^d,n}(\tilde{\mathbf{x}} - (\mathbf{x} + s \mathbf{x}_i)) f(\tilde{\mathbf{x}}){\rm d}\tilde{\mathbf{x}} \\
&= \sum_{\mathbf{x}_i\in \Omega} c(\mathbf{x}_i) (B_s^{\mathbb{R}^d,n} \star f)(\mathbf{x} + s \mathbf{x}_i)\\
&\approx \sum_{\mathbf{x}_i \in \Omega} c(\mathbf{x}_i) f_s(\mathbf{x} + s \mathbf{x}_i).
\end{align*}

\subsection{Gauge Equivariant Networks}
\label{app:relatedgauge}

\subsubsection{Gauge equivariant correlation}
The following highlights commonalities between this paper and the work by \citet{cohen_gauge_2019} with respect to use of left-invariant vector fields in equivariant neural networks. Consider some Lie group $G$ with Lie algebra $\mathfrak{g} = T_e(G)$, the exponential map $\operatorname{Exp}:\mathfrak{g} \rightarrow G$, and logarithmic map $\operatorname{Log}:G \rightarrow \mathfrak{g}$.  Consider the group correlation between a kernel and function $K,F:G\rightarrow \mathbb{R}$, given in Eq.~(\ref{eq:gcor}), which for B-spline kernels $K$ with finite support $\Omega := \operatorname{supp}({K}) \subset G$ reads as
\begin{equation}
    ({K} \star F)(g) = \int_{g {\Omega}} {K}(g^{-1} \cdot \tilde{g}) F(\tilde{g}) \mu_G(\tilde{g}),
\end{equation}
with $\mu_G(h)$ the Haar measure on $G$, and where write ${K}$ for the convolution kernel on $G$, and $\tilde{K}$ for the corresponding kernel on $\mathfrak{g}$:
\begin{equation}
    {K}(g) = \accentset{\rightharpoonup}{K}( \operatorname{Log}(g) ),
\end{equation}
Let $\accentset{\rightharpoonup}{\Omega} := \operatorname{supp}(\accentset{\rightharpoonup}{K}) \subset \mathfrak{g}$ be the support of $\accentset{\rightharpoonup}{K}$. Finally let $\accentset{\rightharpoonup}{\Omega}$ be localized such that $\operatorname{Exp}$ is a diffeomorphism (i.e., $\operatorname{Exp}(\accentset{\rightharpoonup}{\Omega}) = {\Omega}$ and $\operatorname{Log}({\Omega}) = \accentset{\rightharpoonup}{\Omega}$).

Now consider the definition of gauge equivariant correlation on manifolds as given by Eq.~(3) of \cite{cohen_gauge_2019} for the case of scalar functions (in which case the trivial representation $\rho(g)=1$ is to be used). In this case integration takes place over the Lie algebra, and gauge equivariant correlation is defined by
\begin{equation}
\label{eq:gaugecor}
    (\accentset{\rightharpoonup}{K}\accentset{\rightharpoonup}{\star}F)(g):=\int_{\Omega} \accentset{\rightharpoonup}{K}( \mathbf{x} ) F( \operatorname{Exp}_g( \mathbf{x} ) ) {\rm d}\mathbf{x},
\end{equation}
with ${\rm d}\mathbf{x}$ the Lebesgue measure on $\mathbb{R}^d$, and where $\operatorname{Exp}_g$ denotes the exponential map from $T_g(G) \rightarrow G$. In our Lie group setting all tangent spaces can be identified with the tangent space at the origin (via the push-forward of left-multiplication) and we are able to write $\operatorname{Exp}_g:= g \cdot \operatorname{Exp} \mathbf{x}$. In the setting of gauge equivariant CNNs as in \citep{cohen_gauge_2019} the exponential maps are generally dependent on $g$, using a separate reference/gauge frame (basis for the tangent space) at each $g$. 

\subsubsection{Relation to G-correlations (Eq.~(\ref{eq:gcor})) with B-splines}
For Lie groups the following identity holds between group correlations with localized B-splines on the one hand, in which integration takes place over the group $G$ and elements are mapped to the algebra via $\operatorname{Log}$, and gauge equivariant correlation on the other hand, in which integration takes place on the tangent spaces and vectors in these tangent spaces are mapped to the manifolds via $\operatorname{Exp}$.
In other words, given the definition of G cross-correlation in (\ref{eq:gcor}), denoted with $\star$, and gauge correlation in (\ref{eq:gaugecor}) or \citep[Eq.~(1)]{cohen_gauge_2019}, denoted with $\accentset{\rightharpoonup}{\star}$, the two operators relate via
\begin{equation}
    (K \star F)(g) = (\accentset{\rightharpoonup}{K} \accentset{\rightharpoonup}{\star} F)(g).
\end{equation}
We show this by deriving
\begin{align*}
    (K \star F)(g) &= \int_{g {\Omega}} {K}(g^{-1} \cdot \tilde{g}) F(\tilde{g}) \rm d\tilde{g}\\
    &\overset{1}{=} \int_{{\Omega}} {K}(\tilde{g}) F(g \cdot \tilde{g}) \rm d\tilde{g}\\
    &\overset{2}{=} \int_{\accentset{\rightharpoonup}{\Omega}} \accentset{\rightharpoonup}{K}(\tilde{\mathbf{x}}) F(g \cdot \operatorname{Exp}(\tilde{\mathbf{x}})) {\rm d}\tilde{\mathbf{x}},\\
    &= \int_{\accentset{\rightharpoonup}{\Omega}} \accentset{\rightharpoonup}{K}(\tilde{\mathbf{x}}) F(\operatorname{Exp}_g(\tilde{\mathbf{x}})) {\rm d}\tilde{\mathbf{x}}\\
    & = (\accentset{\rightharpoonup}{K} \tilde{\star} F)(g).
\end{align*}
In the above ${\rm d}\tilde{g}$ is a Haar measure on $G$. At $\overset{1}{=}$ the substitution $\tilde{g} \rightarrow g \cdot \tilde{g}$ is made and left-invariance of the Haar measure is used (${\rm d}(g\cdot\tilde{g}) = {\rm d}\tilde{g}$). At $\overset{2}{=}$ we switch from integration over the region $\Omega$ in the Lie group to integration over region $\accentset{\rightharpoonup}{\Omega} = \operatorname{Log}(\Omega)$ in the Lie algebra.

\section{G-CNN Architectures}
\label{app:architectures}

This section describes the G-CNN architectures used in the experiments of Sec.~\ref{sec:experiments} using the layers as defined in Sec.~\ref{subsec:gcnns} and illustrated in Fig.~\ref{fig:gccnnoverview}. Two slightly different architectures are in the two different tasks (metastasis classifiation and landmark detection), but both are regular sequential G-CNNs that start with a lifting layer (\ref{eq:liftingcor}), followed by a several group correlation layers (\ref{eq:gcor}), possibly alternated with spatial max-pooling, followed by a projection over $H$ via (\ref{eq:projection}), and end with a $1\times 1$ convolution or fully connected layers. The architectures are summarized in Table.~\ref{tab:GCNNPCAM} and \ref{tab:GCNNCelebA}. Note that the output of the PCam architecture is two probabilities (1 for each class), and the output of the CelebA is five heatmaps (1 for each landmark).

\subsection{PCam} 
The architecture for metastasis classification in the PCam dataset is given in Tab.~\ref{tab:GCNNPCAM}. The input ($64\times64$) is first cropped to $88\times88$ and is then used as input for the first layer (the lifting layer). None of the layers use spatial padding such that the image is eventually cropped to size $1 \times 1$. Each layer is followed by batch normalization\footnote{We apply batch normalization over the domain of the feature maps, so over $X=\mathbb{R}^d$ or over $X=\mathbb{R}^d \times H$, as in \citep{cohen_group_2016}.} and a ReLU activation function, except for the last layer (layer 7) which is followed by adding a bias vector of length 2 and a softmax.

Note that the first five layers, including max pooling over rotations, encode the image into a 64-dimensional rotation invariant feature vector. The final two layers (6 and 7) can be regarded as a classical neural network classifier.

To reduce a possible orientation bias we aim to approximate the support of the kernels with a disk, rather than a rectangle. We do this by only considering splines with basis function centers $\{\mathbf{x}_i \in \mathbb{Z}^d \; | \; \lVert \mathbf{x}_i \rVert \le r$, with radius $r$. For $5 \times 5$ kernels we set $r=\sqrt{5}$ by which we discard the basis functions at the corners of the $5 \times 5$ grid. The grid on $H$ is uniformly sampled with $N_h$ samples, giving the discretized grid $H_d = \{(i-1)*\tfrac{2\pi}{N_h}\}_{i=1}^{N_h}$. The global kernels (both dense and atrous) have their centers also equidistant and globally cover $S^1$, i.e., $h_i \in \{(i-1)*\tfrac{2\pi}{N_k}\}_{i=1}^{N_k}$, with the scales of the dense and atrous kernels respectively given by $s_h = \tfrac{2\pi}{N_k}$ and $s_h = \tfrac{2\pi}{N_h}$. The localized kernels have their centers on the grid with $h_i \in \{ i \tfrac{2\pi}{N_h}\}_{i=-\lfloor N_k/2 \rfloor}^{\lfloor N_k/2 \rfloor}$ and have scale $s_h = \tfrac{2\pi}{N_h}$. 

Finally, we follow the same data-augmentations at train time as proposed in \citep{liu_deep_2015}. These include geometric augmentations such as $90^\circ$ rotations and horizontal flips, as well as color augmentations such as brightness, saturation, hue and contrast variations.

\definecolor{grayish}{rgb}{0.6,0.6,0.6}
\newlength{\cwidth}
\setlength{\cwidth}{1.1cm}
\begin{table}[h]
\caption{PCam $SE(2)$ G-CNN settings and the number of free parameters. Here $N_k$ denotes the number basis functions used on the $H=SO(2)$ part of the group correlation kernels.
}
\label{tab:GCNNPCAM}
\begin{center}
\footnotesize
\begin{tabular}{ l | >{\arraybackslash}p{\cwidth} | >{\arraybackslash}p{\cwidth} | >{\arraybackslash}p{\cwidth} | >{\arraybackslash}p{\cwidth} | >{\arraybackslash}p{\cwidth} | >{\arraybackslash}p{1.18cm} | >{\arraybackslash}p{1.18cm} }
    \toprule
\multicolumn{1}{l}{Basis size:} & \multicolumn{1}{c}{$N_k=1$} & \multicolumn{1}{c}{$N_k=3$} & \multicolumn{1}{c}{$N_k=4$} & \multicolumn{1}{c}{$N_k=5$} & \multicolumn{1}{c}{$N_k=8$} & \multicolumn{1}{c}{$N_k=12$} & \multicolumn{1}{c}{$N_k=16$} \\
\multicolumn{1}{c}{Layer} & \multicolumn{7}{c}{Nr of output feature maps ($\#$ weights)} \\\hline\hline
1: Lifting {\tiny($5\times5$)}& 40 {\tiny(2,520)} &  23 {\tiny(1,449)} &  20 {\tiny(1,260)} &  18 {\tiny(1,134)} &  14 {\tiny(882)} &  11 {\tiny(693)} &  10 {\tiny(630)} \\
\hline
\multicolumn{8}{l}{\color{grayish}\hspace{3em}Spatial max pooling ($2\times2$) by a factor 2}\\
\hline
2: G-corr  {\tiny($5\times5\times N_h$)}& 40 {\tiny(33,600)} &  23 {\tiny(33,327)} &  20 {\tiny(33,600)} &  18 {\tiny(34,020)} &  14 {\tiny(32,928)} &  11 {\tiny(30,492)} &  10 {\tiny(33,600)} \\
\hline
\multicolumn{8}{l}{\color{grayish}\hspace{3em}Spatial max pooling ($2\times2$) by a factor 2}\\
\hline
3: G-corr  {\tiny($5\times5\times N_h$)} & 40 {\tiny(33,600)} &  23 {\tiny(33,327)} &  20 {\tiny(33,600)} &   18 {\tiny(34,020)} &  14 {\tiny(32,928)} &  11 {\tiny(30,492)} &  10 {\tiny(33,600)} \\
\hline
\multicolumn{8}{l}{\color{grayish}\hspace{3em}Spatial max pooling ($3\times3$) by a factor 3}\\
\hline
4: G-corr  {\tiny($5\times5\times N_h$)} & 40 {\tiny(33,600)} &  23 {\tiny(33,327)} &  20 {\tiny(33,600)} &    18 {\tiny(34,020)} &  14 {\tiny(32,928)} &  11 {\tiny(30,492)} &  10 {\tiny(33,600)} \\
5: G-corr  {\tiny($1\times1\times N_h$)} & 64 {\tiny(2,560)} &  64 {\tiny(4,416)} &  64 {\tiny(5,120)} &  64 {\tiny(5,760)} &  64 {\tiny(7,178)} &  64 {\tiny(8,448)} &  64 {\tiny(10,240)} \\
\hline
\multicolumn{8}{l}{\color{grayish}\hspace{3em}Max pooling over $H$ (Projection layer)}\\
\hline
6: 2D-corr  {\tiny($1\times1$)} & 16 {\tiny(1,024)} &  16 {\tiny(1,024)} &  16 {\tiny(1,024)} &  16 {\tiny(1,024)} &  16 {\tiny(1,024)} &  16 {\tiny(1,024)} &  16 {\tiny(1,024)} \\
7: 2D-corr  {\tiny($1\times1$)} & 2 {\tiny(32)} & 2 {\tiny(32)} & 2 {\tiny(32)} & 2 {\tiny(32)} & 2 {\tiny(32)} & 2 {\tiny(32)} & 2 {\tiny(32)}\\
\hline
\multicolumn{8}{l}{\color{grayish}\hspace{3em}Softmax}\\
\hline\hline
    \multicolumn{1}{l}{Total $\#$ weights:} & \multicolumn{1}{c}{106,936} & \multicolumn{1}{l}{106,902} & \multicolumn{1}{l}{108,236} & \multicolumn{1}{l}{110,010} & \multicolumn{1}{l}{107,890} & \multicolumn{1}{l}{101,673} & \multicolumn{1}{l}{112,726}\\
    \bottomrule
\end{tabular}
\end{center}
\end{table}

\subsection{CelebA} 
The architecture for landmark detecion in the CelebA dataset is biven in Tab.~\ref{tab:GCNNCelebA}. The input is formatted according to the details in Sec.~\ref{sec:experiments}. In each layer zero padding is used in order to map the $128\times128$ input images to a $128\times128$ output heatmaps. Each layer is followed by batch normalization and a ReLU activation function, except for the last layer (layer 10) which is followed by adding a bias vector and a logistic sigmoid activation function.

Note that the result of the first 6 layers, including average pooling over scale, assign locally scale-invariant feature vectors to each pixel. The final layers convert these feature maps into heatmaps via regular 2D convolutions. 

Landmarks are localized via the $\operatorname{argmax}$ on each heatmap. The results in Fig.~\ref{fig:results} show the success rate for localizing a landmark correctly. The success rate is computed as the average fraction of successful detections for all five landmarks in all images. A detection is considered successful if the distance to the actual landmark is less then 10 pixels.

The $H$ axis is uniformly sampled (w.r.t. to the metric on $H$) on a fixed scale range, generating the discrete sets $H_d = \{e^{(i-1) s_h}\}_{i=1}^{N_h}$ with $s_h = \frac{1}{2}\ln 2$. The global kernels have their centers on this grid, i.e., $h_i \in H_d$ with the scale parameter the same as that of the grid. The local kernels also have their centers equidistant (with scale $s_h$) to eachother, but are localized and given by $h_i \in \{e^{i s_h}\}_{i=-\lfloor N_k/2 \rfloor}^{\lfloor N_k/2 \rfloor}$.

\setlength{\cwidth}{1.3cm}
\begin{table}[h]
\caption{CelebA scale-translation G-CNN settings and the number of free parameters. Here $N_k$ denotes the number basis functions used on the $H=(\mathbb{R}^+,\times)$ part of the group correlation kernels.
}
\label{tab:GCNNCelebA}
\begin{center}
\footnotesize
\begin{tabular}{ l | >{\arraybackslash}p{\cwidth} | >{\arraybackslash}p{\cwidth} | >{\arraybackslash}p{\cwidth} | >{\arraybackslash}p{\cwidth} | >{\arraybackslash}p{\cwidth} }
    \toprule
\multicolumn{1}{l}{Basis size:} & \multicolumn{1}{c}{$N_k=1$} & \multicolumn{1}{c}{$N_k=2$} & \multicolumn{1}{c}{$N_k=3$} & \multicolumn{1}{c}{$N_k=4$} & \multicolumn{1}{c}{$N_k=5$}\\
\multicolumn{1}{c}{Layer} & \multicolumn{5}{c}{Nr of output feature maps ($\#$ weights)} \\\hline\hline
1: Lifting {\tiny($5\times5$)}& 27 {\tiny(2,025)} &  21 {\tiny(1,575)} &  17 {\tiny(1,275)} &  15 {\tiny(1,125)} &  14 {\tiny(1,050)} \\
2: G-corr  {\tiny($5\times5\times N_h$)}& 27 {\tiny(18,225)} &  21 {\tiny(22,050)} &  17 {\tiny(21,675)} &  15 {\tiny(22,500)} &  14 {\tiny(24,500)} \\
3: G-corr  {\tiny($5\times5\times N_h$)} & 27 {\tiny(18,225)} &  21 {\tiny(22,050)} &  17 {\tiny(21,675)} &   15 {\tiny(22,500)} &  14 {\tiny(24,500)} \\
\hline
\multicolumn{6}{l}{\color{grayish}\hspace{3em}Average pooling over $H$ (Projection layer)}\\
\multicolumn{6}{l}{\color{grayish}\hspace{3em}Spatial max pooling ($2\times2$) by a factor 2}\\
\hline
4: Lifting {\tiny($5\times5$)}& 27 {\tiny(18,225)} &  21 {\tiny(11,025)} &  17 {\tiny(7,225)} &  15 {\tiny(5,625)} &  14 {\tiny(4,900)} \\
5: G-corr  {\tiny($5\times5\times N_h$)}& 27 {\tiny(18,225)} &  21 {\tiny(22,050)} &  17 {\tiny(21,675)} &  15 {\tiny(22,500)} &  14 {\tiny(24,500)} \\
6: G-corr  {\tiny($5\times5\times N_h$)} & 27 {\tiny(18,225)} &  21 {\tiny(22,050)} &  17 {\tiny(21,675)} &   15 {\tiny(22,500)} &  14 {\tiny(24,500)} \\
\hline
\multicolumn{6}{l}{\color{grayish}\hspace{3em}Average pooling over $H$ (Projection layer)}\\
\multicolumn{6}{l}{\color{grayish}\hspace{3em}Up-sampling by a factor 2}\\
\hline
7: 2D-corr  {\tiny($3\times3$)} & 32 {\tiny(15,552)} &  32 {\tiny(12,096)} &  32 {\tiny(9,792)} &  32 {\tiny(8,640)} &  32 {\tiny(8,064)}  \\
8: 2D-corr  {\tiny($3\times3$)} & 32 {\tiny(9,216)} & 32 {\tiny(9,216)} & 32 {\tiny(9,216)} & 32 {\tiny(9,216)} & 32 {\tiny(9,216)}  \\
9: 2D-corr  {\tiny($1\times1$)} & 64 {\tiny(2048)} & 64 {\tiny(2048)} & 64 {\tiny(2048)} & 64 {\tiny(2048)} & 64 {\tiny(2048)} \\
10: 2D-corr  {\tiny($1\times1$)} & 5 {\tiny(320)} & 5 {\tiny(320)} & 5 {\tiny(320)} & 5 {\tiny(320)} & 5 {\tiny(320)}  \\
\hline
\multicolumn{6}{l}{\color{grayish}\hspace{3em}Logistic sigmoid}\\
\hline\hline
    \multicolumn{1}{l}{Total $\#$ weights:} & \multicolumn{1}{l}{120,286} & \multicolumn{1}{l}{124,480} & \multicolumn{1}{l}{116,576} & \multicolumn{1}{l}{116,974} & \multicolumn{1}{l}{123,589} \\
    \bottomrule
\end{tabular}
\end{center}
\end{table}

\clearpage

\end{document}